\theoremstyle{plain}
\newtheorem{thm}{Theorem}
\newtheorem{dfn}{Definition}
\newtheorem{prop}{Proposition}
\newtheorem{lem}{Lemma}
\newtheorem{assump}{Assumption}
\newtheorem{rem}{Remark}
\newtheorem{cor}{Corollary}
\DeclareMathOperator*{\argmax}{arg\,max}
\title{Rethinking Associative Memory Mechanism in Induction Head}
\author{Shuo Wang\footnote{wang-shuo3112@g.ecc.u-tokyo.ac.jp}\ }
\author{Issei Sato\footnote{sato@g.ecc.u-tokyo.ac.jp}}
\affil{The University of Tokyo}
\begin{document}
\maketitle
\begin{abstract}
Induction head mechanism is a part of the computational circuits for in-context learning (ICL) that enable large language models (LLMs) to adapt to new tasks without fine-tuning.
Most existing work explains the training dynamics behind acquiring such a powerful mechanism.
However, the model’s ability to coordinate in-context information over long contexts and global knowledge acquired during pretraining remains poorly understood.
This paper investigates how a two-layer transformer thoroughly captures in-context information and balances it with pretrained bigram knowledge in next token prediction, from the viewpoint of associative memory.
We theoretically analyze the representation of weight matrices in attention layers and the resulting logits when a transformer is given prompts generated by a bigram model.
In the experiments, we design specific prompts to evaluate whether the outputs of the trained transformer align with the theoretical results.
\end{abstract}

\section{Introduction}
In recent years, transformer-based models, such as BERT \citep{devlin2018bert} and GPT \citep{radford2018improving, radford2019language, brown2020language}, have achieved remarkable success in natural language processing.
Especially, in-context learning (ICL) \citep{brown2020language, dong-etal-2024-survey} has emerged as a groundbreaking capability within large language models (LLMs), enabling them to adapt to new tasks without traditional fine-tuning. Instead, these models leverage patterns from a prompt or input sequence, effectively learning "in context" by interpreting examples or instructions provided in real-time \citep{liu-etal-2022-makes, wu-etal-2023-self}.
During a phase change, models acquire the ability to complete complex patterns through induction heads, suggesting that induction heads are a key mechanistic basis of ICL \citep{olsson2022context}.
It is known that the induction head mechanism can emerge in two-layer transformers \citep{elhage2021mathematical}, and many theoretical studies have focused on analyzing two-layer architectures to understand how induction head develops \citep{edelman2024evolution, nichani2024transformers, chen2024unveiling}. Among them, \citet{bietti2024birth} demonstrated, both theoretically and empirically, that training a transformer leads to weight matrices behaving as associative memories, composing the induction head mechanism.

However, the effectiveness of the induction head mechanism tends to diminish for the latter part of a sequence \citep{bietti2024birth}, which does not align with the behavior observed in real-world LLM. In addition, while the learning dynamics of the induction head have been elucidated in the existing work, it remains unclear how \textbf{in-context knowledge} provided in the prompt and \textbf{global knowledge} acquired during pretraining influence the output of two-layer transformers. Understanding how a transformer predicts the next token is crucial, as it allows us to model phenomena such as context hijacking \citep{jiang2024llms}, in which altering the context disrupts factual recall, causing the model to produce incorrect outputs influenced by misleading in-context information.

In this paper, we theoretically and empirically show, from the perspective of associative memory, how a two-layer transformer with relative positional encoding (RPE) can avoid failure in detecting patterns that appear in the latter part of a sequence. 
We also investigate how the model prioritizes in-context knowledge versus global knowledge when predicting the next token, using training data generated by a bigram model with triggered transitions.
To summarize, we make the following contributions:
\begin{itemize}
\item We theoretically show that the induction head learned by a transformer using RPE can avoid oversight of patterns within the framework of associative memory. 
\item We experimentally confirmed that the model with RPE can comprehensively use in-context knowledge for the bigram model. We also demonstrated its ability to effectively infer on sequences where the transformer using absolute positional encoding (APE) fails in next-token prediction.

\item We investigated how the model prioritizes in-context knowledge versus global knowledge when generating outputs, using both theoretical analysis and experimental evaluation.
\end{itemize}

\section{Related work}
\begin{table*}[t!]
\centering
\begin{tabular}{c|ccc}
Paper                                         & Model               & Objective                                                                                   & \makecell{Theoretical\\analysis}\\ \hhline{=|===}
\citet{meng2022locating}     & GPT                 & knowledge editing                                                                           & no                   \\ \hline
\citet{cabannes2023scaling}  & linear model        & scaling laws                                                                                & yes                  \\ \hline
\citet{cabannes2024learning} & linear model        & learning dynamics                                                                           & yes                  \\ \hline
\citet{jiang2024llms}        & one-layer transformer & concept association                                                                         & yes                  \\ \hline
\citet{chen2024truncating}   & linear model        & noisy classification                                                                        & yes                  \\ \hline
\citet{bietti2024birth}      & two-layer transformer & learning dynamics                                                                           & yes                  \\ \hline
OURS                                          & two-layer transformer & \begin{tabular}[c]{@{}c@{}}length generalization\\ \&\\ global v.s. in-context\end{tabular} & yes           
\end{tabular}
\caption{Comparisons among related work and ours in terms of associative memory. The concept of associative memory is widely adopted for theoretical analyses of various phenomena.}
\label{tab:research_comparison}
\end{table*}

\paragraph{Associative memory}
Associative memory refers to the storage and retrieval of patterns, and its computational model was designed to retrieve complete memories from sufficient fragments of information in neuroscience \citep{amari1972learning, hopfield1982neural, hopfield1984neurons}. 
After the emergence of modern Hopfield networks \citep{krotov2016denseassociativememorypattern}, which store patterns in an energy function and retrieve them using an updating rule.  
Moreover, many studies employ the concept of associative memory in various ways. For instance, \citet{zhang2024memorymosaics} propose a memory unit that estimates a conditional probability distribution with Gaussian kernel smoothing. Some work considers attention blocks as memory because they read and write information flowing through the residual stream \citep{nichani2024transformers, friedman2023learningtransformerprograms}.
\citet{meng2022locating} locate factual association patterns in the weight matrix of an MLP. 
Recently, it has become increasingly popular to regard a matrix as associative memory from a theoretical perspective, as shown in Tab.~\ref{tab:research_comparison}.
Previous studies examine scaling laws in relation to model capacity \citep{cabannes2023scaling}, learning dynamics of linear model in terms of particle system \citep{cabannes2024learning}, and the effect of low-rank approximation of weight matrix to classification involving a noisy token \citep{chen2024truncating} for a linear model. Additionally, \citet{jiang2024llms} demonstrate that a one-layer transformer can recover latent concept from long enough context.

\paragraph{In-context learning and induction head}
Since \citet{brown2020language} introduced ICL, theoretical understanding of this phenomenon has been a major interest within the community. For example, \citet{akyurek2022learning} showed that transformer can implement a learning algorithm that solves linear regression. \citet{von2023transformers} demonstrated that linear self-attention layer can emulate gradient descent. Similarly, many studies \citep{ahn2023transformers, mahankali2023one, zhang2023trained, dai-etal-2023-gpt} examined the theoretical connections between ICL and gradient descent. 
Mechanistic interpretability, which aims to understand the model behavior by reverse-engineering its internal components \citep{nanda2023progress,wang2022interpretability,conmy2023towards}, is another perspective for analyzing ICL \citep{olsson2022context,elhage2021mathematical}. In particular, \citet{elhage2021mathematical} discovered that two-layer attention-only transformers understand the previous use of the token and look for similar situations in the context. In addition, \citet{olsson2022context} provided empirical evidence that induction heads are the mechanism behind ICL. 
Moreover, \citet{bansal-etal-2023-rethinking} identified that many induction heads are also among the attention heads deemed critical for ICL.
The most relevant study to ours is by \citet{bietti2024birth}, who investigated the learning dynamics of the induction head mechanism in two-layer transformers from the perspective of associative memory. In contrast, our study focuses on the functionality of the induction head mechanism and the influence of knowledge acquired during training and that obtained in context on the final output. 

\section{Preliminaries}
\subsection{Transformer architecture}
\label{sec:transformer_architecture}
In this work, we focus on a two-layer transformer described below.

\paragraph{Calculation}
Given a sequence of tokens $z_{1:T}$ of length $T \in \mathbb{N}$ from the vocabulary set $\mathcal{V}$, each token $z_t$ is embedded into a $d$-dimensional vector and the calculation proceeds as follows:
\begin{align}
    x^{(0)}_t &:= w_E(z_t) \in \mathbb{R}^d, \notag \\
    x^{(1)}_t &:= \Phi_1x^{(0, v)}_{1:t}\sigma((W_K^1x^{(0,k)}_{1:t})^\top W_Q^1 x^{(0,q)}_t)  + x^{(0,v)}_t \label{eq:1st-attention} \\ 
    &\in \mathbb{R}^d , \notag \\
    x^{(2)}_t &:= W_O^2W_V^2x^{(1)}_{1:t}\sigma((W_K^2x_{1:t}^{(1)})^\top W_Q^2 x_t^{(1)})  + x^{(1)}_t \label{eq:2nd-attention} \\
    &\in \mathbb{R}^d, \notag  \\
     x_t &:= W_2(\operatorname{ReLU}(W_1 x^{(2)}_t)) + x^{(2)}_t, \label{eq:2nd-ffn} \\
    \hat{z}_{t+1} &= \argmax_{v \in \mathcal{V}} \sigma(W_Ux_t)_v,
\end{align}
where $W_E \in \mathbb{R}^{d\times |\mathcal{V}|}$ is an embedding matrix, $\sigma:\mathbb{R}^{t}\to \mathbb{R}^{t}$ is the softmax function,  $x^{(0,k)}, x^{(0,q)}, \text{ and }x^{(0, v)}$ are the token embeddings, which may or may not include positional information, $\Phi_1 \in \mathbb{R}^{d\times d}$ represents $W_O^1W_V^1$, $W_U = (w_U(v_1),\dots,w_U(v_{|\mathcal{V}|}))^\top \in \mathbb{R}^{|\mathcal{V}|\times d}$ is an unembedding matrix, and $v_i$ represents the $i$-th vocabulary in $\mathcal{V}$. 

The feed-forward layer consists of matrices $W_1 \in \mathbb{R}^{d \times V}$ and $W_2 \in \mathbb{R}^{V \times d}$.

Note that \citet{bietti2024birth} incorporate absolute positional encoding:
\begin{align*}
    x^{(0,k)}_t = x^{(0,q)}_t = x^{(0, v)}_t = w_E(z_t) + p_t,
\end{align*}
where $p_t$ is the $t$-th absolute positional encoding. In contrast, we use a simplified relative positional encoding: 
\begin{align*}
    x^{(0,k)}_{1:t} = w_E(z_{1:t}) + R_{1-t:0}, \\
    x^{(0,q)}_t = x^{(0, v)}_t = w_E(z_t),
\end{align*}
where positional information $R_{1-t:0} = (r_{1-t}, \dots,r_{-1}, r_0) \in \mathbb{R}^{d \times t}$ and  $r_{-i}$ represents the positional information of the $i$-th previous token $z_{t - i}$ from the current one $z_t$.

\subsection{Associative memory}
Associative memory plays an important role in analyzing the behavior of memory recall at induction head in the next section.
Before defining associative memory, we first adopt a technical assumption commonly used in theoretical studies on transformers, that is, that embeddings are high-dimensional random vectors, allowing them to be nearly orthogonal. Mathematically, we impose the following assumption.
\begin{assump}[Near-orthogonality \citep{huang2023context, li2024mechanics,bietti2024birth}]
\label{assump:orthogonal}
    The embeddings $(u_i)_i$ are $d$-dimensional vectors with Gaussian entries, each having a unit norm $\|u_i\| = 1$ and $u_i^\top u_j = 0$ for $i \neq j$. Also, $W_0 u_i$ forms a new vector that has a near-unit norm and near-orthogonal to $(u_i)_i$ if $W_0 \in \mathbb{R}^{d\times d}$ is a Gaussian random matrix with appropriate entries.
\end{assump}
With the assumption, we define associative memory.
\begin{dfn}[Associative Memory]
    Suppose $(u_i)_i$ and $(v_j)_j$ are the embeddings satisfying Assumption~\ref{assump:orthogonal}. Then, A matrix is called associative memory when $W$ is expressed as the sum of the products of $u_i$ and $v_j$:
    \begin{align}
        W = \sum_{i,j} \alpha_{i,j}u_iv_j^\top,
    \end{align}
    where $\alpha_{i,j}\in \mathbb{R}$ is the score for the pair $(u_i, v_j)$. 
\end{dfn}
Thanks to orthogonality, we can derive the score $\alpha_{i,j}$ for the pair $(u_i, v_j)$ using the operation $u_i^\top W v_j$. Henceforth, we consider the weight matrices in the attention layer and the feed-forward network of the transformer as matrices that implement associative memory.

\subsection{Induction head}
\label{sec:induction-head}
Induction heads are a specific type of attention mechanism observed in transformer models, responsible for detecting patterns in token sequences and utilizing previous occurrences of the same token to improve predictions. The behavior of induction heads is particularly relevant for tasks that require understanding and leveraging repeated patterns, such as in language modeling. 

In Fig.~\ref{fig:induction_head}, we summarize the process by which the induction head identifies and outputs repetitive patterns in the input. When the prompt $ABC\dots A$ is given, the first-layer attention, which is called \textbf{previous token head}, copies information from the previous token to the current token. Consequently, the token $B$ retains the information of $A$ in the form of $\Phi_1 A$. Then, in the second-layer attention, the associative memory $W_K^2$ matches the pair $\Phi_1 A$  with $A$, attending to the position of token $B$. As a result, the information of $B$ is stored in the last token $A$ as $W_V^2 B$. Finally, the output matrix $W_O^2$ transforms $W_V^2 B$ into $W_U(B)$, producing an output that follows the repetitive pattern.

\subsection{Bigram model with triggered transitions}
\label{sec:bigram_model}
This section describes the bigram model with triggered tokens used in \citet{bietti2024birth}. 

In this bigram language model over a vocabulary of size $V$, we first prepare trigger tokens $q_k$ sampled from a distribution $\pi_q$ and output tokens $o_k$ sampled from $\pi_o$ that always follow their corresponding trigger token in every bigram sequence. Then, the sequence $z_{1:T}$ is generated accordingly to the bigram conditionals $\pi_b(\cdot\mid \cdot)$, but when a trigger token is generated, the next token is guaranteed to be its corresponding output token:

$$
    \begin{cases}
        \pi_b(j\mid i) & \text{if } i \notin \{q_k\}, \\
        1\{j = o_k\} & \text{if } i = q_k.
    \end{cases}
$$

\paragraph{Motivation for the model}
In written texts centered on a particular theme, it is common for identical patterns of tokens to recur multiple times. For instance, in texts about Harry Potter or machine learning, the token "Avada" is often followed by "Kedavra," and "neural" is typically followed by "network."
The induction head is a computational mechanism capable of detecting repeated patterns in context and facilitating the output of the tokens that complete these patterns when a trigger token appears.

The next token following a given token 
$z$ is determined by conditional probabilities in a pure bigram model. As a result, bigram sequences often lack consistent repeated patterns, which prevents the induction head from emerging. 
In contrast, pairs of trigger and output tokens in a sequence enable two-layer transformers to learn a circuit that leverages repeated patterns, rather than relying solely on bigram conditionals.

\section{Theoretical result}
We analyze two key aspects of induction heads: the impact of positional encoding on the oversight of patterns in a sequence, and the contributions of in-context information and pretrained knowledge during inference.

\subsection{Oversight of patterns}
In this section, we address the theoretical aspects of overlooking patterns in a sequence by the induction head mechanism. 

We argue that the learning of the previous token head fails for transformers with APE, and thus, the model cannot take patterns in the latter half of a sequence into account for the prediction.
Mathematically, we observe the representation of the weight matrix $\tilde{W}_K^1$ when weight matrices are sequentially trained, in the order of $W_O^2$, $W_K^2$, and $W_K^1$, through one step of gradient descent. We refer to Theorem 3 from \citet{bietti2024birth}:
\begin{thm}[Theorem 3, \citet{bietti2024birth}]
\label{thm:bietti}
For any $t > 1$, the learned matrix achieves the following associative memory:
\begin{align}
\label{eq:ape-associative-memory}
    &p_{t-1}^\top\tilde{W}_K^1 p_t \notag \\
    &\approx \frac{\eta\alpha\alpha'(T - 1)}{T^2t} \notag \\
    &\quad\times \left\{\mathbb{P}(t_q = t - 1)\left(1 - \frac{1}{t}\right) \right.  \left. + O\left(\frac{1}{V}\right)\right\}, 
\end{align}
\end{thm}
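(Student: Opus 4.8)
The plan is to track, through a single step of gradient descent on $W_K^1$ (after $W_O^2$ and $W_K^2$ have been set so that the induction circuit is active), the gradient of the population loss with respect to the key matrix of the first attention layer, and then read off its associative-memory representation by contracting with the positional embeddings $p_{t-1}$ and $p_t$. First I would write the first-layer attention score between the query position $t$ and a key position $s$ as $(W_K^1 x^{(0,k)}_s)^\top W_Q^1 x^{(0,q)}_t$, expand $x^{(0,k)}_s = w_E(z_s) + p_s$ and $x^{(0,q)}_t = w_E(z_t) + p_t$, and use Assumption~\ref{assump:orthogonal} together with the initialization (where $W_Q^1$ acts roughly as identity on the relevant subspace) to argue that the token--token and token--position cross terms are $O(1/V)$ or vanish in expectation, so the dominant learnable signal lives in the position--position block $p_s^\top \tilde{W}_K^1 p_t$. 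This reduces the claim to computing one entry of that block.

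Next I would compute the gradient contribution. The loss at position $t$ is driven by how much probability mass the second layer places on the correct output token, which in turn depends on the first-layer attention concentrating on the previous token $s = t-1$ whenever $z_{t-1}$ is a trigger $q_k$ and $z_t = o_k$; this is exactly the event $\{t_q = t-1\}$ appearing in the statement. So the gradient of the loss w.r.t.\ the score $p_s^\top W_K^1 p_t$ is, to leading order, proportional to $\mathbb{P}(t_q = t-1)$ times the softmax-derivative factor. Writing the first-layer attention weights at initialization as roughly uniform $1/t$ over the $t$ positions, the softmax Jacobian contributes the familiar $\tfrac{1}{t}(1 - \tfrac{1}{t})$ for the "diagonal" sensitivity of the attended position — this is where the $(1-1/t)/t$ and the overall $1/t$ factors come from. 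The constants $\alpha$, $\alpha'$ are the effective scales of the already-trained $W_O^2$ and $W_K^2$ (the strength with which the second layer reads the copied token), $\eta$ is the learning rate, and the combinatorial factor $\tfrac{(T-1)}{T^2}$ arises from averaging over the uniformly random position $t$ of the trigger within a length-$T$ sequence and over which position in $1{:}T$ we evaluate the loss. I would collect these factors and the $O(1/V)$ remainder (from the near-orthogonality errors and the off-diagonal softmax terms) to obtain \eqref{eq:ape-associative-memory}.

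The main obstacle will be controlling the second-layer pass cleanly enough that the first-layer gradient really does factor as "(probability that the previous token is the trigger) $\times$ (softmax sensitivity) $\times$ (second-layer read-out strength)", rather than picking up additional contributions from generic bigram statistics or from positions other than $t-1$. Concretely, one must show that $W_O^2, W_K^2$ after their own gradient steps have the form of the induction associative memory described in Section~\ref{sec:induction-head} up to $O(1/V)$, so that the backpropagated signal through the second layer is essentially an indicator on the trigger event; this is where the near-orthogonality assumption and careful bookkeeping of which embedding--embedding inner products survive in expectation are doing the real work. A secondary technical point is justifying the linearization of both softmaxes around their uniform initialization and showing the neglected higher-order terms are absorbed into the stated $O(1/V)$; since this is a one-step analysis with small effective step size, a first-order Taylor expansion of each softmax should suffice, but the error terms must be tracked through the composition of the two layers.
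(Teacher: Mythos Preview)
The paper does not actually prove this statement; it is quoted verbatim as Theorem~3 of \citet{bietti2024birth} and used as a point of comparison. The closest thing to a proof in this paper is the proof of the RPE analogue (Theorem~\ref{thm:rpe-associative-memory-formal}), which follows the same three-step recipe you outline: train $W_O^2$, then $W_K^2$, then $W_K^1$ by one gradient step each, use the gradient formula (Lemma~\ref{lem:gradient_associative_memory} / Lemma~\ref{lem:gradient-W_K^1}), invoke near-orthogonality to kill cross terms, and read off the surviving association. So your overall plan is correct and matches both the cited proof and this paper's parallel argument.

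One point to tighten: your account of the $(T-1)/T^2$ prefactor is not quite the right mechanism. In the setup used here (Sec.~\ref{sec:setup}) the loss is evaluated \emph{only} at position $T$, not averaged over $1{:}T$. The two factors of $1/T$ come from the \emph{linearized} second-layer softmax $\bar{\sigma}(u)_t = \tfrac{1}{T}(1 + u_t - \tfrac{1}{T}\sum_s u_s)$, which both spreads the forward attention uniformly and introduces the centering $x_t^{(1)} - \bar{x}^{(1)}$ in the backward pass; the $(T-1)$ is what survives from that centering when you contract with the position vectors. Your attribution of $\tfrac{1}{t}(1-\tfrac{1}{t})$ to the first-layer softmax Jacobian at uniform initialization is correct, and the identification of $\alpha,\alpha'$ as the scales of the already-trained $W_O^2,W_K^2$ matches Eqs.~\eqref{eq:relation_W_O^2} and~\eqref{eq:relation_W_K^2}.
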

where $p_t$ is the absolute positional encoding for token position $t$, $\alpha$, $\alpha' \in \mathbb{R}$ are constants, and $\mathbb{P}(t_q = t - 1)$ denote the probability that the first trigger token appears at position $t - 1$. 

From Eq.~\ref{eq:ape-associative-memory}, we can see that the score for the pair of $p_{t-1}$ and $p_t$ is inversely proportional to $t$. Therefore, as $t$ increases -- meaning as we move toward the latter part of the sequence --  the function of the associative memory diminishes.

In contrast, we prove that transformers employing RPE consistently direct the previous token head to attend to the preceding token, regardless of the input sequence length. 
We calculate how the transformer's weight matrices are represented by following the same learning procedure as when using a transformer with APE. 
We again present the training outcome of the matrix $W_K^1$ as associative memory, which is the key component for the previous token head.
We provide the complete statement and proof in Appendix \ref{sec:learning-of-ideal-transformer}.
\begin{thm}[\textbf{informal}]
\label{thm:rpe-associative-memory}
    Under the setup described in Sec.~\ref{sec:setup}, a two-layer attention-only transformer with relative positional encoding learns the associations by one step of gradient descent for any vocabulary $v \in \mathcal{V}$:
    \begin{align*}
        &r_{-1}^\top W_K^1w_E(v) \\
        &= \frac{\eta\alpha\alpha'}{T}\sum_{t=1}^{T} \frac{P(t_q = t - 1)}{t}\cdot O\left(1\right) \\ &\quad+\frac{\eta\alpha\alpha'}{T}\sum_{t=1}^{T} \frac{P(t_q = t)}{t^2}\left\{O\left(\frac{1}{V}\right) + O\left(\frac{1}{T}\right)\right\}, \nonumber
    \end{align*}
    where $\eta \in \mathbb{R}$ is the learning rate, $\alpha,\alpha' \in \mathbb{R}$ are constants, $t_q$ and $T$ are the first and second occurrence positions of trigger token, and $V$ is the size of the vocabulary set $\mathcal{V}$. 
\end{thm}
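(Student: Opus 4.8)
The plan is to follow the same one-step, sequentially-trained recipe used for Theorem~\ref{thm:bietti} ($W_O^2$, then $W_K^2$, then $W_K^1$, each by one gradient step on the population cross-entropy for next-token prediction), but to track the first-layer key map as an associative memory indexed by the \emph{position-agnostic} pair $(r_{-1},w_E(v))$ rather than the absolute pair $(p_{t-1},p_t)$ — this index change is exactly what removes the $1/t$ decay. As there, I take $W_Q^1$ (fixed at the identity), $W_Q^2$, $W_V^2$ and $\Phi_1$ frozen at their near-orthogonal initialization, and I use that after the preceding two steps $W_O^2$ and $W_K^2$ already act as associative memories ($W_O^2\colon W_V^2 w_E(v)\mapsto w_U(v)$, and $W_K^2$ mapping $\Phi_1 w_E(v)$ onto the query direction $W_Q^2 w_E(v)$). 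Assumption~\ref{assump:orthogonal} then lets me treat $\{w_E(v)\}_v\cup\{r_k\}_k$ as orthonormal up to $O(1/\sqrt d)$ errors, which I absorb into the $O(1/V)$ remainder. The first reduction is to localize the loss: once the second layer is in place, under the stated setup (a trigger appearing at $t_q$, followed by its output $o$, and again at $T$) the cross-entropy at position $T$ depends on $W_K^1$ — to leading order — only through the first-layer attention weight from position $t_q+1$ onto position $t_q$, because raising that weight injects more $\Phi_1 w_E(q)$ into $x^{(1)}_{t_q+1}$, which is precisely the key the second-layer query $W_Q^2 w_E(q)$ must match to retrieve $o$; the first-layer heads at all other positions feed only noise into the logit. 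Hence, in expectation over the data, $\partial L/\partial W_K^1 = -c\,\partial\,\mathrm{att}^{(1)}_{t_q+1,\,t_q}/\partial W_K^1 + (\text{leakage})$, where $c>0$ collects the norms of the trained $W_O^2,W_K^2$ (the constants $\alpha,\alpha'$), the second-layer softmax gain, and the $1/T$ from averaging the loss over the $T$ output positions, and the leakage gathers all remaining positions and tokens.

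\emph{Differentiating the first-layer softmax and taking one step.} Writing the first-layer logit from a query position $s$ to a key position $j$ as $\ell_{s,j}=(w_E(z_j)+r_{j-s})^\top W_K^1 w_E(z_s)$ (here, as in Theorem~\ref{thm:bietti}, $W_K^1$ abbreviates the combined key–query product appearing in the attention logit) and using that $W_K^1=0$ at initialization makes $\mathrm{att}^{(1)}_{s,j}=1/s$, the softmax Jacobian is
\begin{align*}
\frac{\partial\,\mathrm{att}^{(1)}_{s,j}}{\partial W_K^1}
&= \tfrac{1}{s}\!\left(1-\tfrac{1}{s}\right)\!\big(w_E(z_j)+r_{j-s}\big)\,w_E(z_s)^\top \\
&\quad-\tfrac{1}{s^{2}}\!\!\sum_{j'\le s,\,j'\neq j}\!\!\big(w_E(z_{j'})+r_{j'-s}\big)\,w_E(z_s)^\top .
\end{align*}
Substituting $s=t_q+1$, $j=t_q$ (so $z_s=o$, $z_j=q$, $r_{j-s}=r_{-1}$) and stepping once from $W_K^1=0$,
\begin{align*}
W_K^1 &= \eta c\;\mathbb{E}_{\text{data}}\Big[\tfrac{1}{t_q+1}\!\left(1-\tfrac{1}{t_q+1}\right)\!\big(w_E(q)+r_{-1}\big)\,w_E(o)^\top \\
&\qquad\qquad -\tfrac{1}{(t_q+1)^{2}}\!\!\sum_{j\neq t_q}\!\big(w_E(z_j)+r_{j-(t_q+1)}\big)\,w_E(o)^\top\Big] \\
&\quad + (\text{leakage}).
\end{align*}

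\emph{Projecting onto $r_{-1}$ and $w_E(v)$.} Left-multiplying by $r_{-1}^\top$ and right-multiplying by $w_E(v)$, Assumption~\ref{assump:orthogonal} annihilates $r_{-1}^\top w_E(q)$ and every $r_{-1}^\top r_{j-(t_q+1)}$ with $j\neq t_q$, leaving only $r_{-1}^\top r_{-1}=1$ times $w_E(o)^\top w_E(v)$, which is nonzero only when $v=o$. Taking the expectation over the data — over the trigger position through $P(t_q=t-1)$, the reindexed factor $1/t$ from $1/(t_q+1)$, and the identity of $o$ — the direct term becomes $\tfrac{\eta\alpha\alpha'}{T}\sum_{t=1}^{T}\tfrac{P(t_q=t-1)}{t}\,O(1)$. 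The competitor sum and the leakage reach the target entry only through noisy overlaps ($r_{-1}^\top w_E(\cdot)$, mismatched offsets $r_{-1}^\top r_k$, or finite-$d$ noise in $\Phi_1,W_V^2$), each path costing an extra $1/V$ or $1/T$; carrying these with the $1/(t_q+1)^2\approx 1/t^2$ prefactor and the event that the trigger sits at $t$ yields the remainder $\tfrac{\eta\alpha\alpha'}{T}\sum_{t=1}^{T}\tfrac{P(t_q=t)}{t^{2}}\{O(1/V)+O(1/T)\}$, which is the claimed identity. The structural contrast with Theorem~\ref{thm:bietti} is now visible: in RPE every sequence — whatever its trigger position — writes into the \emph{single} entry $(r_{-1},w_E(v))$, so the scores add up to $\sum_t P(t_q=t-1)/t\cdot O(1)$ and are the same size at every query position; in APE the analogous update lands on the \emph{position-specific} entry $(p_{t-1},p_t)$, so only the $P(t_q=t-1)$-fraction of sequences contributes and an extra $1/t$ appears, whence the oversight of late patterns.

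\emph{Main obstacle.} The real work is in the reduction and the softmax bookkeeping: propagating the gradient through the already-trained second layer and the two nested softmaxes, and proving that the many cross-terms — second-layer attention spilling onto wrong positions, first-layer competitor tokens, and the finite-dimensional noise in $W_Q^1,\Phi_1,W_V^2$ — are each $O(1/V)+O(1/T)$ and, crucially, do not accumulate to something larger once summed over all positions and over the data. Equally essential is verifying that the leading coefficient is $\Theta\!\big(\tfrac{\eta\alpha\alpha'}{T}\sum_t P(t_q=t-1)/t\big)$ \emph{uniformly in $v$ and in the length $T$}: that uniformity — not mere positivity — is the statement's whole content, since it is what lets the previous-token head fire on the latter half of a sequence where the APE construction of Theorem~\ref{thm:bietti} decays away.
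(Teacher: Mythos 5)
Your proposal follows the same route as the paper: train $W_O^2$, $W_K^2$, $W_K^1$ sequentially by one gradient step from zero, use the near-orthogonality to read the trained second layer as a pair of associative memories with scores $\alpha,\alpha'$, and then observe that the first-layer update writes into the single position-agnostic entry $(r_{-1},w_E(v))$ rather than the entry $(p_{t-1},p_t)$ of Theorem~\ref{thm:bietti}, which is exactly why the $1/t$ decay disappears. That structural insight is correct and is the same one the paper exploits. However, there is a genuine gap, and it is precisely the part you flag as the ``main obstacle'' and then do not carry out. The paper's gradient at $W_K^1=0$ (its Lemma on the gradient of the linearized loss) is not the single ``direct'' term you write down: because the cross-entropy gradient is $\mathbb{E}[\cdot\mid y=k]-\mathbb{E}[\hat p(k\mid x)\,\cdot\,]$ and the second-layer softmax is linearized around a uniform distribution, the update decomposes into four families of terms ($A,B,C,D$ and their primed analogues at $s=t=T$), two of which involve the sequence average $\bar{x}_{1:T}$ and the unconditional expectation. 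Your sketch keeps only the $A$-type term at the single pair $(t_q+1,t_q)$ and relegates everything else to ``leakage.'' Showing that the discarded terms are $O(1/V)+O(1/T)$ is not a routine orthogonality argument: it requires the battery of conditional-expectation lemmas (e.g.\ $\mathbb{E}[\mathbf{1}\{z_t=k\}\mid y=k,t_q=s]$ and its weighted variants) that occupy most of the appendix, because several of these terms are structured rather than noise-like. In particular, the $C$-type (averaged) terms contribute the $1-\frac{1+\mathbf{1}\{t=T\}}{T}$ corrections to the leading coefficient, and they must be checked to not cancel or dominate it.

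A second, more substantive misattribution: you ascribe the remainder $\frac{\eta\alpha\alpha'}{T}\sum_t\frac{P(t_q=t)}{t^2}\{O(1/V)+O(1/T)\}$ to generic ``noisy overlaps'' of the competitor sum. In the paper this term has a specific combinatorial origin: the event that position $t$ is itself the trigger (forcing $t\in\{T-1,T\}$ given the two-occurrence constraint), which feeds the diagonal softmax entry $q_{t,t}$ containing $r_0$, and this reaches the $r_{-1}$ coordinate only through the centering vector $\bar r_{1-t:0}$ (which contains $r_{-1}$ with weight $1/t$). If you treat this as unstructured noise you cannot certify its sign or its $1/t^2$ scaling, and you lose the ability to verify the claimed uniformity in $v$ and $T$ that, as you yourself note, is the entire content of the statement. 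So the proposal is the right skeleton, but the theorem is not proved until the four-term decomposition and the per-term conditional expectations are worked out as in the paper's Step 3.
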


Theorem~\ref{thm:rpe-associative-memory} suggests that the association between $r_{-1}$ and $w_E(v)$ is independent of both the vocabulary $v$ and the token position $t$. Therefore, regardless of the input sequence length, the previous token head will always attend to previous tokens with consistent strength. 
\begin{rem}
    A transformer using RPE learns to attend to the previous token regardless of its position within the sequence. However, if the input sequence contains out-of-distribution tokens, the induction head mechanism does not activate. In such cases, only a transformer with APE can effectively attend to the previous token.
\end{rem}
We have seen that positional encoding influences the behavior of induction head, but we also show that a three-layer transformer without positional encoding can implement the induction head mechanism.
\begin{prop}
\label{prop:three-layer_transformer}
    A construction exists for a three-layer transformer without positional encoding that successfully achieves the induction head mechanism.
\end{prop}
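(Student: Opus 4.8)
The plan is to exhibit an explicit, hand-crafted weight configuration rather than to appeal to training dynamics. Because there is no positional encoding, the first layer cannot itself act as a previous-token head, so the idea is to spend the first layer \emph{synthesizing} positional information out of the causal mask, the second layer converting that information into a previous-token head, and the third layer performing the usual match-and-copy step of Sec.~\ref{sec:induction-head}. I will assume, as is standard in this line of work, that the vocabulary contains a dedicated begin-of-sequence symbol $\mathrm{bos}$ occurring only at position $1$, and I will allow the constructed weights to depend on the maximum sequence length $T$ (the proposition only asks for existence, not length generalization).

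\emph{Step 1 (clock, layer 1).} Set $W_Q^1=W_K^1=0$, so that under causal masking position $t$ attends uniformly over $\{1,\dots,t\}$, and set $W_O^1W_V^1 = b\,w_E(\mathrm{bos})^\top$ for a fresh unit direction $b$ that, by Assumption~\ref{assump:orthogonal}, is near-orthogonal to all token embeddings. Then the residual stream at position $t$ becomes $w_E(z_t)+\tfrac1t b$, i.e.\ every position carries the strictly monotone ``clock'' value $c_t=1/t$. Using the layer's feed-forward network I additionally place the inverted quantities $t$ and $t^2$ in two further fresh directions; a bounded two-layer ReLU network approximates $c\mapsto 1/c$ and $c\mapsto 1/c^2$ to arbitrary accuracy on the range $c\in[1/T,1]$ that is actually visited.

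\emph{Step 2 (previous-token head, layer 2).} Choose $W_Q^2,W_K^2$ so that the pre-softmax score of query $t$ on key $s$ equals $-\lambda\,(s-(t-1))^2$; this is a fixed quadratic form in the coordinates $(s,s^2)$ of the key and $(t,t^2)$ of the query produced in Step~1, and among $s\in\{1,\dots,t\}$ it is uniquely maximized at $s=t-1$. For $\lambda$ large enough the head therefore copies $W_O^2W_V^2\,w_E(z_{t-1})$ into position $t$, reproducing the previous-token head of the two-layer construction. \emph{Step 3 (match-and-copy, layer 3)} is then verbatim the second attention layer of the classical induction head: $W_K^3,W_Q^3$ match the current embedding $w_E(z_t)$ against the copied $W_O^2W_V^2\,w_E(z_{s-1})$, so that position $t$ attends to positions following earlier occurrences of $z_t$, and $W_O^3W_V^3$ sends the fetched $w_E(z_s)$ onto $w_U(z_s)$ (cf.\ Sec.~\ref{sec:induction-head}). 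Composing the three layers yields the induction-head output, which proves the proposition.

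The main obstacle is Step~1: with no positional input and permutation-equivariant attention, the first layer must nevertheless emit a position-dependent signal, and this is possible only because the causal mask together with the $\mathrm{bos}$ anchor breaks the symmetry; routing that signal (and its inverse $t,t^2$) into directions that do not disturb the downstream associative-memory reads is exactly what makes Assumption~\ref{assump:orthogonal} indispensable. A secondary, milder point is that $c_t$ and $c_{t-1}$ differ by only $\Theta(1/t^2)$, so the feed-forward inversion and the temperature $\lambda$ in Step~2 must be calibrated to $T$, which is why the constructed weights depend on $T$; since the statement only asserts that \emph{a} construction exists, this is acceptable, and it is worth noting that the compression of the clock is precisely the no-positional-encoding echo of the APE oversight phenomenon of Theorem~\ref{thm:bietti}.
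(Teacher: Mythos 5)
Your construction is correct and follows essentially the same route as the paper's proof: layer 1 reconstructs absolute positional information from the causal mask and a $\mathrm{bos}$ anchor (the paper imports this step as Theorem~1 of \citet{kazemnejad2024impact}, whose construction is precisely your uniform-attention $1/t$ clock followed by a feed-forward inversion), layer 2 converts it into a previous-token head, and layer 3 is the standard match-and-copy. The only material difference is in layer 2, where the paper uses a score linear in the key position together with strict causal attention (so the maximum lands at $s=t-1$), whereas you use a quadratic score $-\lambda(s-(t-1))^2$ that peaks at $s=t-1$ even under inclusive causal attention — a legitimate variant, though it additionally requires a constant coordinate in the query (easily supplied by an all-ones embedding row or a feed-forward bias), which you should state explicitly.
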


\subsection{Global knowledge v.s. in-context knowledge}
Here, we discuss how a two-layer transformer uses statistical information obtained from the training data and the information provided in-context by calculating the logits. 
\paragraph{Associative memory construction achieving induction head}
We first introduce the following lemma, which states that the induction head mechanism is achievable by setting the appropriate weights for the matrices in a two-layer transformer with APE.
\begin{lem}[\citet{bietti2024birth}]
\label{lem:associative_memory_construction}
    Define $Q$ as the support of $\pi_q$.
    The induction head can be achieved by constructing matrices in the following manner. 
    \begin{align*}
        &W_Q^1 = W_Q^2 = I, \qquad W_K^1 = \sum_{t = 2}^{T} p_tp_{t-1}^\top, \\
        &W_K^2 = \sum_{k\in Q} w_E(k)(W_O^1 W_V^1 w_E(k))^\top, \\ &W_O^2 = \sum_{v=1}^{V} w_U(v)(W_V^2 w_E(v))^\top, 
    \end{align*}
    where $I$ is the identity matrix, and the matrices $W_O^1$, $W_V^1$, and $W_O^2$ are a Gaussian random matrix. 
\end{lem}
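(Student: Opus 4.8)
The plan is to verify directly that, with the stated choice of weights, the three stages of the induction circuit described in Section~\ref{sec:induction-head} are executed, using Assumption~\ref{assump:orthogonal} to collapse every softmax to (approximately) hard attention and every inner product of two embeddings to a Kronecker delta. I would fix an input $z_{1:T}$ drawn from the triggered bigram model whose last token is a trigger, $z_T=q_k$, and which contains an earlier occurrence $z_{t_q}=q_k$ with $t_q<T$; by construction of the model $z_{t_q+1}=o_k$, and the target is to conclude $\hat z_{T+1}=o_k$. The feed-forward block plays no role here and may be taken to be identically zero, so $x_T=x^{(2)}_T$ and only the two attention layers matter.

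First, the previous-token head. With the absolute-encoding inputs $x^{(0,k)}_s=w_E(z_s)+p_s$ and $W_K^1=\sum_{t'=2}^{T}p_{t'}p_{t'-1}^\top$, near-orthogonality of $\{w_E(v)\}\cup\{p_t\}$ gives $W_K^1 x^{(0,k)}_s\approx p_{s+1}$, so the pre-softmax score against the query $x^{(0,q)}_t=w_E(z_t)+p_t$ is $\approx p_{s+1}^\top x^{(0,q)}_t\approx\mathbbm{1}\{s=t-1\}$. Hence (modulo the sharpness question discussed below) $x^{(1)}_t\approx\Phi_1 x^{(0,v)}_{t-1}+x^{(0,v)}_t$, i.e.\ position $t$ now additionally carries the copy $\Phi_1\bigl(w_E(z_{t-1})+p_{t-1}\bigr)$ of its predecessor.

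Next, the induction head itself. Because $\Phi_1=W_O^1W_V^1$ is a generic random map, Assumption~\ref{assump:orthogonal} makes $(\Phi_1 w_E(j))^\top\Phi_1 w_E(j')\approx\mathbbm{1}\{j=j'\}$ while $\Phi_1 w_E(j)$ stays near-orthogonal to all $w_E(\cdot)$ and $p_t$; with $W_K^2=\sum_{j\in Q}w_E(j)(\Phi_1 w_E(j))^\top$ this yields $W_K^2 x^{(1)}_s\approx w_E(z_{s-1})\,\mathbbm{1}\{z_{s-1}\in Q\}$. Since the only $\operatorname{span}\{w_E(v)\}$-component of the last-token query $x^{(1)}_T$ is $w_E(q_k)$, the second-layer score against key $s$ is $\approx\mathbbm{1}\{z_{s-1}=q_k\}$, so the attention concentrates on exactly the positions $s$ that immediately follow an occurrence of $q_k$; at every such $s$ the triggered bigram model forces $z_s=o_k$. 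Writing $W_O^2W_V^2=\sum_v w_U(v)(W_V^2 w_E(v))^\top W_V^2$ and using the same near-orthogonality for the generic $W_V^2$ gives $W_O^2W_V^2 x^{(1)}_s\approx w_U(z_s)=w_U(o_k)$, hence after the residual $x^{(2)}_T\approx w_U(o_k)+x^{(1)}_T$. Finally $x^{(1)}_T$ is near-orthogonal to every $w_U(v)$, so $(W_U x^{(2)}_T)_v\approx\mathbbm{1}\{v=o_k\}$ and $\hat z_{T+1}=\argmax_{v\in\mathcal V}\sigma(W_U x^{(2)}_T)_v=o_k$, which is the induction-head behavior.

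The main obstacle is making the chain of ``$\approx$'' rigorous. Two error sources must be controlled at once. (i) Each softmax has $\Theta(T)$ off-target logits, each only $o(1)$ away from $0$; to let the single target logit dominate after the softmax one needs either an inverse-temperature (or embedding-norm) scaling, or a quantitative bound showing the total off-target attention mass is $o(1)$, which translates into a lower bound on $d$ in terms of $T$ and $V$. (ii) The near-orthogonality slack propagates through both attention layers and through the random products $\Phi_1=W_O^1W_V^1$ and $W_O^2W_V^2$, so one must track it with concentration estimates for $\|W_0 u_i\|$ and $u_i^\top W_0^\top W_0 u_j$ when $W_0$ is Gaussian. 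Once these per-layer remainders are shown to be $o(1)$ for $d$ sufficiently large, the $\argmax$ over $\mathcal V$ is stable and the claim follows; this error bookkeeping, rather than the circuit logic, is where the real effort goes.
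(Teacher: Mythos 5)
Your verification is correct and is essentially the argument one would expect: the paper itself imports this lemma from \citet{bietti2024birth} without reproving it, but the same layer-by-layer computation (collapse each inner product to a Kronecker delta via Assumption~\ref{assump:orthogonal}, trace the copy $\Phi_1 w_E(z_{t-1})$ through the second-layer key matrix, then read off $w_U(o_k)$ from $W_O^2W_V^2$) is exactly what the paper carries out for the RPE variant in the proofs of Propositions~\ref{prop:two-token-collision} and~\ref{prop:collision-temperature}. The one substantive caveat you raise in your final paragraph is real and worth stressing: with the unscaled construction the softmax does \emph{not} concentrate, and the paper's own Proposition~\ref{prop:two-token-collision} quantifies this, obtaining attention weights of the form $e/(t_1+e-1)$ on the target position versus $1/(t+e-1)$ elsewhere. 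Consequently the correct continuation only receives the single largest per-position weight, and a competing token occurring more than about $e$ times after non-trigger positions could in principle overtake it; guaranteeing $\hat z_{T+1}=o_k$ for general sequences requires the inverse-temperature scalings $\tau_1,\tau_2$ of Definition~\ref{def:stronger_ideally_constructed_transformer}, which is precisely the route the paper takes. So your circuit logic is sound, but "achieves the induction head" should be read in this weaker, per-position sense unless the scaled construction is used.
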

It is straightforward to prove that a two-layer transformer with RPE also has associative memory construction that achieves induction head by modifying $W_K^1 = \sum_{k \in Q} w_E(k)r_{-1}^\top$.
Now, using the transformer with RPE architecture, we define an \textbf{associative memory transformer} : a model that incorporates an induction head and a feed-forward network storing knowledge learned through pretraining. 
\begin{dfn}[associative memory transformer]
\label{def:ideally_constructed_transformer}
    A two-layer transformer with RPE is called \textbf{associative memory transformer} if its weight matrices are set as in Lemma~\ref{lem:associative_memory_construction} except for 
    \begin{align*}
         &W_K^1 = \sum_{k \in Q} w_E(k)r_{-1}^\top, \qquad W_1 = \begin{pmatrix} w_E(v_1)^\top \\ w_E(v_2)^\top \\ \vdots \\ w_E(v_V)^\top \end{pmatrix}, \\
    &W_2 = \begin{pmatrix}
        \sum_{u=1}^{V} \log{\pi_b(u \mid v_1)}w_U(u)^\top \\
        \vdots \\
        \sum_{u=1}^{V} \log{\pi_b(u \mid v_V)}w_U(u)^\top\end{pmatrix}^\top.
    \end{align*}
\end{dfn}
This construction reveals that the feed-forward layer in  Eq.~\ref{eq:2nd-ffn} of Sec.~\ref{sec:transformer_architecture} functions explicitly as a key-value memory \citep{geva2020transformer}.
Each row of $W_1$ (key) acts as a detector for an input pattern $w_E(v)$, and each column of $W_2$ (value) encodes global knowledge, i.e., the distribution $\pi_b$ over the output vocabulary.
Appendix~\ref{app:key-value_memory} discusses the difference from the architecture of \citet{bietti2024birth}.
Note that when the probability $\pi_b(u \mid v)$ approaches $+0$, its logarithm $\log{\pi_b(u \mid v)}$ diverges to $-\infty$. If we admit $\log{\pi_b(u \mid v)} = -\infty$, the transformer never outputs $u$ regardless of context. This does not align with the actual behavior of the induction head. Also, due to the limitations of floating-point representation in computers, it is not possible to represent $\infty$ directly.
Thus, we consider $\epsilon$ as threshold and set $\pi_b(u \mid v) = \epsilon$.

\paragraph{Associative memory transformer on limited input sequences.}

We present the following proposition, which focuses on two token patterns in the input sequence. 
\begin{prop}
\label{prop:two-token-collision}
    Suppose a two-layer transformer is an associative memory transformer. Given a length-$T$ sequence $z_{1:T}$ where the last token is $z_T = q$, let the $t_1$-th token be $z_{t_1} = v_1$ following $z_{t_1 - 1} = q$, and let the $t_2$-th token be $z_{t_2} = v_2$ following $z_{t_2 - 1} = q$, where $v_1 \neq v_2$ and $(3 \le) t_1 < t_2$ without loss of generality.   
    
    Also assume that there exists only one occurrence for each of $v_1$ and $v_2$ in the context $z_{1:T}$.
    Then, the logits $\xi_{v_1}$ and $\xi_{v_2}$ can be expressed as follows: 
    \begin{align}
    \label{eq:xi}
        \xi_{v_1} &= \frac{e^\frac{e}{t_1 + e - 1}}{Z} +\log{\pi_b(v_1\mid q)}, \notag \\
        \xi_{v_2} &= \frac{e^\frac{1+e}{t_2 + e - 1}}{Z} +\log{\pi_b(v_2\mid q)}, 
    \end{align}
    where $Z$ is the normalization factor of the softmax function and $e$ is Euler's number.
\end{prop}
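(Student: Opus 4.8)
The plan is to push the prompt $z_{1:T}$ through the forward pass of the associative memory transformer position by position, discarding cross terms via Assumption~\ref{assump:orthogonal} at each step, and then to read $\xi_{v_1},\xi_{v_2}$ off the residual stream at the last position. I take $q,v_1,v_2\in Q$, so that the first-layer previous-token head is active at positions $t_1,t_2,T$ (the operating regime of the induction head), and I use throughout that $\{w_E(v)\}_v,\{w_U(u)\}_u,\{r_j\}_j,\{\Phi_1 w_E(v)\}_v,\{W_V^2 w_E(v)\}_v$ are mutually near-orthonormal. With $W_Q^1=I$ and $W_K^1=\sum_{k\in Q}w_E(k)r_{-1}^\top$, the first-layer attention logit from query position $t$ to key position $s$ is $\sum_{k\in Q}\bigl(r_{-1}^\top w_E(z_s)+r_{-1}^\top r_{s-t}\bigr)\mathbbm{1}[z_t=k]\approx\mathbbm{1}[s=t-1]\,\mathbbm{1}[z_t\in Q]$, so at such a $t$ the softmax places weight $\tfrac{e}{t+e-1}$ on position $t-1$ and $\tfrac{1}{t+e-1}$ on each of the other $t-1$ positions. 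Writing $x^{(1)}_t=\Phi_1\sum_s a^{(1)}_{t,s}w_E(z_s)+w_E(z_t)$, this makes $a^{(1)}_{t_1,\cdot}$ put mass $\tfrac{e}{t_1+e-1}$ on the unique $q$ in its context (position $t_1-1$), while $a^{(1)}_{t_2,\cdot}$ puts mass $\tfrac{e}{t_2+e-1}$ on $q$ at $t_2-1$ \emph{and} mass $\tfrac{1}{t_2+e-1}$ on the earlier $q$ at $t_1-1$, still in context since $t_1-1<t_2$.

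For the second layer at position $T$: with $W_Q^2=I$ and $W_K^2=\sum_{k\in Q}w_E(k)(\Phi_1 w_E(k))^\top$, near-orthogonality gives $(\Phi_1 w_E(k))^\top x^{(1)}_s\approx\sum_{r:z_r=k}a^{(1)}_{s,r}$ (using $(\Phi_1 w_E(k))^\top\Phi_1 w_E(z_r)\approx\mathbbm{1}[k=z_r]$), hence $W_K^2 x^{(1)}_s\approx\sum_{k\in Q}w_E(k)\sum_{r:z_r=k}a^{(1)}_{s,r}$; dotting with $x^{(1)}_T$, whose only non-$\Phi_1$ component is $w_E(q)$, the second-layer attention logit from $T$ to $s$ is $\ell_s:=\sum_{r:z_r=q}a^{(1)}_{s,r}$, which by the previous paragraph equals $\tfrac{e}{t_1+e-1}$ at $s=t_1$ and $\tfrac{1+e}{t_2+e-1}$ at $s=t_2$. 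With $Z:=\sum_{s=1}^{T}e^{\ell_s}$ the (common) softmax normalizer, the attention weight onto $t_i$ is $a^{(2)}_{T,t_i}=e^{\ell_{t_i}}/Z$. Since $W_O^2=\sum_v w_U(v)(W_V^2 w_E(v))^\top$ forces $W_O^2 W_V^2 x^{(1)}_s\approx w_U(z_s)$, the second-layer sublayer output plus the residual is $x^{(2)}_T\approx\sum_s a^{(2)}_{T,s}w_U(z_s)+\Phi_1\sum_s a^{(1)}_{T,s}w_E(z_s)+w_E(q)$.

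Finally, the feed-forward layer and the read-out. Row $j$ of $W_1 x^{(2)}_T$ is $w_E(v_j)^\top x^{(2)}_T\approx\mathbbm{1}[v_j=q]$, since the only plain $w_E$-direction present in $x^{(2)}_T$ is $w_E(q)$ (the $w_U$- and $\Phi_1$-terms are orthogonal to every $w_E(v_j)$), so $\operatorname{ReLU}(W_1 x^{(2)}_T)$ is the indicator of $q$ and $W_2\operatorname{ReLU}(W_1 x^{(2)}_T)$ is the column of $W_2$ indexed by $q$, namely $\sum_u\log\pi_b(u\mid q)\,w_U(u)$. Adding this to $x^{(2)}_T$ and projecting onto $w_U(v_i)$---orthogonality kills the $\Phi_1$- and $w_E$-parts and every $w_U(z_s)$ with $z_s\neq v_i$, and $v_i$ occurs only at $t_i$ by hypothesis---yields $\xi_{v_i}=a^{(2)}_{T,t_i}+\log\pi_b(v_i\mid q)=e^{\ell_{t_i}}/Z+\log\pi_b(v_i\mid q)$, which is exactly~\eqref{eq:xi}.

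The step I expect to be the main obstacle is the second-layer logit count: the whole asymmetry between $\tfrac{e}{t_1+e-1}$ and $\tfrac{1+e}{t_2+e-1}$ comes from the single extra copy of $q$ at position $t_1-1$ lying inside the context of $t_2$ but not of $t_1$, so one must combine $3\le t_1<t_2$ with $v_1\neq q$, $v_2\neq q$ (which force $t_1<t_2-1$ and $t_2<T$) and the single-occurrence hypotheses to be sure that no other position of this hand-designed prompt injects a $q$ into the first-layer contexts of $t_1,t_2$ or into the $w_E$-direction read by $W_1$; everything else is routine bookkeeping under Assumption~\ref{assump:orthogonal}.
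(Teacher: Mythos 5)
Your proposal is correct and follows essentially the same route as the paper's proof: an explicit forward pass through the associative memory transformer, using near-orthogonality to reduce the first-layer attention to the weights $\tfrac{e}{t+e-1}$ and $\tfrac{1}{t+e-1}$, reading the second-layer logits $\tfrac{e}{t_1+e-1}$ and $\tfrac{1+e}{t_2+e-1}$ off the number of copies of $\Phi_1 w_E(q)$ in each key, and extracting $\log\pi_b(\cdot\mid q)$ from the feed-forward key--value memory. The only differences are cosmetic or in your favor: you work with the ReLU form of $W_1,W_2$ rather than the equivalent linear $W_F$ used in the appendix, and you make explicit the condition $v_1,v_2\in Q$ needed for the first-layer head to be active at $t_1,t_2$, which the paper's own computation uses implicitly.
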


We can observe a general trend from Eqs.~\ref{eq:xi}.  First, if the model contains global knowledge of both tokens $v_1$ and $v_2$, the output token is influenced by the knowledge, in addition to the in-context knowledge. 
In particular, the logits of tokens that rarely appear in the training dataset become significantly low. In our associative memory transformer model, global knowledge never promotes the output of any specific token; rather, it solely suppresses inappropriate predictions, because we always have $\log{\pi_b(\cdot \mid q)} \le 0$.
Second, if token patterns, such as $q\,v_1$ and $q\,v_2$, do not appear in the training data, global knowledge affects the token logits to the same extent, due to the way $W_F$ is defined using the threshold $\epsilon$ in Def.~\ref{def:ideally_constructed_transformer}. It is worth noting that an associative memory transformer exhibits a significant difference in scale when handling global knowledge and in-context knowledge. For instance, if $\pi_b(v \mid q) = 0.1$, then $\log{\pi_b(v \mid q)} \approx -2.3$. Even with the functionality of an induction head, it is unlikely to predict the token patterns from the context. 
This is avoidable by reformulating $W_O^2$ as in Def.~\ref{def:stronger_ideally_constructed_transformer} and adjusting the strength of induction head. We consider this setting in the next paragraph, along with more general input sequences.  

If neither token appears during training, the difference between the two logits before applying the softmax function $\xi'_{v_2} - \xi'_{v_1}$ is calculated as follows:
\begin{align}
\label{eq:logit_difference}
    \xi'_{v_2} - \xi'_{v_1}
    = &\frac{e(t_1 - t_2) + t_1 + e - 1}{(t_1 + e - 1)(t_2 + e - 1)}.
\end{align}
The result implies that the token $v_2$ is more likely to be generated if $t_1$ is large and the distance between the tokens $v_1$ and $v_2$, i.e., $t_2 - t_1$ is small. This trend arises from the first layer of the attention block, where the attention is not fully concentrated on the previous token, resulting in a diffusion of attention across other tokens as well. In other words, the more tokens that precede the current token, the lower the attention score assigned to the current token.

\paragraph{Stronger associative memory on general input sequences.}
Next, we introduce a more sophisticated analysis of in-context and global knowledge by relaxing assumptions in the sequence $z_{1:T}$. The transformer considered here has associative memory with larger scores, characterized by a weighted sum of the individual terms of the weight matrices in Sec.~\ref{sec:induction-head}, where each term is scaled by an appropriate coefficient. We consider a special case in which each term has the same coefficient $\tau_1, \tau_2, \tau_3 \in \mathbb{R}$. 
\begin{dfn}[stronger associative memory transformer]
\label{def:stronger_ideally_constructed_transformer}
We redefine the following matrices $W_K^1, W_K^2$ and $W_O^2$ in Def.~\ref{def:ideally_constructed_transformer} for a stronger associative memory transformer
    \begin{align}
        \hat{W}_K^1 = \tau_1 W_K^1, \quad 
        \hat{W}_K^2 = \tau_2 W_K^2, \quad 
        \hat{W}_O^2 = \tau_3 W_O^2,
    \end{align}
    
with the parameters $\tau_1, \tau_2, \tau_3 \in \mathbb{R}_+$. The other weight matrices remain the same.
\end{dfn}

\begin{prop}
\label{prop:collision-temperature}
    Suppose a two-layer transformer is a stronger associative memory transformer. Given a length-$T$ sequence $z_{1:T}$ where the last token is $z_T = q$, let $f(v)$ be the number of token pattern "$q\,v$" appearing in the sequence for vocabulary $v \in \mathcal{V}$. For sufficiently large $\tau_1$ and $\tau_2$, the logits $\xi_v$ can be expressed as follows: 
    \begin{align}
    \xi_v &\underset{\substack{\tau_1 \\\tau_2}}{\approx} 
    \log{\pi_b(v\mid q)} \notag \\
    &\quad + \tau_3 \cdot \frac{f(v) + \mathbf{1}\{v = q\}\mathbf{1}\{z_1 = q\}}{
    \left(\sum_{v' = 1}^{V} f(v')\right) + \mathbf{1}\{z_1 = q\}}.
    \end{align}
\end{prop}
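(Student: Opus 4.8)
The plan is to push the probe sequence $z_{1:T}$ through the transformer of Definition~\ref{def:stronger_ideally_constructed_transformer} block by block, using Assumption~\ref{assump:orthogonal} to reduce every inner product of embeddings or relative-position vectors to a Kronecker delta, and then to take the two softmaxes to their $\tau_1,\tau_2\to\infty$ limits. I will write $a^{(1)}_{ts}$ and $a^{(2)}_{Ts}$ for the attention weights of the two layers at query positions $t$ and $T$, and adopt the convention $z_0:=z_1$ so that position $1$ is treated as if preceded by itself.

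First I would analyze the first layer and the feed-forward layer, which behave cleanly for any $\tau_1$. With $\hat{W}_K^1=\tau_1\sum_{k\in Q}w_E(k)r_{-1}^\top$, $W_Q^1=I$, and a query $x^{(0,q)}_t=w_E(z_t)$ that carries no positional part, the score of key $s$ in \eqref{eq:1st-attention} is $\tau_1\,(r_{-1}^\top r_{s-t})\big(\sum_{k\in Q}w_E(k)\big)^\top w_E(z_t)=\tau_1\mathbf{1}\{s=t-1\}\mathbf{1}\{z_t\in Q\}$, so for $z_t\in Q$ the first head puts weight $e^{\tau_1}/(e^{\tau_1}+t-1)\to1$ on position $t-1$ and $x^{(1)}_t\to\Phi_1 w_E(z_{t-1})+w_E(z_t)$, while the residual coefficient of $w_E(z_t)$ is exactly $1$ at every $\tau_1$. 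From $W_O^2=\sum_v w_U(v)(W_V^2 w_E(v))^\top$ and near-orthogonality I get $W_O^2W_V^2 w_E(v)=w_U(v)$ and $W_O^2W_V^2\Phi_1 w_E(v)=0$, hence $\hat{W}_O^2W_V^2 x^{(1)}_s=\tau_3 w_U(z_s)$ and, by \eqref{eq:2nd-attention}, $x^{(2)}_T=\tau_3\sum_s a^{(2)}_{Ts}w_U(z_s)+\Phi_1 w_E(z_{T-1})+w_E(q)$. Since the $w_U$-family and the $\Phi_1 w_E$-family are orthogonal to the $w_E$-family, the $v$-th coordinate of $W_1 x^{(2)}_T$ equals $w_E(v)^\top x^{(2)}_T=\mathbf{1}\{v=q\}$, so $\operatorname{ReLU}(W_1 x^{(2)}_T)$ is the one-hot vector of $q$ and $W_2\operatorname{ReLU}(W_1 x^{(2)}_T)=\sum_u\log\pi_b(u\mid q)\,w_U(u)$ by Definition~\ref{def:ideally_constructed_transformer}; reading \eqref{eq:2nd-ffn} out with $w_U(v)$ then gives $\xi_v=\log\pi_b(v\mid q)+\tau_3\sum_{s:z_s=v}a^{(2)}_{Ts}$. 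This reduces the proposition to identifying the limit of $A_v:=\sum_{s:z_s=v}a^{(2)}_{Ts}$.

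Next I would compute the second-layer attention. Using the first step at large $\tau_1$, the key is $\hat{W}_K^2 x^{(1)}_s=\tau_2\sum_{k\in Q}w_E(k)\,\mathbf{1}\{z_{s-1}=k\}=\tau_2\,w_E(z_{s-1})\mathbf{1}\{z_{s-1}\in Q\}$ and the query is $x^{(1)}_T\to\Phi_1 w_E(z_{T-1})+w_E(q)$; near-orthogonality cancels the $\Phi_1 w_E$ cross-term and leaves the score $\tau_2\mathbf{1}\{z_{s-1}=q\}$ (using $q\in Q$). Writing $\chi_s:=\mathbf{1}\{z_{s-1}=q\}$ and $N:=\sum_{s=1}^T\chi_s=\big(\sum_{v'}f(v')\big)+\mathbf{1}\{z_1=q\}\ge1$, the softmax gives $a^{(2)}_{Ts}=e^{\tau_2\chi_s}/(Ne^{\tau_2}+T-N)\to\chi_s/N$ as $\tau_2\to\infty$. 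Since $\{s:z_s=v,\ \chi_s=1\}$ is exactly the $f(v)$ occurrences of the pattern ``$q\,v$'' together with position $1$ when $v=q=z_1$, I get $A_v\to\big(f(v)+\mathbf{1}\{v=q\}\mathbf{1}\{z_1=q\}\big)/N$, which substituted back is the claimed formula.

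The hard part will be the order of the two limits, together with the fact that the RPE first head fires only when $z_t\in Q$. One must take $\tau_1$ large enough that the $O\big((t-1)e^{-\tau_1}\big)$ error of the previous-token copy is dominated by the $\Theta(1)$ gaps produced at the second softmax before sending $\tau_2\to\infty$ --- this is what ``sufficiently large $\tau_1$ and $\tau_2$'' encodes. A position $s$ with $z_{s-1}=q$ but $z_s\notin Q$ receives only a uniform first-layer copy, hence a second-layer score $\tau_2\,n_q(s)/s<\tau_2$ (with $n_q(s)$ the number of $q$'s up to $s$), so its weight still vanishes in the limit and the count above is exact once one grants --- as is already implicit in Proposition~\ref{prop:two-token-collision} --- that the tokens following $q$ in the probe lie in $Q$ (equivalently, that the first head acts as a full previous-token head). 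Everything else is routine bookkeeping with Kronecker deltas.
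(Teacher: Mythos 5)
Your proposal is correct and follows essentially the same route as the paper's proof: a layer-by-layer forward pass through the stronger associative memory transformer, collapsing every inner product to a Kronecker delta via near-orthogonality and then saturating the two softmaxes, which yields the same limiting attention mass $\chi_s/N$ on the $N$ positions preceded by $q$ and hence the same coefficient of $w_U(v)$. The only differences are matters of added care on your side --- you work with the actual ReLU key--value MLP of Def.~\ref{def:ideally_constructed_transformer} rather than the linear surrogate $W_F$, you track the order of the $\tau_1,\tau_2$ limits, and you explicitly flag the assumption (implicit but unstated in the paper's derivation, which silently replaces $\sum_{k\in Q}w_E(k)^\top w_E(z_t)$ by $1$) that the tokens following $q$ lie in $Q$ so the first-layer head truly acts as a previous-token head at the counted positions.
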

It can be observed that the transformer equipped with a general associative memory is affected by the proportions of each token pattern present in the context as well as by the global knowledge. In this case, it can be concluded that the information regarding the positions where token patterns are observed does not influence the final logit. We can easily derive the following properties.
\begin{cor}
\label{cor:global-global}
    Given a length-$T$ sequence $z_{1:T}$ where the last token is $z_T = q$ and the only token patterns $q\,v_1$ and $q\,v_2$ appear $f(v_1)$ and $f(v_2)$ times, respectively. The transformer outputs one of the following: $\argmax_{v \in \mathcal{V}\setminus\{v_1,v_2\}} \pi_b(v \mid q)$, $v_1$, or $v_2$. 
\end{cor}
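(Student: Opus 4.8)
The plan is to read the Corollary off Proposition~\ref{prop:collision-temperature} by tracking exactly which tokens receive a non-zero in-context contribution. First I would instantiate the logit formula of Proposition~\ref{prop:collision-temperature}: for every $v \in \mathcal{V}$,
\begin{align*}
\xi_v \;\approx\; \log \pi_b(v \mid q) \;+\; \tau_3 \cdot \frac{f(v) + \mathbf{1}\{v = q\}\mathbf{1}\{z_1 = q\}}{\big(\sum_{v'=1}^{V} f(v')\big) + \mathbf{1}\{z_1 = q\}},
\end{align*}
and note two things about the second summand: it is non-negative, and it vanishes for every token $v$ with $f(v)=0$ that is different from $q$. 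By the hypothesis that $q\,v_1$ and $q\,v_2$ are the only bigram patterns of the form $q\,\cdot$ occurring in $z_{1:T}$, we have $f(v)=0$ for all $v \notin \{v_1,v_2\}$ and hence $\sum_{v'} f(v') = f(v_1) + f(v_2)$. (If $z_1 = q$ then the pattern $q\,z_2$ occurs at positions $(1,2)$, so $z_2 \in \{v_1,v_2\}$ and this bookkeeping is still consistent.)

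Next I would partition $\mathcal{V}$ into the ``boosted'' tokens $\{v_1,v_2\}$ and their complement. On $\mathcal{V}\setminus\{v_1,v_2\}$ the logit reduces to $\xi_v \approx \log \pi_b(v\mid q)$, so by strict monotonicity of $\log$ the maximiser of $\xi_v$ over this set is $\argmax_{v\in\mathcal{V}\setminus\{v_1,v_2\}} \pi_b(v\mid q)$. Since the global maximiser of $\xi_v$ over all of $\mathcal{V}$ is attained either inside $\{v_1,v_2\}$ or inside the complement, the predicted token $\hat{z}_{T+1} = \argmax_{v}\sigma(W_U x_T)_v = \argmax_v \xi_v$ must coincide with one of $v_1$, $v_2$, or $\argmax_{v\in\mathcal{V}\setminus\{v_1,v_2\}} \pi_b(v\mid q)$, which is precisely the assertion. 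The cases of the qualitative remark following Proposition~\ref{prop:collision-temperature} then fall out automatically: a large $\tau_3$ forces the winner into $\{v_1,v_2\}$ (whichever has the larger $f$), while a small $\tau_3$ lets $\log\pi_b(\cdot\mid q)$ decide.

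The one delicate point — and the only place I expect to need care — is the token $q$ itself in the edge case $z_1 = q$, where the extra indicator term gives $q$ a strictly positive contribution even though $q$ need not maximise $\pi_b(\cdot\mid q)$ over the complement, so in principle $q$ could win without being one of the three listed options. The clean resolution is to note that in the usual setup the sequence begins with a fixed start token $z_1 \neq q$, so $\mathbf{1}\{z_1=q\}=0$ and the boost is supported exactly on $\{v_1,v_2\}$; the argument above is then immediate. If one insists on allowing $z_1 = q$, I would instead compare $\xi_q$ with $\xi_{z_2}$ (recall $z_2 \in \{v_1,v_2\}$ and $f(z_2)\ge 1$): the boost of $z_2$ dominates that of $q$, so for every $\tau_3$ large enough that the $q$-boost matters at all, $z_2$ already beats $q$, and the conclusion is unchanged up to the mild extra hypothesis that $\tau_3$ is not so small that a difference of bounded $\log\pi_b$ terms overrides it. Everything else is a one-line consequence of Proposition~\ref{prop:collision-temperature} together with monotonicity of the logarithm.
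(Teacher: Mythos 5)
Your argument is correct and is essentially the paper's own justification, which likewise reads the corollary directly off Proposition~\ref{prop:collision-temperature}: for $v \notin \{v_1,v_2\}$ the in-context term vanishes so $\xi_v$ reduces to $\log\pi_b(v\mid q)$, whose maximizer over the complement is $\argmax_{v\in\mathcal{V}\setminus\{v_1,v_2\}}\pi_b(v\mid q)$, while only $v_1$ and $v_2$ receive the additional $\tau_3$-weighted boost. Your flagging of the $z_1=q$ edge case is a genuine observation the paper silently ignores — the indicator term can boost $\xi_q$ even though $q$ need not maximize $\pi_b(\cdot\mid q)$ over the complement — and your fix is reasonable, though note that when $f(z_2)=1$ the boosts of $q$ and $z_2$ coincide and the comparison falls back to the bounded $\log\pi_b$ terms, so strictly the corollary needs either the assumption $z_1\neq q$ or a slightly enlarged candidate set.
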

For vocabulary $v \neq v_1, v_2$, the logit $\xi_v$ is determined only by $\log{\pi_b(v \mid q)}$ and this means that the candidate for the next token is $\argmax_{v \in \mathcal{V}\setminus\{v_1,v_2\}} \pi_b(v \mid q)$. On the other hand, The logits for $v_1$ and $v_2$ have the other term in Prop.~\ref{prop:collision-temperature}, which makes them other candidates for the next token. Overall, Corollary~\ref{cor:global-global} states that the model output depends on the bigram conditionals $\pi_b(\cdot \mid q)$ and the frequency of token patterns $qv_1$ and $qv_2$ in the context.    

\section{Experiments}
This section empirically examines how positional encoding affects the transformer's ability to capture patterns in input sequences, as discussed in Thm.~\ref{thm:bietti} and \ref{thm:rpe-associative-memory}, and how the transformer's output changes depending on the proportion of pattern occurrences within the context, as stated in Prop.~\ref{prop:collision-temperature}.
Henceforth, we denote by $\operatorname{TF}_\text{ape}$ and $\operatorname{TF}_\text{rpe}$ two-layer transformer with 
APE, RPE.

\subsection{Oversight of in-context knowledge}
\begin{figure}[t!]
    \centering
    \begin{subfigure}[t]{0.45\textwidth}
        \centering
        \includegraphics[width=\linewidth]{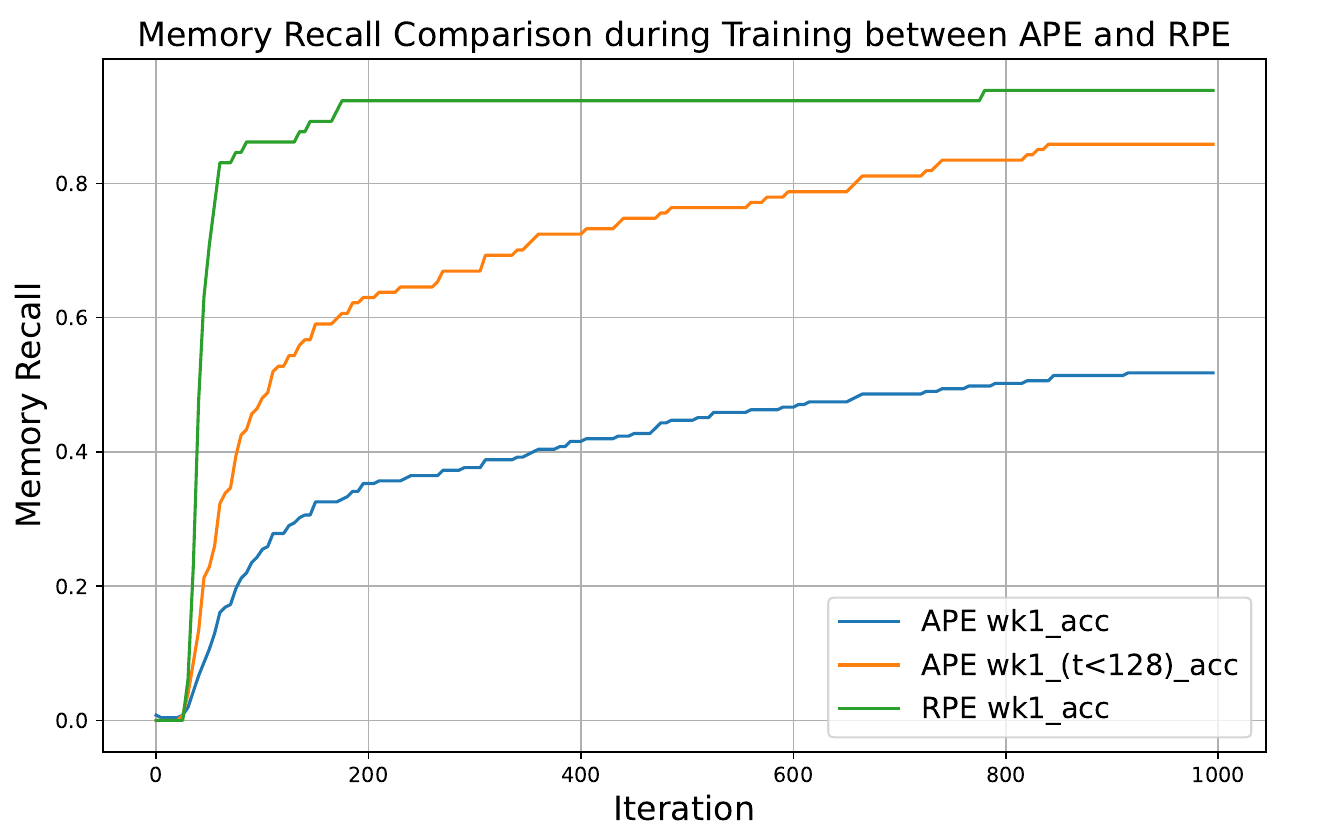}
        \caption{Even when trained with sequences of length $256$, the attention block in the first layer of $\operatorname{TF}_\text{APE}$ fails to attend to the previous token at positions $t > 128$, while $\operatorname{TF}_\text{RPE}$ attends to previous tokens regardless of the positions.}
        \label{fig:previous_token_head}
    \end{subfigure}
    \hfill
    \begin{subfigure}[t]{0.45\textwidth}
        \centering
        \includegraphics[width=\linewidth]{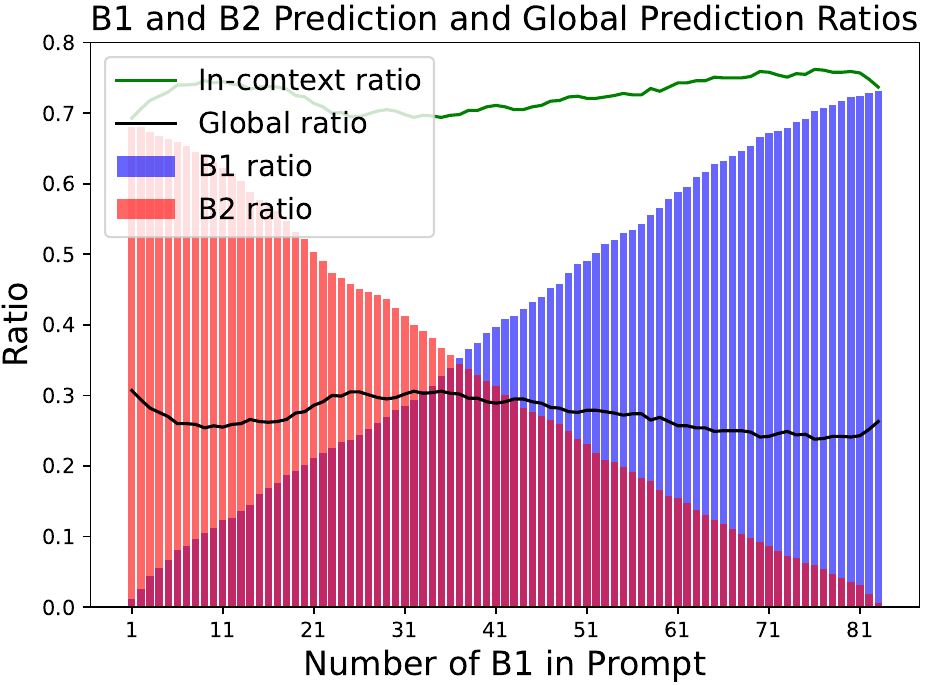}
        \caption{Ratio of predicting $B_1$ and $B_2$ given the prompt "$A\,B_1\,,\dots,\,A\,B_1\,,\,A\,B_2\,,\dots,\,A\,B_2\,,\,A$". The transformer shows an increasing tendency to complete the context with $B_1$ as the frequency of token pattern $A\,B_1$ in the context grows.}
        \label{fig:capital-world}
    \end{subfigure}

    \caption{(a) Comparison of a two-layer transformer with APE (Thm.~\ref{thm:bietti}) and RPE (Thm.~\ref{thm:rpe-associative-memory}) capturing previous token information. 
(b) Output behavior as the frequency of the token pattern $A\,B_1$ increases in the input sequence (Prop.~\ref{prop:collision-temperature}).}
    \label{fig:combined}
\end{figure}

\paragraph{Training process of previous token head}
We investigated whether the training of transformers directs attention towards the previous token in the first attention layer. Specifically, $\operatorname{TF}_\text{rpe}$ and $\operatorname{TF}_\text{ape}$ were trained using sequences of length $256$ generated according to the sequence generation rule. Details regarding the hyperparameters and sequence generation process are provided in Appendix~\ref{app:experimental_setup}.

For evaluation of the learned associative memory $\sum_{(i,j)\in M} u_i v_j^\top$, we employed the memory recall metric \citep{dar2022analyzing, geva2023dissecting, bietti2024birth}:
\begin{align*}
    R(W) = \frac{1}{\| M\|}\sum_{(i,j)\in M} \mathbf{1}\{\argmax_{i'}u_{i'}^\top Wv_j = i\}.
\end{align*}
For $\operatorname{TF}_\text{rpe}$, we want each vocabulary $w_E(k)$ to be paired with the information of the previous position $r_{-1}$.
Hence, $R(W_K^1)$ is computed as
\begin{align*}
    \frac{1}{\| Q\|}\sum_{k \in Q} \mathbf{1}\{\argmax_{i'} r_{i'}^\top W_K^1 w_E(k) = -1\}.
\end{align*}
Similarly, we have for $\operatorname{TF}_\text{ape}$
\begin{align*}
    \frac{1}{T}\sum_{t=2}^{T} \mathbf{1}\{\argmax_{t'} p_{t'}^\top W_K^1 p_t = t-1\},
\end{align*}
where $T$ is the maximum input length for the transformer.

As we can see from Fig.~\ref{fig:previous_token_head}, over $90 \%$ of desired associations are stored in $W_K^1$ for $\operatorname{TF}_\text{rpe}$, while  $\operatorname{TF}_\text{ape}$ fails to remember most of the associations at positions $128 < t < 256$ even though the model was trained with sequences of length $256$.  

\paragraph{Discussion}
One of the worst-case scenarios for $\operatorname{TF}_\text{ape}$ occurs when a trigger token does not appear by chance in the range $t \le 128$ and only appears for the first time at $t > 128$. Although such a sequence is contrived, its probability of occurrence is non-zero. For such sequences, models with an incomplete previous token head fail to associate the trigger token's information with the output token, resulting in incorrect predictions without leveraging in-context information. In contrast, $\operatorname{TF}_\text{rpe}$, which links words to their relative positions, ensures that the in-context information is not overlooked, regardless of where the trigger token first appears, thereby enabling correct predictions.

\subsection{Global knowledge v.s. in-context knowledge}

To validate the results derived from the theory, we conducted experiments using real data.
\paragraph{Analogical reasoning task}
We utilize the analogical reasoning task, one of the most practical tasks for bigram analysis. Specifically, we focus on the "capital-world" analogy type of questions from the Google Analogy Dataset \citep{mikolov2013efficient}, which contains a total of 19,544 questions. This subset includes 232 unique vocabulary items. Denoting a word $A$ (e.g., \textit{Tokyo}) and its corresponding analogy counterpart $A^*$ (e.g., \textit{Japan}), the prompts were constructed in the form $A\,A^*\,,\, B\,B^*\,, \,C\,C^*\,, \dots$, with pairs separated by commas.
Using these generated prompts, we trained $\operatorname{TF}_\text{rpe}$. For details on the training and dataset setup, please refer to Appendix~\ref{app:experimental_setup}.

\paragraph{Collision of context information}
As shown in Corollary~\ref{cor:global-global}, the model predicts either a token based on global knowledge or the token that appears in the context.
To confirm this, we randomly sampled $A$ from the capitals in the Google Analogy Dataset and constructed prompts using word pairs $A\,B_1$ and $A\,B_2$ that were not learned as global knowledge. The resulting prompts were of the form $A\,B_1\,, \dots ,\, A\,B_1\, ,\, A\,B_2\,, \dots, \,A\,B_2\,,\, A$, which were then used for predicting the next token. We generated 1,000 prompts and calculated the proportion of cases where either $B_1$ or $B_2$ was predicted, while controlling the frequency of $A\,B_1$ and $A\,B_2$ in the context.

The result is shown in Fig.~\ref{fig:capital-world}.
As the number of token pattern $A\,B_1$ increases, the model predicts $B_1$ as the next token more frequently. 
Furthermore, it can be observed that the prediction trend reverses when the frequency of $B_1$ and $B_2$ in the context is nearly equal.
Additionally, we observe from global ratio that the model predicted some vocabulary $B$ from global knowledge. 
Especially, the figure illustrates that it is more likely to output global knowledge when the number of $B_1$ and $B_2$ are the same, or when the number of $B_1$ or $B_2$ is almost maximum. Please refer to Appendix~\ref{app:global-vs-in-context} for further discussion on these phenomena.

\section{Conclusion}
We analyzed the influence of interaction between global bigram and in-context knowledge to the two-layer transformer through the lens of associative memory. We theoretically and empirically verified that relative positional encoding empowers transformers with the ability to capture information in longer sequences, and that how the next token prediction is conducted within the transformer that was trained to store global knowledge and to have induction head. 

\section{Limitations}
Our study has certain limitations that should be acknowledged. First, the analysis presented in this work is conducted using a two-layer transformer model. While this allows for a controlled and interpretable exploration of the underlying mechanisms, the findings may not directly generalize to LLMs, which typically involve significantly more layers and complex interactions.

Second, while our focus on induction heads provides valuable insights into ICL, it is important to note that ICL leverages additional computational circuits beyond induction heads. As such, our explanation captures only part of the broader mechanisms underlying ICL, leaving room for further investigation into other contributing factors.

\section{Ethics statement}
This work is purely theoretical, supported by controlled experiments designed to validate the proposed analyses. No personal or sensitive data were involved at any stage of this research. All experiments were conducted using datasets that are publicly available, ensuring compliance with ethical standards for data usage. 

\bibliographystyle{plainnat}
\bibliography{reference}

\clearpage
\appendix

\begin{figure}[tbp]
    \centering
    \includegraphics[width=0.48\textwidth]{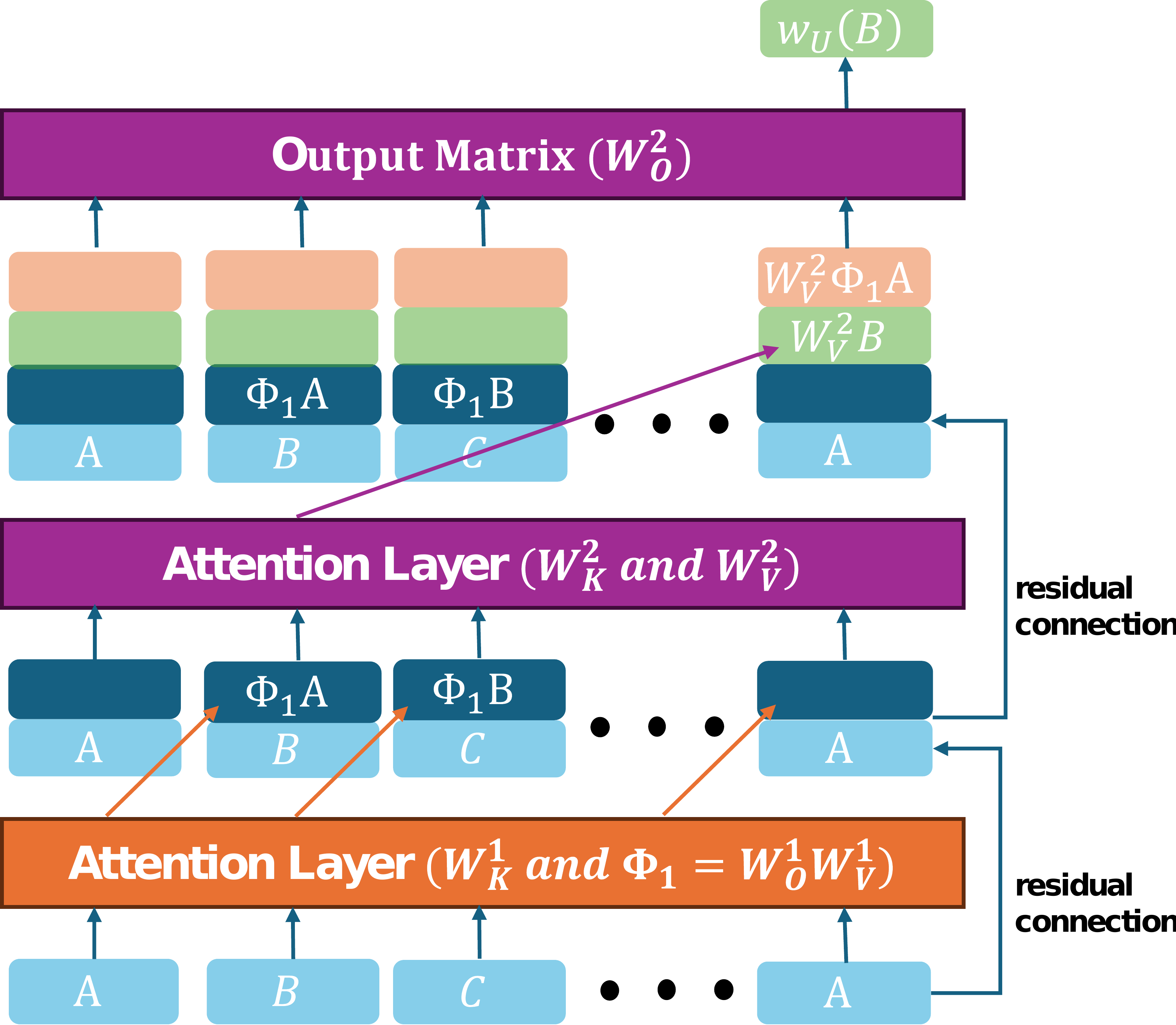}
    \caption{The visualization of induction head mechanism. The first attention layer copies the previous token information. Then, the current token matches the copied information to find the probable next token. }
    \label{fig:induction_head}
\end{figure}

\begin{figure*}[htbp]
    \centering
    \begin{subfigure}[b]{0.45\textwidth}
        \centering
        \includegraphics[width=\textwidth]{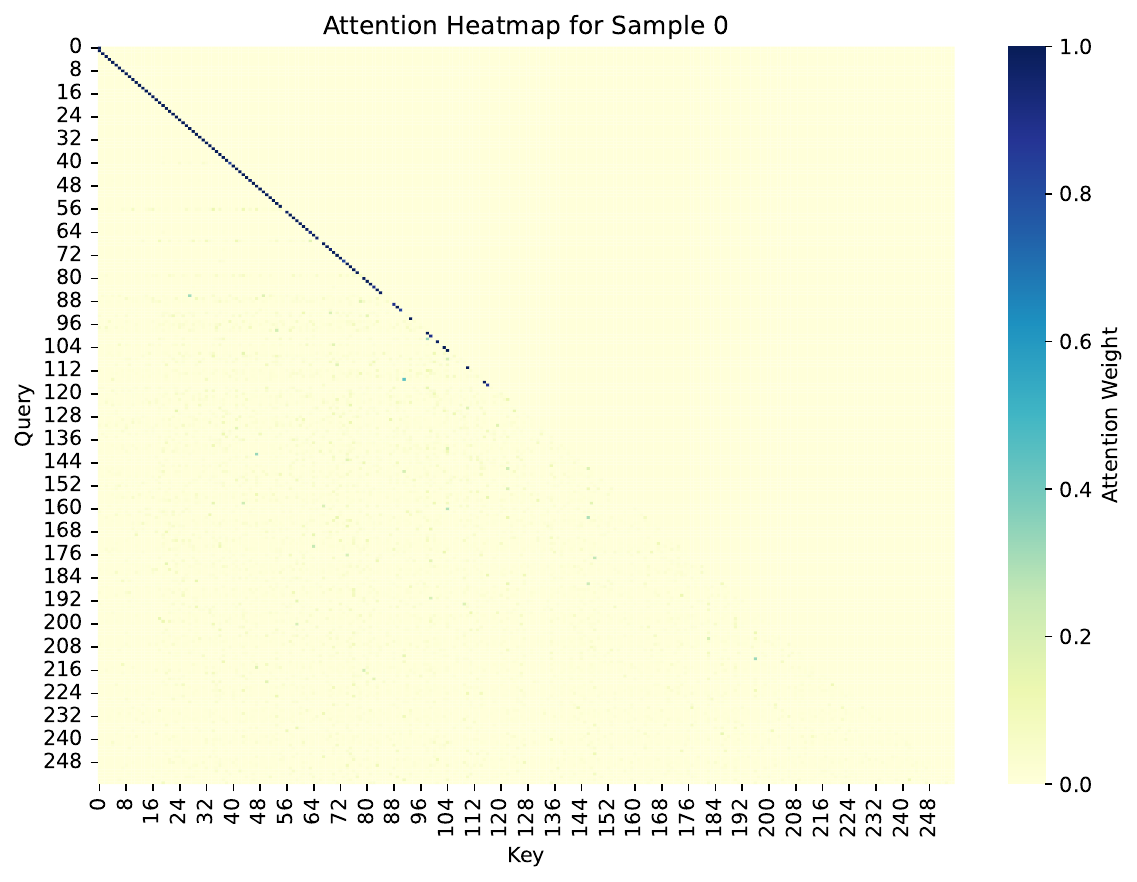}
        \caption{Attention patterns of $\operatorname{TF}_\text{APE}$}
        \label{fig:length_generalize_ape}
    \end{subfigure}
    \hfill
    \begin{subfigure}[b]{0.45\textwidth}
        \centering
        \includegraphics[width=\textwidth]{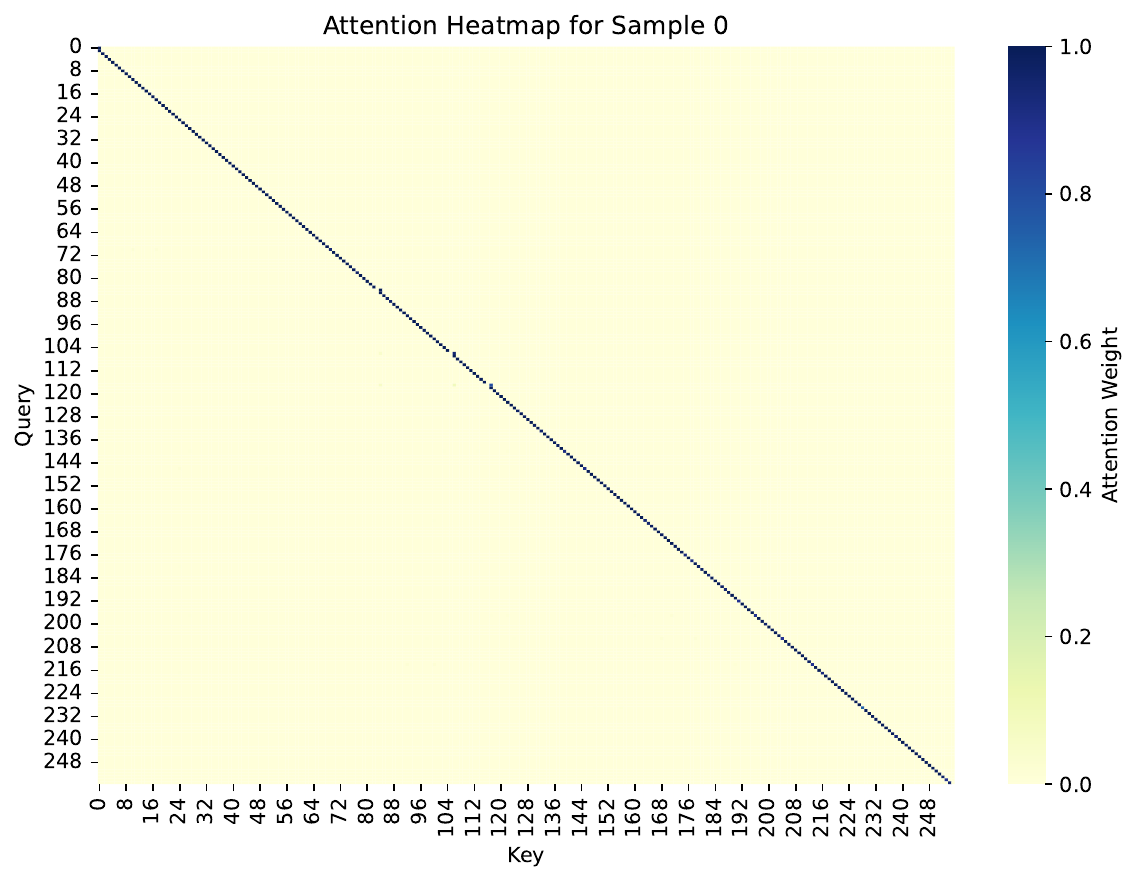}
        \caption{Attention patterns of $\operatorname{TF}_\text{RPE}$}
        \label{fig:length_generalize_rpe}
    \end{subfigure}
    \caption{Two-layer transformers with APE and RPE that are trained on sequences of length $128$ show different first-layer attention patterns for length-$256$ sequences. The previous token head of $\operatorname{TF}_\text{APE}$ do not function well for positions $t > 100$, while $\operatorname{TF}_\text{RPE}$ attends to previous tokens regardless of the positions.}
    \label{fig:length_generalize_ape_rpe}
\end{figure*}
\begin{table*}[tbp]
\centering
\begin{tabular}{l|llll}
                               & acc. ($t < 128$)    & acc. ($t < 256$)    & score ($t < 128$)   & score ($t < 256$)   \\ \hline
$\operatorname{TF}_\text{APE}$ & $0.8278 \pm 0.0000$ & $0.7925 \pm 0.0000$ & $0.5460 \pm 0.0007$ & $0.1979 \pm 0.0001$ \\ \hline
$\operatorname{TF}_\text{RPE}$ & $0.8727 \pm 0.0000$ & $0.8974 \pm 0.0000$ & $0.9513 \pm 0.0000$ & $0.9337 \pm 0.0001$ \\ \hline
\end{tabular}
\caption{The evaluation results under the change in sequence length from $128$ during training to $256$ during evaluation. Transformer with APE degrades its accuracy and attention score as the sequence length grows, while transformer with RPE shows a stable accuracy and score.}
\label{tab:length_generalize_result}
\end{table*}

\section{Additional related work}
\paragraph{In-context learning and induction head }
Bayesian inference is also a popular perspective for comprehending ICL. \citet{xie2021explanation} and \citet{wies2024learnability} investigated how a pretrained transformer implicitly discovers the underlying concept or latent task of a prompt by leveraging a mixture of distributions. An alternative line of work \citep{zhang2023and} relates the Bayesian model averaging to the attention mechanism. \citet{wang-etal-2023-label} studies how information is aggregated to label words in ICL.
In terms of two-layer transformer, \citet{chen2024unveiling} studied ICL on $n$-gram Markov chain, proving that gradient flow optimization under the cross-entropy loss leads to a model that exhibits a generalized induction head mechanism. \citet{edelman2024evolution} showed two-layer attention-only transformers learn induction heads that estimate the next token's conditional probability based on prior occurrences, enabling near Bayes-optimal performance. Similarly, \citet{nichani2024transformers} showed that two-layer transformers learn causal structure via gradient descent, suggesting that induction head is a special case of this structure.
\citet{ren-etal-2024-identifying} discovered a more generalized version of induction heads, which are referred to as semantic induction heads.

\paragraph{Length generalization and positional encoding}
Although we point out the oversight of in-context information by a transformer with APE, this can be considered as the failure of length generalization.
Since the introduction of the Transformer architecture \citep{vaswani2017attention}, there has been a wealth of research on positional encoding \citep{zhao2023length}. Originally, \citet{vaswani2017attention} adopted sinusoidal positional encoding that transforms the tokens' absolute positions. Subsequently, \citet{kiyono2021shape} and \citet{likhomanenko2021cape} proposed shifting absolute sinusoidal positional encoding during training to achieve shift invariance. \citet{wang2019encoding} extends word embedding as continuous functions with respect to token position, leading to smooth word representations. Meanwhile, it is demonstrated that RPE is superior to APE for longer sequence generalization in various tasks \citep{neishi2019relation,likhomanenko2021cape,huang2020improve, jelassi2023length}. Additionally, \citet{sinha2022curious} questioned whether models can learn the relative distance between words when equipped with APE. The first RPE was formulated by \citet{shaw2018self}, where a trainable encoding is added to the key before computing the dot product with the query. The transformer in our work also adopts this type of positional encoding, while we fix the positional encodings to randomly initialized vectors. A variety of different RPE methods were presented \citep{huang2020improve,ke2020rethinking} until now, but DeBERTa \citep{he2020deberta}, RoPE \citep{su2024roformer}, T5-Bias \citep{raffel2020exploring}, and ALiBi \citep{press2021train} are the most popular choices. \citet{ruoss2023randomized} suggested that randomizing absolute and relative positional encoding enhance the model's ability to generalize to input sequences of lengths not encountered during training, but they used the encoder-ony Transformer, which is different from ours.

\section{MLP as a key-value memory}
\label{app:key-value_memory}

\citet{geva2020transformer} discovered that the feed-forward layers in transformer-based language models function as key-value memory. In a transformer, the feed-forward layer is typically computed as
$$
    F(x) = W_2 (\operatorname{ReLU(W_1x)}),
$$
where $x \in \mathbb{R}^d$ is the input vector, and $W_1 \in \mathbb{R}^{d_h \times d}, W_2 \in \mathbb{R}^{d_o \times d_h}$.
According to \citet{geva2020transformer}, the rows of $W_1$ (keys) serve to detect specific patterns in the input sequence, while the columns of $W_2$ (values) have been shown to represent distributions over the output vocabulary.

\citet{bietti2024birth} uses a transformer architecture that uses a linear projection $W_F$ instead of an MLP layer. 
$$
    W_F = \sum_{v = 1}^{V}\sum_{u=1}^{V} \log{\pi_b(u \mid v)}w_U(u)w_E(v)^\top.
$$
Due to the near-orthogonality of the embedding vectors, this projection maps an input $x = w_E(v)$ to the output $\sum_{u=1}^{V} \log{\pi_b(u \mid v)}w_U(u)$. 
This enables the matrix $W_F$ to function as an associative memory that stores the pairs of an embedding vector and corresponding global knowledge.

Now, suppose we construct the matrices $W_1$ and $W_2$ as follows:
\begin{align}
    &W_1 = \begin{pmatrix} w_E(v_1)^\top \\ w_E(v_2)^\top \\ \vdots \\ w_E(v_V)^\top \end{pmatrix}, \\
    &W_2 = \begin{pmatrix}\sum_{u=1}^{V} \log{\pi_b(u \mid v_1)}w_U(u)^\top  \\ \quad\sum_{u=1}^{V} \log{\pi_b(u \mid v_2)}w_U(u)^\top  \\ \vdots \\ \sum_{u=1}^{V} \log{\pi_b(u \mid v_V)}w_U(u)^\top \end{pmatrix}^\top,
\end{align}
Each row of $W_1$ acts as a detector for an input pattern $w_E(v)$, such that only the corresponding entry is activated when $w_E(v_i)$ is provided as input. Each row of $W_2$, in turn, encodes global knowledge, i.e., the distribution $\pi_b$ over the output vocabulary conditioned on the corresponding input token.

We can easily find that the resulting computation matches exactly that of the linear projection $W_F$ under the near-orthgonality condition.

Therefore, this construction confirms that the feed-forward layer operates as a key-value memory.

\section{Learning of ideal transformer}
\label{sec:learning-of-ideal-transformer}
\subsection{Notation table}
We prepare tab.~\ref{tab:notation}, which provides a concise summary of the notations used throughout the appendices.
\begin{table*}[htbp]
\centering
\begin{tabular}{|l|l|}
\hline
\textbf{Symbol} & \textbf{Description} \\
\hline
$d$ & Dimensionality of the positional encoding \\
$\mathcal{V}$ & Vocabulary set\\
$V$ & Vocabulary size \\
$w_E(z_t)$ & Embedding of token $z_t$ \\
$r_{s - t}$ & Relative positional encoding expressing the relation between $s$-th token and current one \\
$\sigma$ & softmax function\\
$\Phi_1$ & product of two randomly initialized matrices $W_O^1$ and $W_V^1$\\
$t_q$ & Index of the first trigger token \\
$t_o$ & Index of the output token corresponding to the first trigger token, $t_o = t_q + 1$\\
$T$ & Index of the second trigger token, or the input length\\
$\lbrack N \rbrack$ & a set of natural number up to $N$, $\{1,2,\dots, N\}$\\
$l$ & cross entropy loss function\\
$\eta$ & learning rate\\
$\tau$ & constant, defined by $\mathbb{E}_{t_o}\lbrack \sum_{t=t_o}^{T} 1/ t\rbrack$ \\
$\alpha$ & constant, $\alpha = \eta / VT$\\
$\alpha'$ & constant, $\alpha = \eta\tau / VT$\\
$q$ & a variable representing the trigger token \\
$\pi_u$ & a uniform distribution from which $z_1$ is sampled. \\ 
$\pi_q$ & a uniform distribution by which the trigger token is determined.\\
$\pi_o$ & a uniform distribution by which the output token is determined. \\
$\pi_b$ & a uniform distribution from which $z_{2:T}$ is sampled conditioned on the previous token. \\
\hline
\end{tabular}
\caption{Table of Notations}
\label{tab:notation}
\end{table*}

\subsection{Setup}
\label{sec:setup}
Here we consider a simplified setting to analyze the influence of relative positional encoding in the training dynamics of our two-layer attention only transformer. Specifically,
\begin{enumerate}
    \item \textbf{Input Sequence}
    We consider an input sequence $z_{1:T} \in \mathcal{V}^T$ which has one trigger token $q$ appearing twice, and ends with the trigger token. In other words, let $t_q$ be the first occurrence position. Then, we have $z_{t_q} = z_T = q$. From the bigram generation rule in Sec.\ref{sec:bigram_model}, $z_{t_q + 1}$ and $z_{T+1}$ are the output token.  
    
    \item \textbf{Probability Distribution Assumptions}:
    The bigram is generated by uniform distributions over $[V]$ for any index $i$, i.e., $\pi_u$, $\pi_q$, $\pi_o$, and $\pi_b(\cdot\mid i)$ are uniformly distributed.

    \item \textbf{Simplification of Loss Function}:
    Consider the loss only for sequences of length $T$ where the final input token $z_T$ is the second occurrence of the trigger token, and the label $y$ is $z_{T+1}$, the corresponding output token.

    \item \textbf{Simplification for Learning Focus}:
    Our approach involves sequentially training $W_2^O$, $W_2^K$, and $W_1^K$ from top to bottom. We employ zero-initialization and carry out a single gradient descent step.

    \item \textbf{Initialization and Freezing}:
    To achieve our goal of showing that $W_O^2, W_K^2$ and $W_K^1$ learn to be an associative memory, we zero-initialize the three matrices. For other matrices such as $W_V^2, W_V^1$ and $W_O^1$ are randomly initialized from Gaussian distribution. We set $W_Q^1$ and $W_Q^2$ to identity matrix. All these matrices except for $W_O^2, W_K^2$ and $W_K^1$ are freezed during the training process.
\end{enumerate}

\subsection{Theoretical analysis of learning}
\begin{thm}[\textbf{formal}]
\label{thm:rpe-associative-memory-formal}
    Under the setup described in Sec.~\ref{sec:setup}, a two-layer attention only transformer with relative positional encoding learns associations of the associative memory transformer, i.e., the memories $W_O^2, W_K^2$ and $W_K^1$ can be written as the sum of the terms that constitute the associative memory transformer.    
    \begin{align}
        &W_K^1 \nonumber \\
        &= \sum_{k \in Q} \chi(k) w_E(k)r_{-1}^\top + (W_K^1)', \\
        &W_K^2 \nonumber\\
        &= \sum_{k\in Q} \psi(k) w_E(k)(W_O^1 W_V^1 w_E(k))^\top + (W_K^2)', \\
        &W_O^2 \nonumber\\
        &= \sum_{v=1}^{V} \omega(v) w_U(v)(W_V^2 w_E(v))^\top + (W_O^2)', 
    \end{align}
    where $\chi, \psi: \mathcal{V} \to \mathbb{R}$ and $\omega:\mathbb{N}\to \mathbb{R}$ denotes the learned score of each association, and $(W_K^1)', (W_K^2)'$ and $(W_O^2)'$ are the other associations.
\end{thm}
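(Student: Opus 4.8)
The plan is to compute one gradient-descent step for each of the three zero-initialized matrices in the order $W_O^2 \to W_K^2 \to W_K^1$, exploiting the fact that at each stage the matrices above have already been trained (so their updated values are substituted in) while the matrices below are still zero. The key observation is that with zero-initialization, the softmax in each attention layer reduces to a uniform average over the prefix $z_{1:t}$, which makes every intermediate representation a known linear combination of the (near-orthogonal) embeddings $w_E(z_s)$ and the random images $W_V^1 w_E(z_s)$, $W_V^2 x^{(1)}_s$. I would carry the analysis out in three steps.

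\emph{Step 1 ($W_O^2$).} With $W_K^2 = W_K^1 = 0$, we have $x^{(1)}_t = \Phi_1 \bar{x}^{(0)}_{1:t} + w_E(z_t)$ where $\bar{x}^{(0)}$ is the running average, and the second-layer attention is uniform, so the logits are $W_U(W_O^2 W_V^2 \bar{x}^{(1)}_{1:T} + x^{(1)}_T)$. Differentiating the cross-entropy loss on the label $y = z_{T+1} = o$ with respect to $W_O^2$ and using near-orthogonality (Assumption~\ref{assump:orthogonal}) to isolate the coefficient of each $w_U(v)(W_V^2 w_E(v))^\top$, I would show the gradient step produces exactly the term $\sum_v \omega(v) w_U(v)(W_V^2 w_E(v))^\top$ plus cross terms $(W_O^2)'$ coming from the random-matrix collisions; the dominant $\omega(o)$ picks up a factor from how often $w_E(o)$ appears inside $\bar{x}^{(1)}$, i.e.\ at positions $t_q+1$ and, after the first layer averaging, diffusely elsewhere.

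\emph{Step 2 ($W_K^2$).} Now substitute the updated $W_O^2$ (still $W_K^1 = 0$, so $x^{(1)}$ is unchanged). The second-layer attention score between query position $T$ and key position $s$ is $x^{(1)\top}_s W_K^2 x^{(1)}_T$; differentiating the loss, the gradient has the form $\sum_s (\text{error signal}_s)\, x^{(1)}_s x^{(1)\top}_T$. Expanding $x^{(1)}_s = \Phi_1 \bar x^{(0)}_{1:s} + w_E(z_s)$ and $x^{(1)}_T = \Phi_1 \bar x^{(0)}_{1:T} + w_E(q)$, the cross term $\Phi_1 w_E(z_{s-1})$ against $w_E(q)$ is what, when $z_{s-1}=q$ (i.e.\ $s = t_q+1$ or $s = T+1$, but only $s\le T$ contributes), yields the desired associative-memory term $w_E(k)(W_O^1 W_V^1 w_E(k))^\top$ for $k=q\in Q$, with a coefficient $\psi(q)$ governed by the attention weight $1/s$ at $s=t_q+1$; everything else is absorbed into $(W_K^2)'$. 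Here I would need the near-orthogonality of $\Phi_1 u_i$ (second half of Assumption~\ref{assump:orthogonal}) to kill the $\Phi_1(\cdot)$-vs-$\Phi_1(\cdot)$ and $\Phi_1(\cdot)$-vs-$w_E$ collisions up to $O(1/V)$.

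\emph{Step 3 ($W_K^1$).} Finally, substitute the updated $W_O^2$ and $W_K^2$ and compute the gradient with respect to $W_K^1$. This is the delicate step, because now the second-layer softmax is \emph{not} uniform — it concentrates (partially) on position $t_q+1$ through the $W_K^2$ term built in Step 2 — and the first-layer attention score $w_E(z_{s'})^\top W_K^1 (w_E(z_t) + r_{-t})\,$ wait, more precisely $(w_E(z_{s'}) + r_{s'-t})^\top W_K^1 w_E(z_t)$ enters nonlinearly through the softmax $\sigma$ that defines $x^{(1)}_t$. Linearizing the softmax around uniform (the zero-init point) gives a first-order term; chaining it through the already-trained second layer and the unembedding, then projecting onto $w_E(k) r_{-1}^\top$ via near-orthogonality, should produce $\chi(k) = \frac{\eta\alpha\alpha'}{T}\sum_t \frac{P(t_q = t-1)}{t} O(1) + (\text{lower order in }1/V, 1/T)$, matching the informal Theorem~\ref{thm:rpe-associative-memory}; the position-independence comes precisely from the fact that $r_{-1}$ is a single fixed vector rather than a position-indexed family, so the sum over $t$ is collected into the coefficient rather than appearing in the key.

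\textbf{Main obstacle.} The hard part will be Step 3: bookkeeping the chain rule through two attention layers where the first-layer attention is the variable being differentiated and the second-layer attention is already non-trivial. One must track which positions $s'$ in the first-layer average feed, via the second layer's now-peaked attention at $s=t_q+1$, into the logit for the correct label $o = z_{T+1}$, and then argue that after the near-orthogonality reductions only the $z_{s'} = z_{t-1}$ (previous-token) alignment survives with an $O(1)$ coefficient while the $z_{s'} = q$ alignment at the query/key level contributes only the $P(t_q = t)/t^2 \cdot \{O(1/V) + O(1/T)\}$ correction. Isolating these two contributions cleanly, and verifying that the error signal from the softmax derivative of the cross-entropy loss carries the stated $\alpha\alpha'$ and $1/T$ factors, is where the bulk of the careful (but routine) computation lives; I would organize it by first writing the full logit as an explicit function of $W_K^1$, then taking $\partial/\partial W_K^1$ at $W_K^1 = 0$, and only then applying Assumption~\ref{assump:orthogonal} termwise.
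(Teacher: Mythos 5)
Your proposal follows essentially the same route as the paper: sequential top-down one-step gradient descent on the zero-initialized $W_O^2$, $W_K^2$, $W_K^1$, using the fact that zero initialization makes each attention uniform, applying the associative-memory gradient lemma with near-orthogonality to isolate the target outer products, and linearizing the first-layer softmax around the uniform point for the $W_K^1$ step. The only pieces you leave implicit that the paper makes explicit are the auxiliary lemmas verifying that the model's predictions remain uniform ($\hat p_W(v\mid x)=1/V$) after each stage so the gradient formula applies, and the observation that at the $W_K^2$ stage the trigger's image $\Phi_1 w_E(q)$ enters the keys through the uniform first-layer average at every position $s\ge t_q$ (with weight $1/s$) rather than only at $s=t_q+1$ — but these are details you would recover in carrying out the computation you describe.
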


\begin{proof}
We will demonstrate that associative memory can be learned by performing one gradient descent step in a top-down manner, starting with $W_O^2$, followed by $W_K^2$, and then $W_K^1$.\\
\textbf{Step1: Training of $W_O^2$}\\
    We begin with the dynamics of $W_O^2$. The input to the second layer attention, $x^{(1)}_t$, is composed of the sum of the values from the first layer attention, $x^{(1,0)}_t$, and the values from the residual connection, $x^{(1,1)}_t$. Since $W_K^1$ is initialized to zero, the attention of the first layer is evenly distributed across all the tokens $z_{1:t}$ and we have 
    $$
        x^{(1,0)}_t = \frac{1}{t}\sum_{s=1}^{t}\Phi_1w_E(z_s),
    $$ 
    where $\Phi_1 = W_O^1W_V^1$. Additionally, the residual stream carries the token embedding:  
    \begin{equation*}
        x^{(1,1)}_t = w_E(z_t).
    \end{equation*}
    Since the matrix $W_K^2$ is also zero-initialized, the value of the second layer attention can be computed through
    \begin{equation*}
        x^{(2)}_t = W_V^2\left(\frac{1}{t}\sum_{s=1}^{t} x^{(1)}_s\right).
    \end{equation*}
    Now the logit prediction is given by $W_U(W_O^2x^{(2)}_T + x^{(1)}_T)$ with the residual connection. We note that when $d$ is large, thanks to the near-orthogonality, $W_Ux_T^{(1)} = W_U\left(\sum_{s=1}^{T}\Phi_1w_E(z_s)/T + w_E(z_T)\right)$ is negligible because $x_T^{(1)}$ does not contain $w_U(v)$ for any $v \in \mathcal{V}$. This enables us to use Lemma~\ref{lem:gradient_associative_memory} and one gradient step with learning-rate $\eta$ yields
    \begin{align}
        &W_O^2 \nonumber \\
        &= \frac{\eta}{V}\sum_{k=1}^{V} w_U(k)\mathbb{E}\lbrack x^{(2)}_T\mid y=k\rbrack^\top \nonumber \\
        &\quad - \frac{\eta}{V}\sum_{k=1}^{V} w_U(k)\mathbb{E}\left\lbrack \frac{\hat{p}_W(k\mid x)}{p(y=k)}x^{(2)}_T\right\rbrack^\top \nonumber \\
        &= \frac{\eta}{V}\sum_{k=1}^{V} w_U(k)(\mathbb{E}\lbrack x^{(2)}_T\mid y=k\rbrack - \mathbb{E}\lbrack x^{(2)}_T\rbrack)^\top. \label{eq:learned_W_O^2}
    \end{align}
    This transformation comes from the assumption that every token is sampled from a uniform distribution ($p(y=k) = 1/V)$ and all the logits are $0$ when $W_O^2 = O_{d\times d}$, which means that $\hat{p}_W(k\mid x) = 1/V$.
    
    Given that $y = k$, it holds that the first output token $z_{t_o} = k$. For $z_i$ ($i \neq t_o$), the uniform distribution can possibly generate any token, independently of the condition $y = k$. Since we have 
    $$
        x_T^{(2)} = \frac{1}{T}\sum_{t=1}^{T} W_V^2\left(w_E(z_t) + \frac{1}{t}\sum_{s=1}^{t} \Phi_1w_E(z_s) \right),
    $$
    the expression $\mathbb{E}\left[ x^{(2)}_T \mid y=k \right] - \mathbb{E}\left[ x^{(2)}_T \right]$ leaves only the terms related to $z_{t_o}$. This yields
    \begin{align}
    \label{eq:expectation_x_T^2}
        &\mathbb{E}\left[ x^{(2)}_T \mid y=k \right] - \mathbb{E}\left[ x^{(2)}_T \right] \nonumber \\
        &= \frac{1}{T}W_V^2(\mathbb{E}\lbrack w_E(z_{t_o})\mid y=k\rbrack - \mathbb{E}\lbrack w_E(z_{t_o})\rbrack) \nonumber \\ 
            &\quad +\frac{1}{T}\cdot\mathbb{E}\left\lbrack\sum_{t=t_o}^{T}W_V^2\frac{1}{t}\Phi_1w_E(z_{t_o})\mid y = k\right\rbrack \nonumber \\ 
            &\quad - \frac{1}{T}\cdot\mathbb{E}\left\lbrack\sum_{t=t_o}^{T}W_V^2\frac{1}{t}\Phi_1w_E(z_{t_o})\right\rbrack \\ 
        &= \frac{1}{T}W_V^2(w_E(k) - \mathbb{E}\lbrack w_E(z_{t_o})\rbrack) \nonumber \\
        &\quad + \frac{\tau}{T}W_V^2\Phi_1(w_E(k) - \mathbb{E}\lbrack w_E(z_{t_o})\rbrack),
    \end{align}
    where $\tau := \mathbb{E}\left\lbrack \sum_{t=t_o}^{T} \frac{1}{t}\right\rbrack$. Since each token is generated from a uniform distribution, the expected value of  $w_E(z_{t_o})$, conditioned on nothing, is given by:
    $$
        \mathbb{E}\left[ w_E(z_{t_o}) \right] = \frac{1}{V} \sum_{k=1}^{V} w_E(k).
    $$

    Combining eq.~\ref{eq:learned_W_O^2} and \ref{eq:expectation_x_T^2}, the near-orthogonality of Gaussian vectors gives us 
    \begin{align}
    \label{eq:W_O^2_trained}
        &w_U(k)^\top W_O^2W_V^2 w_E(j) \nonumber\\ 
        &\approx \frac{\eta}{VT}\mathbf{1}(k=j) + O\left(\frac{\eta}{V^2T}\right), \\
        &w_U(k)^\top W_O^2W_V^2 \Phi_1w_E(j)  \nonumber\\ 
        &\approx \frac{\eta\tau}{VT}\mathbf{1}(k=j) + O\left(\frac{\eta\tau}{V^2T}\right).
    \end{align}
    This is the same result as in the proof from \citet{bietti2024birth}.
    So far, the weight matrix $W_O^2$ is trained to perform as an associative memory so that 
    \begin{equation}
    \label{eq:relation_W_O^2}
        w_U(k)^\top W_O^2W_V^2 w_E(z_t) \approx \alpha \mathbf{1}\{z_t = k\},
    \end{equation} with a constant $\alpha = \frac{\eta}{TV}$. \\
\textbf{Step2: Training of $W_K^2$}\\
Since the training of $W_O^2$ is finished, we have Eq.~\ref{eq:W_O^2_trained}. 
When $W_K^1 = W_K^2 = O_{d\times d}$, Lemma~\ref{lem:zero-initialized-W_K^2-random-prediction} demonstrates that the next token prediction is distributed among all tokens, which means that $\hat{p}(v\mid x) = 1/V$ for any vocabulary $v$. According to Lemma~\ref{lem:gradient-W_K^1}, one step of gradient descent yields
\begin{align}
    &W_K^2 \nonumber \\
    &= \frac{\eta}{TN} \sum_{k,t} 
    \mathbb{E} \Bigl[ w_U(k)^\top \Phi_2 x_t^{(1)} 
    \cdot (x_t^{(1)} - \bar{x}^{(1)}) \notag \\
    &\qquad \times (x_T^{(1)})^\top \mid y = k \Bigr] 
    \notag \\
    &\quad - \frac{\eta}{TN} \sum_{k,t} 
    \mathbb{E} \Bigl[ w_U(k)^\top \Phi_2 x_t^{(1)} 
    \cdot (x_t^{(1)} - \bar{x}^{(1)}) \notag \\
    &\qquad \times (x_T^{(1)})^\top \Bigr],
    \label{eq:W_K^2_raw}
\end{align}
with $\bar{x}^{(1)} = \frac{1}{T}\sum_{t=1}^{T}x_t^{(0)}$. \\

We simplify our transformer architecture by setting  $x_t^{(1,1)}$ as the queries and values, and $x_t^{(1,0)}$ as the keys. Furthermore, we add the condition that the trigger token $z_{t_q} = j$ and rewrite eq.~\ref{eq:W_K^2_raw}. This conditioning leverages the fact that each token in $z_{t_q}$ is sampled from a uniform distribution.
Now we have
\begin{align*}
    &W_K^2 \\
    &= \frac{\eta}{TN} \sum_{k,t} \frac{1}{N} \sum_{j=1}^{N} 
    \mathbb{E} \Bigl[ w_U(k)^\top \Phi_2 x_t^{(1,1)} \notag \\
    &\qquad \times (x_t^{(1,0)} - \bar{x}^{(1,0)}) 
    (x_T^{(1,1)})^\top \mid y = k, z_{t_q} = j \Bigr] \notag \\
    &\quad - \frac{\eta}{TN} \sum_{k,t} \frac{1}{N} \sum_{j=1}^{N} 
    \mathbb{E} \Bigl[ w_U(k)^\top \Phi_2 x_t^{(1,1)} \notag \\
    &\qquad \times (x_t^{(1,0)} - \bar{x}^{(1,0)}) 
    (x_T^{(1,1)})^\top \mid z_{t_q} = j \Bigr],
\end{align*}
where $\bar{x}^{(1,0)} = \frac{1}{T}\sum_{t=1}^{T}x_t^{(1,0)}$. Using the formula \ref{eq:relation_W_O^2}, we obtain
\begin{align}
    &W_K^2 \nonumber\\
    &\approx \frac{\alpha \eta}{T N^2} 
    \sum_{j=1}^{N} \sum_{k=1}^{N} 
    \mathbb{E} \Biggl[ \sum_{t=1}^{T} \mathbbm{1}\{z_t = k\} \notag \\
    &\qquad \times (x_t^{(1,0)} - \bar{x}^{(1,0)}) w_E(z_T)^\top 
    \mid y = k, z_{t_q} = j \Biggr] \notag \\
    &\quad - \frac{\alpha \eta}{T N^2} 
    \sum_{j=1}^{N} \sum_{k=1}^{N} 
    \mathbb{E} \Biggl[ \sum_{t=1}^{T} \mathbbm{1}\{z_t = k\} \notag \\
    &\qquad \times (x_t^{(1,0)} - \bar{x}^{(1,0)}) w_E(z_T)^\top 
    \mid z_{t_q} = j \Biggr] \notag \\
    &= \frac{\alpha \eta}{T N^2} 
    \sum_{j=1}^{N} \sum_{k=1}^{N} 
    \Delta_{k,j} w_E(j)^\top, \label{eq:after_W_K^2}
\end{align}
where $\Delta_{k,j}$ is defined by 
\begin{align*}
    &\Delta_{k,j} \\
    &:= \mathbb{E}\left[ \sum_{t=1}^{T} \mathbbm{1}\{z_t = k\} \right.\\ 
    &\quad \left. \times (x_t^{(1,0)} - \bar{x}^{(1,0)}) \mid y = k, z_{t_q} = j \right] \\
    &\quad - \mathbb{E}\left[ \sum_{t=1}^{T} \mathbbm{1}\{z_t = k\} \right. \\ 
    &\quad \left. \times (x_t^{(1,0)} - \bar{x}^{(1,0)}) \mid z_{t_q} = j \right].
\end{align*}

The sum in $\Delta_{k,j}$ can be partitioned into three distinct groups: (i) $\Delta_{k,j}^{o}$, where $t$ is the index of the output token; (ii) $\Delta_{k,j}^{q}$, where $t$ corresponds to the indices of the trigger tokens; and (iii) $\Delta_{k,j}^{r}$, which includes all other cases. 
Mathematically, let $t_o \ge 2$ be a random variable, and $t_q = t_o - 1$, we write
\begin{align*}
    &\Delta_{k,j}^{o} \\
    &:= \mathbb{E}\left[ \mathbbm{1}\{z_{t_o} = k\}(x_{t_o}^{(1,0)} - \bar{x}^{(1,0)}) \mid y = k, z_{t_q} = j \right] \\
    &\quad - \mathbb{E}\left[ \mathbbm{1}\{z_{t_o} = k\}(x_{t_o}^{(1,0)} - \bar{x}^{(1,0)}) \mid z_{t_q} = j \right], \\
    &\Delta_{k,j}^{q} \\
    &:= \mathbb{E}\left[ \sum_{t \in \mathcal{T}_q} \mathbbm{1}\{z_t = k\}(x_t^{(1,0)} - \bar{x}^{(1,0)}) \mid y = k, z_{t_q} = j \right] \\
    &\quad - \mathbb{E}\left[ \sum_{t \in \mathcal{T}_q} \mathbbm{1}\{z_t = k\}(x_t^{(1,0)} - \bar{x}^{(1,0)}) \mid z_{t_q} = j \right], \\
    &\Delta_{k,j}^{r} \\
    &:= \mathbb{E}\left[ \sum_{t \in \mathcal{T}_r} \mathbbm{1}\{z_t = k\}(x_t^{(1,0)} - \bar{x}^{(1,0)}) \right. \\
    &\qquad \qquad \left. \mid y = k, z_{t_q} = j \right] \\
    &\quad - \mathbb{E}\left[ \sum_{t \in \mathcal{T}_r} \mathbbm{1}\{z_t = k\}(x_t^{(1,0)} - \bar{x}^{(1,0)}) \mid z_{t_q} = j \right],
\end{align*}
where $\mathcal{T}_q = \{t_q, T\}$ and $\mathcal{T}_r = \{1 \le i \le T \mid i \in \mathbb{N}, i \neq t_o, i \neq t_q, i \neq T\}$.

First, we will take a look at $\Delta_{k,j}^{o}$. Note that $\mathbb{E}\lbrack \mathbbm{1}\{z_{t_o} = k\} \mid y = k, z_{t_q} = j\rbrack = 1$ and $\mathbb{E}\lbrack \mathbbm{1}\{z_{t_o} = k\} \mid z_{t_q} = j\rbrack = 1/N$, so we find that
\begin{align}
    &\Delta_{k,j}^{o} \nonumber\\
    &= \left(1 - \frac{1}{N}\right)\mathbb{E}\left[ x_t^{(1,0)} - \bar{x}^{(1,0)} \mid y = k \right] \nonumber\\
    &= \frac{N-1}{N} \mathbb{E} \left[ \frac{1}{t_o} \sum_{s=1}^{t_o} \Phi_1 w_E(z_s) \right.  \nonumber\\
    &\qquad \left. - \frac{1}{T} \sum_{t=1}^{T} \frac{1}{t} \sum_{s=1}^{t} \Phi_1 w_E(z_s) \right] \nonumber \\
    &= \frac{N-1}{N} \sum_{i=1}^{N} a_{k,j,i} \Phi_1 w_E(i), \label{eq:Delta_k,j^o_final}
\end{align}
where $a_{k,j,i}$ is the coefficient of $\Phi_1 w_E(i)$ by calculating 
\begin{align*}
\mathbb{E}\left\lbrack\frac{1}{t_o}\sum_{s=1}^{t_o} \Phi_1w_E(z_s) - \frac{1}{T}\sum_{t=1}^{T}\frac{1}{t}\sum_{s=1}^{t}\Phi_1w_E(z_s)\right\rbrack. 
\end{align*}
The analysis of $a_{k,j,i}$ is done by the work \citep{bietti2024birth}: 
\begin{align*}
    \frac{1}{N}\sum_{k=1}^{N} a_{k,j,i} 
    \approx 
    \begin{cases}
        O\left(\frac{1}{N}\right), & \text{if } i \neq j, \\
        \Omega\left(\frac{1}{T}\right), & \text{otherwise}.
    \end{cases}
\end{align*}
Thus, we have the following approximation:
\begin{align}
    \label{eq:delta_k,j^o}
    &(\Phi_1 w_E(i))^\top \left( \frac{1}{N} \sum_{k=1}^{N} \Delta_{k,j}^{o} \right) \notag \\
    &\approx 
    \begin{cases}
        O\left(\frac{1}{N}\right), & \text{if } i \neq j, \\
        \Omega\left(\frac{1}{T}\right), & \text{if } i = j.
    \end{cases}
\end{align}

Similarly, we have $\Delta_{k,j}^{q} = O\left(\frac{1}{N}\right)$ and $\Delta_{k,j}^{r} = O\left(\frac{T}{N}\right)$ based on the discussion in \citet{bietti2024birth}. By combining these analysis with  Eq.~\ref{eq:delta_k,j^o}, we multiply $(\Phi_1w_E(i))^\top$ from left and $w_E(j)$ from right to Eq.~\ref{eq:after_W_K^2} and we establish that 
\begin{align}
\label{eq:relation_W_K^2}       
&(\Phi_1w_E(i))^\top W_K^2 w_E(j) \notag \\ 
&\approx \frac{\alpha\eta}{TN}\left\{\Omega\left(\frac{1}{T}\right)\mathbbm{1}\{i = j\} + O\left(\frac{T}{N}\right)\right\}
\end{align}

\textbf{Step3: Training of $W_K^1$} \\
    So far, the gradient descent step as to $W_O^2$ and $W_K^2$ gives them the ability to behave as an associative memory:
    \begin{align}
        w_U(k)^\top W_O^2W_V^2 w_E(z_t) &= \alpha \mathbf{1}\{z_t = k\}, \\
        (\Phi_1 w_E(z_t))^\top W_K^2 w_E(z_T) &= \alpha'\mathbf{1}\{z_t = z_T\}.
    \end{align}
    From Lemma~\ref{lem:zero-initialized-W_K^1-random-prediction}, we admit that the model predicts the next token almost randomly, i.e., $\hat{p}(v\mid x) = 1/V$ for any vocabulary $v$, when $W_K^0 = O_{d\times d}$. This enables us to employ Lemma~\ref{lem:gradient-W_K^1} and $W_K^1$ after one gradient descent step gives the following:
    \begin{align*}
        &W_K^1 \\
        &= \frac{\eta}{V}\sum_{k=1}^{N}\mathbb{E}\left\lbrack\frac{1}{T}\sum_{t=1}^{T}w_U(k)^\top\Phi_2 w_E(z_t)\right. \\
        &\quad \times \frac{1}{t}\sum_{s=1}^{t}(\Phi_1w_E(z_s))^\top W_K^2 \nonumber \\
        &\quad \left. \times w_E(z_T) q_{s,t} w_E(z_t)^\top  \mid y = k\right\rbrack \nonumber\\
        &\quad -\frac{\eta}{V}\sum_{i=k}^{N}\mathbb{E}\left\lbrack\frac{1}{T}\sum_{t=1}^{T}w_U(k)^\top\Phi_2 w_E(z_t)\right. \\
        &\quad \left. \times \frac{1}{t}\sum_{s=1}^{t}(\Phi_1w_E(z_s))^\top W_K^2 w_E(z_T) q_{s,t} w_E(z_t)^\top \right\rbrack \nonumber\\
        &\quad -\frac{\eta}{V}\sum_{k=1}^{N}\mathbb{E}\left\lbrack \left(w_U(k)^\top\Phi_2 \bar{x}_{1:T}\right) \right. \\ 
        &\quad \times \frac{1}{T}\sum_{t=1}^{T}\frac{1}{t}\sum_{s=1}^{t}(\Phi_1w_E(z_s))^\top W_K^2 w_E(z_T) \\
        &\quad \left. \times q_{s,t} w_E(z_t)^\top  \mid y = k\right\rbrack \nonumber\\
        &\quad +\frac{\eta}{V}\sum_{k=1}^{N}\mathbb{E}\left\lbrack \left(w_U(k)^\top\Phi_2 \bar{x}_{1:T}\right)\right. \\
        &\quad \times \frac{1}{T}\sum_{t=1}^{T}\frac{1}{t}\sum_{s=1}^{t}(\Phi_1w_E(z_s))^\top W_K^2 w_E(z_T) \nonumber \\
        &\quad \left. \times q_{s,t} w_E(z_t)^\top \right\rbrack,
    \end{align*}
    with $q_{s,t} = (w_E(z_s) + r_{s - t}) - (\bar{w}_E(z_{1:t}) + \bar{r}_{1 - t: 0})$ and $\bar{w}_E(z_{1:t}) = \frac{1}{t}\sum_{i = 1}^{t} w_E(z_i)$.
    Now, fix an arbitrary vocabulary $v \in \mathcal{V}$. Given that $w_U(k)^\top\Phi_2 w_E(z_t) = \alpha \mathbf{1}\{z_t = k\}$ and that $(\Phi_1 w_E(z_s))^\top W_K^2 w_E(z_T) = \alpha'\mathbf{1}\{z_s = z_T\}$, which equals $\alpha'$ when $s = t_q$ or $s = T$, we manipulate the expression:
    \begin{align*}
        &W_K^1 \\
        &= \frac{\eta}{V}\sum_{k=1}^{V}\mathbb{E}\left\lbrack\frac{1}{T}\sum_{t=1}^{T}\alpha \mathbf{1}\{z_t = k\}\right. \\
        &\quad  \times \frac{\alpha'}{t} (\mathbf{1}\{t_q \le t\}q_{t_q,t} + \mathbf{1}\{t = T\}q_{T,T}) \\
        &\quad \left. \times w_E(z_t)^\top   \mid y = k\right\rbrack \nonumber\\
        &\quad -\frac{\eta}{V}\sum_{k=1}^{V}\mathbb{E}\left\lbrack\frac{1}{T}\sum_{t=1}^{T}\alpha \mathbf{1}\{z_t = k\} \right. \\
        &\quad \times \frac{\alpha'}{t}(\mathbf{1}\{t_q \le t\}q_{t_q,t} + \mathbf{1}\{t = T\}q_{T,T}) \\
        &\quad \left. \times w_E(z_t)^\top \right\rbrack \nonumber\\
        &\quad -\frac{\eta}{V}\sum_{k=1}^{V}\mathbb{E}\left\lbrack \frac{1}{T}\sum_{t'= 1}^{T}\mathbf{1}\{z_t' = k\}\right. \\
        &\quad \times \frac{1}{T}\sum_{t=1}^{T}\frac{\alpha'}{t}(\mathbf{1}\{t_q \le t\}q_{t_q,t} + \mathbf{1}\{t = T\}q_{T,T})\\
        &\quad \left. \times w_E(z_t)^\top \mid y = k\right\rbrack \nonumber\\
        &\quad +\frac{\eta}{V}\sum_{k=1}^{V}\mathbb{E}\left\lbrack \frac{1}{T}\sum_{t'= 1}^{T}\mathbf{1}\{z_t' = k\}\right. \\
        &\quad \times \frac{1}{T}\sum_{t=1}^{T}\frac{\alpha'}{t}(\mathbf{1}\{t_q \le t\}q_{t_q,t} + \mathbf{1}\{t = T\}q_{T,T}) \\
        &\quad \left. \times w_E(z_t)^\top \right\rbrack. 
    \end{align*}
    With the near-orthogonality of the token embeddings, we write
    \begin{align}
    \label{eq:W_K^1w_E(v)}
        &W_K^1 w_E(v) \\
        &= \frac{\eta \alpha \alpha'}{VT} \sum_{k = 1}^{V} \sum_{t = 1}^{T} \frac{1}{t}(A_{t,k} - B_{t,k} - C_{t,k} + D_{t,k}) \nonumber \\
        &\quad + \frac{\eta \alpha \alpha'}{VT^2}\sum_{k = 1}^{V}(A'_{T,k} - B'_{T,k} - C'_{T,k} + D'_{T,k}).
    \end{align}
    where, we define each term as follows:
    \begin{align*}
        A_{t,k} &= \mathbb{E}\left\lbrack \mathbf{1}\{z_t = k\}\mathbf{1}\{t_q \le t\} \right. \\
        &\qquad \left. \times q_{t_q,t}\mathbf{1}\{z_t = v\} \mid y = k\right\rbrack,\\
        B_{t,k} &= \mathbb{E}\left\lbrack \mathbf{1}\{z_t = k\}\mathbf{1}\{t_q \le t\}q_{t_q,t}\mathbf{1}\{z_t = v\}\right\rbrack,\\
        C_{t,k} &= \mathbb{E}\left\lbrack \frac{1}{T}\sum_{t'= 1}^{T}\mathbf{1}\{z_t' = k\}\mathbf{1}\{t_q \le t\} \right. \\
        &\qquad \left. \times q_{t_q,t}\mathbf{1}\{z_t = v\} \mid y = k\right\rbrack,\\
        D_{t,k} &= \mathbb{E}\left\lbrack \frac{1}{T}\sum_{t'= 1}^{T}\mathbf{1}\{z_t' = k\}\mathbf{1}\{t_q \le t\} \right. \\
        &\qquad \left. \times q_{t_q,t}\mathbf{1}\{z_t = v\}\right\rbrack, \\
        A'_{T,k} &= \mathbb{E}\left\lbrack \mathbf{1}\{z_T = k\}q_{T,T}\mathbf{1}\{z_T = v\} \mid y = k\right\rbrack,\\
        B'_{T,k} &= \mathbb{E}\left\lbrack \mathbf{1}\{z_T = k\}q_{T,T}\mathbf{1}\{z_T = v\}\right\rbrack,\\
        C'_{T,k} &= \mathbb{E}\left\lbrack \frac{1}{T}\sum_{t'= 1}^{T}\mathbf{1}\{z_t' = k\}\right. \\
        &\qquad \left. \times q_{T,T}\mathbf{1}\{z_T = v\} \mid y = k\right\rbrack,\\
        D'_{T,k} &= \mathbb{E}\left\lbrack \frac{1}{T}\sum_{t'= 1}^{T}\mathbf{1}\{z_t' = k\}q_{T,T}\mathbf{1}\{z_T = v\}\right\rbrack.
    \end{align*}
    We will further split these four terms based on relative positional encoding and token embedding that are contained in $q_{t_q,t}$. For example, we decompose $A_{t,k}$ into $A_{t,k}^R$ and $A_{t,k}^z$ by defining
    \begin{align*}
        &A_{t,k}^R \\
        &= \mathbb{E}\left\lbrack \mathbf{1}\{z_t = k\} \mathbf{1}\{t_q \le t\} \left( r_{t_q - t} - \bar{r}_{1-t:0} \right) \right. \nonumber \\
        &\qquad \left. \times \mathbf{1}\{z_t = v\} \mid y = k \right\rbrack, \\
        &A_{t,k}^z \\
        &= \mathbb{E}\left\lbrack \mathbf{1}\{z_t = k\} \mathbf{1}\{t_q \le t\} \left( w_E(z_{t_q}) - \bar{w}_E(z_{1:t}) \right) \right. \nonumber \\
        &\qquad \left. \times \mathbf{1}\{z_t = v\} \mid y = k \right\rbrack.
    \end{align*}
\textbf{(i) when $v = k$} \\
    We begin with $A_{t,k}^R$:
    \begin{align*}
        &A_{t,k}^R \\
        &= \mathbb{E}\left\lbrack \mathbf{1}\{z_t = k\} \mathbf{1}\{t_q \le t\} \left( r_{t_q - t} - \bar{r}_{1-t:0} \right) \right. \nonumber \\
        &\qquad  \left. \times \mathbf{1}\{z_t = v\} \mid y = k \right\rbrack \\
        &= \sum_{s = 1}^{t} P(t_q = s \mid y = k) \\ 
            &\qquad \times \mathbb{E}\left\lbrack \mathbf{1}\{z_t = k\} \mid y = k, t_q = s\right\rbrack(r_{s - t} - \bar{r}_{1-t:0})\\
        &= P(t_q = t - 1)(r_{-1} - \bar{r}_{1-t:0}) \\
        &\quad + \mathbf{1}\{t = T\}P(t_q = t)(r_0 - \bar{r}_{1-t:0}) \\
        &\quad +\frac{\mathbf{1}\{t = T\}}{V}\sum_{s \in \lbrack t - 2 \rbrack \cup \{t\}} P(t_q = s)(r_{s-t} - \bar{r}_{1-t:0}) \\
        &= P(t_q = t - 1)(r_{-1} - \bar{r}_{1-t:0}) \\
        &\quad + \mathbf{1}\{t = T\}P(t_q = t)(r_0 - \bar{r}_{1-t:0}) + O\left(\frac{1}{V}\right).
    \end{align*}
    Here, we used Lemma \ref{lem:z_t|y=k,t_q=t-1} and \ref{lem:z_t|y=k,t_q=s} for the calculation of the expectation $\mathbb{E}\lbrack \mathbf{1}\{z_t = k\} \mid \cdot \rbrack$. \\
    Similarly, we will examine $B_{t,k}^R, C_{t,k}^R$ and $D_{t,k}^R$ as well.
    \begin{align*}
        &B_{t,k}^R \\
        &= \mathbb{E}\left\lbrack \mathbf{1}\{z_t = k\} \mathbf{1}\{t_q \le t\} \right. \nonumber \\
        &\qquad \left. \times \left( r_{t_q - t} - \bar{r}_{1-t:0} \right) \mathbf{1}\{z_t = v\} \right\rbrack \\
        &=  \sum_{s = 1}^{t} P(t_q = s) \\
        &\qquad \times \mathbb{E}\lbrack \mathbf{1}\{z_t = k\}\mid t_q = s\rbrack(r_{s - t} - \bar{r}_{1 - t:0}) \\
        &= \sum_{s = 1}^{t} \frac{P(t_q = s)}{V} (r_{s - t} - \bar{r}_{1 - t:0}) \\
        &= O\left(\frac{1}{V}\right).
    \end{align*}
    
    For the term $C_{t,k}^R$, we make use of Lemma \ref{lem:1/Tz_t'z_t|y=k,t_q=t-1} and \ref{lem:1/Tz_t'z_t|y=k,t_q=s}.
    \begin{align*}
        &C_{t,k}^R \\
        &= \mathbb{E}\left\lbrack \frac{1}{T} \sum_{t' = 1}^{T} \mathbf{1}\{z_{t'} = k\} \mathbf{1}\{t_q \le t\} \right. \nonumber \\
        &\qquad \left. \times \left( r_{t_q - t} - \bar{r}_{1-t:0} \right) \mathbf{1}\{z_t = v\} \mid y = k \right\rbrack\\
        &= \frac{1}{T}\sum_{s = 1}^{t} P(t_q = s) (r_{s - t} - \bar{r}_{1 - t:0}) \\
        &\quad \times \mathbb{E}\left\lbrack \left(\sum_{t' = 1}^{T} \mathbf{1}\{z_{t'} = k\}\right) \right. \\
        &\quad \left. \times \mathbf{1}\{z_t = v\}\mid y=k, t_q = s\right\rbrack \\
        &= \frac{P(t_q = t - 1)}{T} (r_{-1} - \bar{r}_{1 - t:0})\\
        &\quad \times \left\{(1 + \mathbf{1}\{t = T\}) + \frac{(T-3)\cdot \mathbf{1}\{t \neq T\}}{V - 1}\right\} \\
        &\quad + \frac{2P(t_q = t)}{T}(r_0 - \bar{r}_{1-t:0})\\
        &\quad + \sum_{s = 1}^{t - 2}\frac{P(t_q = s)}{T}O\left(\frac{1}{V}\right)\\
        &= \frac{1}{T} \left\{ P(t_q = t - 1) \left( 1 + \mathbf{1}\{t = T\} \right) \right. \nonumber \\
        &\qquad \left. \times \left( r_{-1} - \bar{r}_{1-t:0} \right) + 2 P(t_q = t) \left( r_0 - \bar{r}_{1-t:0} \right) \right. \nonumber \\
        &\qquad \left. + O\left(\frac{1}{V}\right) \right\}.
    \end{align*}

    As for $D_{t,k}^R$, the condition $y = k$ does not exist, and $z_t$ follows a uniform distribution. Therefore, we obtain
    \begin{align*}
        &D_{t,k}^R \\
        &= \mathbb{E}\left\lbrack \frac{1}{T} \sum_{t' = 1}^{T} \mathbf{1}\{z_{t'} = k\} \mathbf{1}\{t_q \le t\} \right. \\
        &\quad \times \left. \left( r_{t_q - t} - \bar{r}_{1-t:0} \right) \mathbf{1}\{z_t = v\} \right\rbrack \\
        &= \sum_{s = 1}^{t} \frac{P(t_q = s)}{T}(r_{s - t} - \bar{r}_{1-t:0})\\ 
        &\quad \times \mathbb{E}\left\lbrack \sum_{t' = 1}^{T} \mathbf{1}\{z_{t'} = k\}\mathbf{1}\{z_t = v\} \mid y = k, t_q = s\right\rbrack  \\
        &= O\left(\frac{1}{V}\right). \label{eq:D_t,k^R}
    \end{align*}

    Next, we will examine $A_{t,k}^z, B_{t,k}^z, C_{t,k}^z$ and $D_{t,k}^z$ in this order. Using lemma \ref{lem:z_tx_tq|y=k} and \ref{lem:z_tbarx|y=k}, we obtain.
    \begin{align*}
        &A_{t,k}^z \\ 
        &= \mathbb{E}\left\lbrack \mathbf{1}\{z_t = k\}\mathbf{1}\{t_q \le t\} \right. \\
        &\quad \left. \times (x_{t_q} - \bar{x}_{1:t})\mathbf{1}\{z_t = v\} \mid y = k\right\rbrack\\
        &=  \mathbb{E}\left\lbrack \mathbf{1}\{z_t = k\}\mathbf{1}\{t_q \le t\}x_{t_q} \mid y = k\right\rbrack \\
        &\quad - \mathbb{E}\left\lbrack \mathbf{1}\{z_t = k\}\mathbf{1}\{t_q \le t\}\bar{x}_{1:t} \mid y = k\right\rbrack \\
        &= P(t_q = t - 1)\mathbf{1}\{t = T\}w_E(k) \\
        &\quad + P(t_q = t)\mathbf{1}\{t = T - 1\})w_E(k) \\
        &\quad - \frac{P(t_q = t - 1)}{t}(1 + \mathbf{1}\{t = T\})w_E(k) \\
        &\quad - \frac{\mathbf{1}\{t = T - 1\})P(t_q = t)}{t}w_E(k) + O\left(\frac{1}{V}\right).
    \end{align*}
    We also get    
    \begin{align*}
        &B_{t,k}^z \\
        &= \mathbb{E}\left\lbrack \mathbf{1}\{z_t = k\}\mathbf{1}\{t_q \le t\}(x_{t_q} - \bar{x}_{1:t})\mathbf{1}\{z_t = v\}\right\rbrack \\
        &= \mathbb{E}\left\lbrack \mathbf{1}\{z_t = k\}\mathbf{1}\{t_q \le t\}x_{t_q}\right\rbrack \\
        &\quad - \mathbb{E}\left\lbrack \mathbf{1}\{z_t = k\}\mathbf{1}\{t_q \le t\}\bar{x}_{1:t}\right\rbrack \\
        &= P(z_t = k)\mathbb{E}\lbrack \mathbf{1}\{t_q \le t\}x_{t_q} \mid z_t = k \rbrack \\
        &\quad - P(z_t = k)\mathbb{E}\left\lbrack \mathbf{1}\{t_q \le t\}\bar{x}_{1:t} \mid z_t = k\right\rbrack  \\
        &= O\left(\frac{1}{V}\right).
    \end{align*}
    Now we move to $C_{t,k}^z$. From lemma \ref{lem:1/Tz_t'x_t_qz_t|y=k} and \ref{lem:1/Tz_t'barxz_t|y=k}, it follows that
    \begin{align*}
        &C_{t,k}^z \\
        &= \mathbb{E}\left\lbrack \frac{1}{T} \left( \sum_{t' = 1}^{T} \mathbf{1}\{z_{t'} = k\} \right) \mathbf{1}\{t_q \le t\} \right. \\
        &\qquad \times \left. (x_{t_q} - \bar{x}_{1:t}) \mathbf{1}\{z_t = v\} \mid y = k \right\rbrack\\
        &= \mathbb{E}\left\lbrack \frac{1}{T}\left(\sum_{t' = 1}^{T}\mathbf{1}\{z_{t'} = k\}\right)\right. \\
        &\qquad \left. \times \mathbf{1}\{t_q \le t\}x_{t_q}\mathbf{1}\{z_t = k\} \mid y = k\right\rbrack \\
        &\quad - \mathbb{E}\left\lbrack \frac{1}{T}\left(\sum_{t' = 1}^{T}\mathbf{1}\{z_{t'} = k\}\right)\right. \\
        &\qquad \left. \times \mathbf{1}\{t_q \le t\}\bar{x}_{1:t}\mathbf{1}\{z_t = k\} \mid y = k\right\rbrack \\
        &= w_E(k)\left\{\frac{2P(t_q = t - 1)}{T}\mathbf{1}\{t = T\} \right.\\
        &\quad + \frac{2P(t_q = t)}{T}\mathbf{1}\{t = T - 1\} \\ 
        &\quad  - \frac{P(t_q = t - 1)}{T}\frac{4\cdot\mathbf{1}\{t = T\} + \mathbf{1}\{t \neq T\}}{t} \\
        &\quad \left. - \frac{P(t_q = t)}{T}\frac{2\cdot\mathbf{1}\{t = T - 1\}}{t}\right\} \\
        &\quad  + \frac{1}{T}\cdot O\left(\frac{1}{V}\right).
    \end{align*}

    For $D_{t,k}^z$, we have 
    \begin{align*}
         &D_{t,k}^z \\
         &= \mathbb{E}\left\lbrack \frac{1}{T}\left(\sum_{t' = 1}^{T}\mathbf{1}\{z_{t'} = k\}\right) \right. \\
         &\quad \left. \times \mathbf{1}\{t_q \le t\}(x_{t_q} - \bar{x}_{1:t})\mathbf{1}\{z_t = v\} \right\rbrack \\
         &= P(z_t = v)\mathbb{E}\left\lbrack \frac{1}{T}\left(\sum_{t' = 1}^{T}\mathbf{1}\{z_{t'} = k\}\right)\right. \\
         &\quad \left. \times \mathbf{1}\{t_q \le t\}(x_{t_q} - \bar{x}_{1:t}) \mid z_t = v\right\rbrack \\ 
         &= O\left(\frac{1}{V}\right),
    \end{align*}
    because of the term $\mathbf{1}\{z_t = v\}$ under no condition.
    So far, we have calculated $A_{t,k},B_{t,k}, C_{t,k}$ and $D_{t,k}$ for $v = k$. Now, we turn our attention to $v \neq k$. \\
\textbf{(ii) when $v \neq k$}\\
    In this case, it is guaranteed that $\mathbf{1}\{z_t = k\} \neq \mathbf{1}\{z_t = v\}$, and thus, we have $A_{t,k} = B_{t,k} = 0$. We only need to examine $C_{t,k}$ and $D_{t,k}$. We use the fact that the token $z_t$ has to be sampled uniformly from a potentially reduced vocabulary space with some words excluded depending on the position of $t$, we obtain   
    \begin{align*}
        &C_{t,k} \\
        &= \mathbb{E}\left\lbrack \frac{1}{T}\sum_{t'= 1}^{T}\mathbf{1}\{z_t' = k\}\mathbf{1}\{t_q \le t\} \right. \\
        &\qquad \left. \times q_{t_q, t}\mathbf{1}\{z_t = v\} \mid y = k\right\rbrack\\
        &= P(z_t = v \mid y = k) \\
        &\quad \times \mathbb{E}\left\lbrack \frac{1}{T}\sum_{t'= 1}^{T}\mathbf{1}\{z_t' = k\}\mathbf{1}\{t_q \le t\} \right. \\
        &\qquad \left. \times q_{t_q, t} \mid y = k, z_t = v\right\rbrack \\
        &= O\left(\frac{1}{V}\right),
    \end{align*}
    and
    \begin{align*}
        &D_{t,k} \\
        &= \mathbb{E}\left\lbrack \frac{1}{T}\left(\sum_{t' = 1}^{T}\mathbf{1}\{z_{t'} = k\}\right)\right. \\
        &\quad \left.\times \mathbf{1}\{t_q \le t\}(x_{t_q} - \bar{x}_{1:t})\mathbf{1}\{z_t = v\} \right\rbrack \\
        &= P(z_t = v \mid y = k) \\
        &\quad \times \mathbb{E}\left\lbrack \frac{1}{T}\left(\sum_{t' = 1}^{T}\mathbf{1}\{z_{t'} = k\}\right)\mathbf{1}\{t_q \le t\} \right. \\
        &\qquad \left. \times q_{t_q, t}\mathbf{1}\{z_t = v\} \right\rbrack \\
        &= O\left(\frac{1}{V}\right).
    \end{align*}
    
    So far, we have seen the content of $A_{t,k}, B_{t,k}, C_{t,k}$ and $D_{t,k}$. Now, we will shift our focus to $A'_{T,k}, B'_{T, k}, C'_{T,k}$ and $D'_{T,k}$. We recall that
    \begin{align*}
        q_{s,t} = (w_E(z_s) + r_{s - t}) - (\bar{w}_E(z_{1:t}) + \bar{r}_{1 - t: 0}).
    \end{align*}
    By substituting $s = T$ and $t = T$, we have
    \begin{align*}
        q_{T,T} = (w_E(z_T) + r_0) - (\bar{w}_E(z_{1:T}) + \bar{r}_{1-T:0}).
    \end{align*}
    Also, $\mathbf{1}\{z_T = v\}$ implies that the trigger token is $v$. we use Lemma \ref{lem:A'_t,k} and obtain the following:
    \begin{align*}
        &A'_{T,k} \\
        &= \mathbb{E}\left\lbrack \mathbf{1}\{z_T = k\}q_{T,T}\mathbf{1}\{z_T = v\} \mid y = k\right\rbrack \\
        &= \begin{cases}
            P(t_q = T - 1)\left(r_0 - \bar{r}_{1-T:0}\right) \\
            \quad + P(t_q = T - 1)\frac{T - 2}{T}w_E(k) \\
            \quad + O\left(\frac{1}{V}\right) & \text{if $v = k$}, \\
            0 & \text{otherwise}.
        \end{cases}
    \end{align*}
    Similarly to the argument of $B_{t,k}$ and $D_{t,k}$, the trigger token is chosen uniformly, and thus, we have $P(z_T = v) = \frac{1}{V}$. This gives 
    \begin{align*}
        &B'_{T,k} \\
        &= \mathbb{E}\left\lbrack \mathbf{1}\{z_T = k\}q_{T,T}\mathbf{1}\{z_T = v\}\right\rbrack \\
        &= \begin{cases}
            O\left(\frac{1}{V}\right) & \text{if $v = k$}, \\
            0 & \text{otherwise},
        \end{cases}
    \end{align*}
    and 
    \begin{align*}
        &D'_{T,k} \\
        &= \mathbb{E}\left\lbrack \frac{1}{T}\sum_{t' = 1}^{T} \mathbf{1}\{z_{t'} = k\}q_{T,T}\mathbf{1}\{z_T = v\}\right\rbrack \\
        &= \begin{cases}
            O\left(\frac{1}{V}\right) & \text{if $v = k$}, \\
            0 & \text{otherwise},
        \end{cases}
    \end{align*}
    Finally, according to Lemma \ref{lem:C'_t,k}, it is established that 
    \begin{align*}
        &C'_{T, k} \\
        &= \mathbb{E}\left\lbrack \frac{1}{T}\sum_{t' = 1}^{T} \mathbf{1}\{z_{t'} = k\}q_{T,T}\mathbf{1}\{z_T = v\} \mid y = k \right\rbrack \\
        &= \begin{cases}
            \frac{2P(t_q = T - 1)}{T}\left(r_0 - \bar{r}_{1-T:0} \right) \\
            \quad + \frac{2P(t_q = T - 1)}{T}\frac{T - 2}{T}w_E(k) \\
            \quad + \frac{1}{T}\cdot O\left(\frac{1}{V}\right) & \text{if $v = k$}, \\
            \frac{1}{T}O\left(\frac{1}{V}\right) & \text{otherwise}.
        \end{cases}
    \end{align*}
    To sum up, we get the following associative memory behaviors by examining Eq.~\ref{eq:W_K^1w_E(v)}. We point out that $\bar{r}_{1-t:0}$ also contains $r_{-1}$ and $r_0$.
    \begin{align*}
        &r_{-1}W_K^1w_E(v) \\
        &= \frac{\eta\alpha\alpha'}{T}\sum_{t=1}^{T} \frac{P(t_q = t - 1)}{t} \\
        &\qquad \times \left(1 - \frac{1+\mathbf{1}\{t = T\}}{T}\right)\left(1-\frac{1}{t}\right) \\
        &\quad  - \frac{\eta\alpha\alpha'}{T}\sum_{t=1}^{T} \frac{P(t_q = t)}{t^2} \\
        &\qquad \times \left(\mathbf{1}\{t = T\} - \frac{2}{T}\right) \\
        &\quad - \frac{\eta\alpha\alpha'}{VT^2}\sum_{k=1}^{V} \frac{P(t_q = T - 1)}{t}\left(1 - \frac{2}{T}\right), \\
    \end{align*}
    and 
    \begin{align*}
        &r_0W_K^1w_E(v) \\ 
        &= \frac{\eta\alpha\alpha'}{T}\sum_{t=1}^{T} \frac{P(t_q = t)}{t} \\
        &\qquad \times \left(\mathbf{1}\{t = T\} - \frac{2}{T}\right)\left(1-\frac{1}{t}\right) \\
        &\quad - \frac{\eta\alpha\alpha'}{T}\sum_{t=1}^{T} \frac{P(t_q = t - 1)}{t^2} \\
        &\qquad \times \left(1 - \frac{1 + \mathbf{1}\{t = T\}}{T}\right) \\
        &\quad + \frac{\eta\alpha\alpha'}{VT^2}\sum_{k=1}^{V} P(t_q = T - 1)\left(1 - \frac{2}{T}\right).
    \end{align*}
    Also, for any $j \in \mathcal{V}$, we have
    \begin{equation*}
        w_E(j)W_K^1w_E(v) = O\left(\frac{1}{V}\right).
    \end{equation*}
    The proof is finished.
\end{proof}

\section{Other theoretical proofs}

\begin{lem}[Lemma 2, \citet{bietti2024birth}]
\label{lem:gradient_associative_memory}    
Given a data distribution $p$ over pairs $(x, y) \in \mathbb{R}^d \times \lbrack V\rbrack$ and a fixed matrix $W_U \in \mathbb{R}^{d\times d}$,  minimizing the loss $L(W) = \mathbb{E}_{(x, y) \sim p} \left[ \ell \left( y, W_U W x \right) \right]$ gives the following gradient:
\begin{equation*}
    \nabla_W L(W) = \sum_{v = 1}^{V} p(y=v)w_U(v)(\hat{\mu}_v - \mu_v)^\top,
\end{equation*}
where $\mu_k = \mathbb{E}\lbrack x\mid y=v\rbrack$ and $\hat{\mu}_v = \mathbb{E}_x\lbrack \frac{\hat{p}_W(v\mid x)}{p(y=v)}x\rbrack$, and $\hat{p}_W(v\mid x)$ is the probability of generating $v$. 
\end{lem}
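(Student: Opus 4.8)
The plan is to differentiate the per-sample cross-entropy loss through the linear--softmax head in two stages---first with respect to the logit vector, then with respect to $W$ by the chain rule---and finally to take the expectation and recognize the two resulting population quantities as $p(y=v)\hat\mu_v$ and $p(y=v)\mu_v$. No high-dimensional or concentration argument is needed, since the statement is an exact identity for the population gradient.

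First I would fix a sample $(x,y)\sim p$ and write the logit vector $\xi := W_U W x$, whose $v$-th entry is $\xi_v = w_U(v)^\top W x$, so that $\ell(y,\xi) = -\xi_y + \log\sum_{v'} e^{\xi_{v'}}$. Differentiating in $\xi$ gives $\partial \ell/\partial \xi_v = \hat p_W(v\mid x) - \mathbf{1}\{v=y\}$, where $\hat p_W(v\mid x) = e^{\xi_v}\big/\sum_{v'} e^{\xi_{v'}}$ is precisely the model's predictive probability for $v$. Since $\partial \xi_v/\partial W = w_U(v)\,x^\top$, the chain rule yields the per-sample gradient $\nabla_W\,\ell(y, W_U W x) = \sum_{v=1}^{V}\big(\hat p_W(v\mid x) - \mathbf{1}\{v=y\}\big)\,w_U(v)\,x^\top$.

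Next I would take the expectation over $(x,y)\sim p$ and exchange it with the finite sum over $v$, giving $\nabla_W L(W) = \sum_{v} w_U(v)\,\big(\mathbb{E}_x[\hat p_W(v\mid x)\,x] - \mathbb{E}_{(x,y)}[\mathbf{1}\{v=y\}\,x]\big)^\top$. For the second term, conditioning on the label gives $\mathbb{E}[\mathbf{1}\{v=y\}\,x] = p(y=v)\,\mathbb{E}[x\mid y=v] = p(y=v)\,\mu_v$. For the first term I would multiply and divide by $p(y=v)$ to recognize the importance-reweighted mean, $\mathbb{E}_x[\hat p_W(v\mid x)\,x] = p(y=v)\,\mathbb{E}_x\!\big[\tfrac{\hat p_W(v\mid x)}{p(y=v)}\,x\big] = p(y=v)\,\hat\mu_v$. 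Substituting both expressions back produces $\nabla_W L(W) = \sum_{v=1}^{V} p(y=v)\,w_U(v)(\hat\mu_v - \mu_v)^\top$, which is the claim.

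The computation is essentially routine; the only points that need care are the bookkeeping of the outer-product derivative $\partial\xi_v/\partial W = w_U(v)x^\top$ so that the gradient lies in the correct matrix space with the transpose in the right place, and the reweighting step that converts the raw expectation $\mathbb{E}_x[\hat p_W(v\mid x)\,x]$ into the normalized quantity $\hat\mu_v$ used in the statement. If one wishes to be fully rigorous, one should also note that interchanging $\mathbb{E}$ with the (finite) sum over $v$ and with the gradient is justified by dominated convergence under mild integrability of $x$.
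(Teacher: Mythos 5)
Your derivation is correct and complete: the two-stage chain rule through the softmax logits, the per-sample gradient $\sum_v(\hat p_W(v\mid x)-\mathbf{1}\{v=y\})w_U(v)x^\top$, and the identification of the two expectations with $p(y=v)\hat\mu_v$ and $p(y=v)\mu_v$ are exactly the standard argument. The paper does not reprove this lemma (it imports it verbatim as Lemma~2 of \citet{bietti2024birth}), and your proof matches the derivation given in that reference.
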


\begin{lem}
\label{lem:zero-initialized-W_K^1-random-prediction}
Consider a two-layer attention only transformer that meets the initialization condition in Sec.~\ref{sec:setup}, and the input sequence of length $T$ ends with the trigger token $z_T = q$ at its second occurrence. Even after the gradient descent step of $W_O^2$ and $W_K^2$, the prediction $\hat{p}(v\mid z_{1:T}) = 1/V$ for any $v \in \mathcal{V}$. 
\end{lem}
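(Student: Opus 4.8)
The plan is to track the pre‑softmax logits $\xi_v := (W_U x_T^{(2)})_v$ directly and show they are \emph{uniformly small}, so that the induced softmax distribution is flat. Since $W_K^1$ has not yet been updated and is still $O_{d\times d}$, the first‑layer attention at every position is uniform, so $x_t^{(1)} = w_E(z_t) + \tfrac1t\sum_{s=1}^{t}\Phi_1 w_E(z_s)$, exactly as in Step~1 of the proof of Theorem~\ref{thm:rpe-associative-memory-formal}. Because the network here is attention‑only, Eq.~\ref{eq:2nd-attention} gives $x_T^{(2)} = W_O^2 W_V^2 h + x_T^{(1)}$, where $h := \sum_{s=1}^{T} a_s x_s^{(1)}$ is the second‑layer attention output with weights $a_s \ge 0$, $\sum_s a_s = 1$. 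The first observation is that the detailed second‑layer attention pattern is irrelevant: each $x_s^{(1)}$ has $\|x_s^{(1)}\| = O(1)$ (it is the sum of a unit vector and an average of near‑unit vectors), so the convex combination $h$ also satisfies $\|h\| = O(1)$.

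Next I would bound the two pieces of $\xi_v = w_U(v)^\top W_O^2 W_V^2 h + w_U(v)^\top x_T^{(1)}$ separately. For the residual term, $x_T^{(1)}$ lies in the span of $\{w_E(\cdot),\ \Phi_1 w_E(\cdot)\}$, each of which is near‑orthogonal to every unembedding vector $w_U(v)$ by Assumption~\ref{assump:orthogonal}; hence $w_U(v)^\top x_T^{(1)}$ is negligible, exactly as already noted for $W_U x_T^{(1)}$ in Step~1. For the first piece, I use the form of $W_O^2$ after one gradient step from zero initialization, $W_O^2 = \tfrac{\eta}{V}\sum_{k=1}^{V} w_U(k)(\mu_k - \bar\mu)^\top$ with $\mu_k = \mathbb{E}[x_T^{(2)}\mid y=k]$ and $\bar\mu = \mathbb{E}[x_T^{(2)}]$ (Eq.~\ref{eq:learned_W_O^2}). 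The explicit computation in Eq.~\ref{eq:expectation_x_T^2} shows $\|\mu_k - \bar\mu\| = O(\tau/T)$, and near‑orthogonality of $(w_U(k))_k$ then gives $\|W_O^2\|_{\mathrm{op}} = O\!\bigl(\eta\tau/(\sqrt{V}\,T)\bigr)$ (more crudely, $O(\eta)$). Since $\|W_V^2\|_{\mathrm{op}} = O(1)$ and $\|h\| = O(1)$, we conclude $|\xi_v| = O(\eta\tau/T)$ uniformly in $v$ and in the input $z_{1:T}$.

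Finally, feeding a logit vector whose entries are all $O(\eta)$ into the softmax yields $\hat p(v\mid z_{1:T}) = e^{\xi_v}/\sum_{u} e^{\xi_u} = \tfrac1V\bigl(1 + \xi_v - \tfrac1V\sum_u \xi_u\bigr) + O(\eta^2) = \tfrac1V + O(\eta)$, so to the order retained in the single‑gradient‑step analysis the prediction is $\hat p(v\mid z_{1:T}) = 1/V$ for every $v$. The one genuine subtlety — and the reason both the near‑orthogonality bookkeeping and the operator‑norm estimate on $W_O^2$ are needed — is that the logits are \emph{not} exactly equal: they differ at order $\eta\tau/T$ because $\mu_v - \bar\mu$ depends on $v$. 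Thus the claimed equality must be read up to higher‑order‑in‑$\eta$ corrections, which is precisely the sense in which it is subsequently used, namely when $\hat p_W(k\mid x)/p(y=k)$ is replaced by $1$ in the gradient formula for $W_K^1$ via Lemma~\ref{lem:gradient_associative_memory} and Lemma~\ref{lem:gradient-W_K^1}; the $O(\eta)$ discrepancy there contributes only an $O(\eta^2)$ term to the one‑step update and is consistently dropped.
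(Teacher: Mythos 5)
Your proposal is correct, and it reaches the conclusion by a genuinely different route from the paper. The paper's proof works through the second-layer attention explicitly: it plugs the trained $W_K^2$ into the softmax, finds that the scores are $\tfrac{\eta\tau}{TV}\cdot\tfrac1t$ on positions $t \ge t_o-1$ and zero elsewhere, argues that for large $V$ these are negligible so the attention is approximately uniform, writes $x_T^{(2)} =_{w_U} \tfrac{\alpha}{T}\sum_t w_U(z_t)$, and then — because for a \emph{fixed} sequence these coefficients are proportional to the empirical token counts and hence not literally equal — takes an expectation over $z_{1:T}$ (with the trigger fixed) to show the $w_U(v)$-coefficients coincide. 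You instead make the second-layer attention pattern irrelevant: any convex combination $h$ of the $x_s^{(1)}$ has $O(1)$ norm, and the uniformity of the prediction comes solely from the operator-norm bound $\|W_O^2\|_{\mathrm{op}} = O(\eta\tau/(\sqrt V\,T))$ inherited from the one-step update, giving $|\xi_v| = O(\eta\tau/T)$ pointwise in the input, after which linearizing the softmax yields $\hat p(v\mid z_{1:T}) = \tfrac1V + O(\eta)$. Your version buys two things: it avoids computing the second-layer attention scores at all, and it delivers a per-sequence statement rather than an in-expectation one — which is in fact the granularity actually consumed downstream, since the substitution $\hat p_W(v\mid x)=1/V$ in Lemma~\ref{lem:gradient_associative_memory} and Lemma~\ref{lem:gradient-W_K^1} sits inside an expectation over $x$ and the $O(\eta)$ discrepancy only perturbs the one-step update at order $\eta^2$. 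What the paper's more explicit computation buys in exchange is the intermediate fact (reused in Step~2 of Theorem~\ref{thm:rpe-associative-memory-formal}) that the second-layer attention is spread evenly, which your argument deliberately does not establish; but for the lemma as stated, your proof is complete and, if anything, tighter about where the approximation enters.
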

    
\begin{proof}
    From the way the transformer is initialized, we have $W_K^1 = O_{d\times d}$. Therefore, the output of the first attention block for any $t$ is:
    \begin{equation*}
        x_t^{(1)} = w_E(z_t) + \frac{1}{t}\sum_{s = 1}^{t} \Phi_1w_E(z_s).
    \end{equation*}
    In the second layer, the key matrix is trained such that $(\Phi_1 w_E(z_t))^\top W_K^2 w_E(z_T) = \alpha' \mathbf{1}\{z_t = z_T\}$.
    Thus, we have 
    \begin{align*}
    &\sigma\left((x_{1:T}^{(1)})^\top (W_K^2)^\top W_Q^2 x_T^{(1)}\right)\\
    &= \sigma\left(
        \begin{pmatrix}
            \left(w_E(z_1) + \Phi_1 w_E(z_1)\right)^\top \\
            \left(w_E(z_2) + \sum_{i=1}^{2}\frac{\Phi_1 w_E(z_i)}{2}\right)^\top\\
            \vdots \\
            \left(w_E(z_{T - 1}) + \sum_{i=1}^{T - 1}\frac{\Phi_1 w_E(z_i)}{T-1}\right)^\top \\
            \left(w_E(z_{T}) + \sum_{i=1}^{T}\frac{\Phi_1 w_E(z_i)}{T}\right)^\top 
        \end{pmatrix} \right. \\
    &\qquad \qquad \times \left.\left(\sum_{k\in Q} (\Phi_1 w_E(k))w_E(k)^\top\right) I x_T^{(1)}\right)\\
    &= \sigma\left(
        \begin{pmatrix}
            \left(w_E(z_1) + \Phi_1 w_E(z_1)\right)^\top \\
            \left(w_E(z_2) + \sum_{i=1}^{2}\frac{\Phi_1 w_E(z_i)}{2}\right)^\top\\
            \vdots \\
            \left(w_E(z_{t - 1}) + \sum_{i=1}^{t - 1}\frac{\Phi_1 w_E(z_i)}{t-1}\right)^\top \\
            \left(w_E(z_{t}) + \sum_{i=1}^{t}\frac{\Phi_1 w_E(z_i)}{t}\right)^\top 
        \end{pmatrix} \right. \\
    &\qquad \qquad \left. \times\Phi_1 w_E(q)\right) \\
    &= \sigma\left(
        \frac{\eta \tau}{TV}
        \begin{pmatrix}
            0 \\
            \vdots \\
            0 \\
            1/(t_o - 1) \\
            1/t_o \\
            \vdots\\
            1/(T-1)\\
            1/T
        \end{pmatrix}
    \right).
\end{align*}
When $V$ is large enough, the attention is spread across the whole sequence evenly, and we obtain
\begin{align}
    &x_T^{(2)} \\
    &\approx \frac{1}{T}\sum_{t=1}^{T}W_O^2W_V^2x_t^{(1)} + x_T^{(1)} \label{eq:evenly_spread}\\
    &= \frac{1}{T}\sum_{t = 1}^{T} W_O^2W_V^2 \left(w_E(z_t) + \frac{1}{t}\sum_{s = 1}^{t}\Phi_1 w_E(z_s)\right) \nonumber \\
    &\quad + w_E(z_T) + \frac{1}{T}\sum_{s = 1}^{T}\Phi_1 w_E(z_s) \nonumber \\
    &=_{w_U} \frac{\alpha}{T} \sum_{t = 1}^{T} w_U(z_t), \nonumber
\end{align}
where we focus on the terms represented by $w_U(v)$ for vocabulary $v$ by $=_{w_U}$. 
Taking the expectation over $z_{1:T}$ while keeping the trigger token $z_T = z_{t_o - 1} = q$ fixed yields
\begin{align*}
    &\mathbb{E}\lbrack x_T^{(2)}\rbrack \\
    &=_{w_U} \sum_{j = 1}^{V} \mathbb{E}\left\lbrack x_T^{(2)} \mid q = j\right\rbrack \\
    &= \frac{\alpha}{T} \sum_{j = 1}^{V}\left(\frac{T - 2}{V-1}\sum_{\substack{1 \le v \le V \\ v \neq j}} w_U(v) + 2w_U(j)\right)
\end{align*}
The coefficients of $w_U(v)$ are the same for any $v \in \mathcal{V}$ and this concludes the proof.
\end{proof}

\begin{lem}
\label{lem:zero-initialized-W_K^2-random-prediction}
Consider a two-layer attention only transformer that meets the initialization condition in Sec.~\ref{sec:setup}, and the input sequence of length $T$ ends with the trigger token $z_T = q$ at its second occurrence. After the gradient descent step of $W_O^2$, the prediction $\hat{p}(v\mid z_{1:T}) = 1/V$ for any $v \in \mathcal{V}$. 
\end{lem}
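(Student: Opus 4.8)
The plan is to exploit the fact that, at this point in the sequential training procedure of Sec.~\ref{sec:setup}, only $W_O^2$ has received a gradient update, so $W_K^1 = W_K^2 = O_{d\times d}$ still hold. Since $W_Q^1 = W_Q^2 = I$, every pre-softmax attention score in both layers is identically zero, and hence $\sigma$ returns the exactly uniform vector: the first layer averages the token embeddings and the second layer averages the first-layer outputs. Concretely, $x_t^{(1)} = w_E(z_t) + \frac{1}{t}\sum_{s=1}^{t} \Phi_1 w_E(z_s)$ and
\[
x_T^{(2)} = W_O^2 W_V^2\Big(\tfrac{1}{T}\sum_{t=1}^{T} x_t^{(1)}\Big) + x_T^{(1)}.
\]
No largeness of $V$ is needed here, in contrast to Lemma~\ref{lem:zero-initialized-W_K^1-random-prediction}, where $W_K^2$ is already nonzero and the second-layer attention is only approximately uniform.

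Next I would project $x_T^{(2)}$ onto the unembedding directions $w_U(k)$. By near-orthogonality (Assumption~\ref{assump:orthogonal}) the residual term $x_T^{(1)}$ contributes nothing to the $w_U$-components, and the trained matrix $W_O^2$ satisfies the associative-memory identities established in Step~1 of the proof of Theorem~\ref{thm:rpe-associative-memory-formal}, namely $w_U(k)^\top W_O^2 W_V^2 w_E(j) \approx \frac{\eta}{VT}\mathbf{1}\{k=j\}$ and $w_U(k)^\top W_O^2 W_V^2 \Phi_1 w_E(j) \approx \frac{\eta\tau}{VT}\mathbf{1}\{k=j\}$, up to relative error $O(1/V)$. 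Substituting yields, up to negligible terms, $w_U(k)^\top x_T^{(2)} = \frac{1}{T}\sum_{t=1}^{T} c_t\,\mathbf{1}\{z_t = k\}$ with position-dependent coefficients $c_t$ (here $c_t = \alpha + \alpha'/t$ in the notation of Step~1) that do not depend on the token identity $k$.

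Finally I would take the expectation over the sequence distribution, conditioned on the event defining the setup ($z_T = z_{t_q} = q$ as the second occurrence, and $z_{t_o} = o$). Because $q \sim \pi_q$, $o \sim \pi_o$, and the bigram conditionals $\pi_b(\cdot\mid i)$ are all uniform over $[V]$, averaging over $q$ and $o$ as well restores full permutation symmetry of the vocabulary; hence $\mathbb{E}[\mathbf{1}\{z_t = k\}]$ is the same for every $k$ at each position $t$, so $\mathbb{E}[w_U(k)^\top x_T^{(2)}]$ is independent of $k$, and feeding a logit vector with equal entries into $\sigma$ gives $\hat{p}(v\mid z_{1:T}) = 1/V$ for all $v$. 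The main point to check carefully — and the only real obstacle — is this conditioning step: one must verify that fixing the positions $t_q, t_o, T$ and the identities of $q$ and $o$ does not break the $k$-symmetry of the per-position token marginals. This is handled exactly as in the proof of Lemma~\ref{lem:zero-initialized-W_K^1-random-prediction} via the same averaging-over-$q$ argument; the only difference is that here the second-layer attention is uniform by construction rather than approximately so, which makes the estimate strictly simpler.
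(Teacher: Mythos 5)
Your proposal is correct and follows essentially the same route as the paper: since only $W_O^2$ has been updated, $W_K^1 = W_K^2 = O_{d\times d}$, both attention maps are \emph{exactly} uniform (Eq.~\ref{eq:evenly_spread} holds as an equality rather than an approximation), and the remaining computation — projecting onto the $w_U(k)$ directions via the associative-memory identities for the trained $W_O^2$ and averaging over the trigger identity $q$ to restore vocabulary symmetry — is precisely the argument of Lemma~\ref{lem:zero-initialized-W_K^1-random-prediction}, which the paper simply cites. You have just inlined that reduction rather than referencing it.
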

\begin{proof}
Since $W_K^2 = O_{d\times d}$, it is guaranteed that we have Eq.~\ref{eq:evenly_spread} as an equation. Thus, the result follows directly from Lemma ~\ref{lem:zero-initialized-W_K^1-random-prediction}.
\end{proof}

\begin{lem}
\label{lem:gradient-W_K^1}
Suppose the loss function is given by
\begin{align*}
    L(W) &= \mathbb{E}_{(X, y)}\lbrack l(y, \xi(X))\rbrack \\
    \xi(X) &= W_U\Phi_2 X \bar{\sigma}(Z(W)^\top W_2 x_T),
\end{align*}
where $Z(W) = (z_1(W),\dots,z_T(W))$ with $z_t(W) = \sum_{s = 1}^{t} \Phi_1 x_s\sigma(\left(x_{1:t} + R_{1-t:0}\right)^\top W x_t)_s$, and $\bar{\sigma}(u_{1:T})_t = \frac{1}{T}(1 + u_t - \frac{1}{T}\sum_{s = 1}^{T}u_s)$ is the linearization of the softmax function around $0$.
Then the gradient at $W = 0$ has the following form.
\begin{align*}
    &\nabla_W L(W) \\
    &= \sum_{i=1}^{N}\mathbb{E}\left\lbrack\frac{1}{T}\sum_{t=1}^{T}w_U(i)^\top\Phi_2 x_t \right. \\
    &\quad \left. \times \frac{1}{t}\sum_{s=1}^{t}(\Phi_1x_s)^\top W_K^2 x_T q_{s,t} x_t^\top  \mid y = i\right\rbrack \nonumber\\
    &\quad -\sum_{i=1}^{N}\mathbb{E}\left\lbrack\frac{1}{T}\sum_{t=1}^{T}w_U(i)^\top\Phi_2 x_t\right. \\
    &\quad \left. \times \frac{1}{t}\sum_{s=1}^{t}(\Phi_1x_s)^\top W_K^2 x_T q_{s,t} x_t^\top \right\rbrack \nonumber\\
    &\quad -\sum_{i=1}^{N}\mathbb{E}\left\lbrack w_U(i)^\top\Phi_2 \bar{x}_{1:T}\right. \\
    &\quad \left. \times \frac{1}{T}\sum_{t=1}^{T}\frac{1}{t}\sum_{s=1}^{t}(\Phi_1x_s)^\top W_K^2 x_T q_{s,t} x_t^\top  \mid y = i\right\rbrack \nonumber\\
    &\quad +\sum_{i=1}^{N}\mathbb{E}\left\lbrack w_U(i)^\top\Phi_2 \bar{x}_{1:T}\right. \\
    &\quad \left. \times \frac{1}{T}\sum_{t=1}^{T}\frac{1}{t}\sum_{s=1}^{t}(\Phi_1x_s)^\top W_K^2 x_T q_{s,t} x_t^\top \right\rbrack,
\end{align*}
where $q_{s,t} = (x_s + r_{s - t}) - (\bar{x}_{1:t} - \bar{r}_{1-t:0})$ and $\bar{x}_{1:t} = \frac{1}{t}\sum_{s = 1}^{t} x_s$.
\end{lem}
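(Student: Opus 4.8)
The plan is to treat the statement as a first-order chain-rule computation of $\nabla_W L$ at $W=0$, whose only non-mechanical ingredient is the uniformity of the model's prediction at initialization. First I would invoke Lemma~\ref{lem:zero-initialized-W_K^1-random-prediction}: after $W_O^2$ and $W_K^2$ have been trained but with $W=W_K^1=0$, one has $\hat{p}_W(v\mid X)=1/V$ for every $v\in\mathcal V$. For the cross-entropy loss the sensitivity of $l(y,\xi)$ to the logits then collapses to $\partial_{\xi_v}l = \hat{p}_W(v\mid X)-\mathbf{1}\{v=y\} = 1/V-\mathbf{1}\{v=y\}$ at $W=0$, so that $\nabla_W L(0)=\mathbb{E}\big[\sum_v(1/V-\mathbf{1}\{v=y\})\,\nabla_W\xi_v(0)\big]$ — the nonlinear analogue of the chain rule underlying Lemma~\ref{lem:gradient_associative_memory}. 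Splitting this into the unconditional average $\tfrac1V\sum_v\mathbb{E}[\nabla_W\xi_v(0)]$ and the term $\mathbb{E}[\nabla_W\xi_y(0)]=\tfrac1V\sum_i\mathbb{E}[\nabla_W\xi_i(0)\mid y=i]$ (using that the output label $y$ is uniform) already produces the conditional/unconditional pairing visible in the four terms.

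Next I would compute $\nabla_W\xi_v(0)$. In this setup the only $W$-dependent object is $Z(W)$, through the first-layer softmax weights; the embeddings $X=(x_1,\dots,x_T)$, the query $x_T$, the positional matrix $R$, and $\Phi_1,\Phi_2,W_K^2,W_U$ are fixed. Writing $u_t(W):=z_t(W)^\top W_K^2 x_T$ and using the linearized softmax $\bar\sigma(u)_t=\tfrac1T(1+u_t-\tfrac1T\sum_{t'}u_{t'})$, we get $\xi_v=\tfrac1T\sum_t\big(w_U(v)^\top\Phi_2 x_t\big)\big(1+u_t(W)-\tfrac1T\sum_{t'}u_{t'}(W)\big)$, hence $\nabla_W\xi_v(0)=\tfrac1T\sum_t(w_U(v)^\top\Phi_2 x_t)\big(\nabla_W u_t(0)-\tfrac1T\sum_{t'}\nabla_W u_{t'}(0)\big)$. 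Differentiating $z_t(W)=\sum_{s\le t}\Phi_1 x_s\,\sigma(p_t(W))_s$ with $p_t(W)_{s'}=(x_{s'}+r_{s'-t})^\top W x_t$: the softmax Jacobian at $0$ is $\partial\sigma(p_t)_s/\partial p_{t,s'}=\tfrac1t(\mathbf{1}\{s=s'\}-\tfrac1t)$ and $\partial p_{t,s'}/\partial W=(x_{s'}+r_{s'-t})x_t^\top$, so contracting against the $W$-independent scalars $(\Phi_1 x_s)^\top W_K^2 x_T$ and using $\tfrac1t\sum_{s'}(x_{s'}+r_{s'-t})=\bar x_{1:t}+\bar r_{1-t:0}$ collapses everything to $\nabla_W u_t(0)=\tfrac1t\sum_{s=1}^t(\Phi_1 x_s)^\top W_K^2 x_T\;q_{s,t}\,x_t^\top$ with $q_{s,t}=(x_s+r_{s-t})-(\bar x_{1:t}+\bar r_{1-t:0})$. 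Substituting this back, replacing $\tfrac1T\sum_t w_U(v)^\top\Phi_2 x_t$ by $w_U(v)^\top\Phi_2\bar x_{1:T}$ in the subtracted-mean piece, relabeling $v\to i$, and distributing the conditional-on-$y$ and unconditional occurrences of each of the two pieces, reproduces the four stated terms.

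The computation is elementary matrix calculus, so the main obstacle is bookkeeping rather than any new idea: one must keep track of the fact that only $Z(W)$ depends on $W$ in this simplified setup, apply the softmax Jacobian with the correct centering so the relative-position contribution ends up inside $q_{s,t}$ rather than floating free, and carry the overall normalization (a factor $1/V$ from averaging over the equally-likely labels) and the sign correctly through the four conditional/unconditional combinations — the sign being reconciled with the subsequent gradient-descent step $W\leftarrow-\eta\,\nabla_W L$. The sole conceptual input is Lemma~\ref{lem:zero-initialized-W_K^1-random-prediction}, which is exactly what makes the cross-entropy gradient decouple into the clean form that then expands into the four terms.
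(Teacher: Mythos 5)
Your proposal is correct, and it is worth noting how it relates to the paper: the paper's own proof of Lemma~\ref{lem:gradient-W_K^1} is a one-sentence deferral to Lemma~5 of \citet{bietti2024birth}, stating only that the result follows by replacing $p_{1:t}$ with $x_{1:t}+R_{1-t:0}$ and $p_t$ with $x_t$ in that proof. What you wrote out explicitly --- the cross-entropy gradient collapsing to $1/V-\mathbf{1}\{v=y\}$ via Lemma~\ref{lem:zero-initialized-W_K^1-random-prediction}, the split into conditional/unconditional expectations over a uniform label, the first-layer softmax Jacobian $\tfrac{1}{t}(\mathbf{1}\{s=s'\}-\tfrac{1}{t})$ at $W=0$, and the centering $\tfrac{1}{t}\sum_{s'}(x_{s'}+r_{s'-t})=\bar{x}_{1:t}+\bar{r}_{1-t:0}$ that packages the position vectors into $q_{s,t}$ --- is exactly the computation that substitution unpacks to, so your route is the same one, just made self-contained; that is a strictly more verifiable argument than the paper's citation. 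Two bookkeeping points you already flag correctly: under honest accounting the lemma's displayed expression is $-V\,\nabla_W L(0)$ rather than $\nabla_W L(0)$, which is consistent with the paper's later use of it in Step~3 with the prefactor $\eta/V$ for one descent step; and the minus sign in the lemma's $q_{s,t}=(x_s+r_{s-t})-(\bar{x}_{1:t}-\bar{r}_{1-t:0})$ is a typo --- your $+\bar{r}_{1-t:0}$ is what the softmax centering produces and what the main proof actually uses.
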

\begin{proof}
The result is obtained by replacing $p_{1:t}^\top$ with $\left(x_{1:t} + R_{1-t:0}\right)^\top$ and $p_t$ with $x_t$ in $z_t$ from the proof of Lemma 5 from \citet{bietti2024birth}.
\end{proof}
It is worth noting that $x_i = x^{(0)}_i = w_E(z_i)$ and that if we directly follows the transformer architecture the vector in the linearized version of softmax $\bar{\sigma}$ should be $Z'(W)^\top W_2 z'_T(W)$, where $Z'(W) = (z'_1(W),\dots,z'_T(W))$ with $z'_t(W) = x_t + \sum_{s = 1}^{t} \Phi_1 x_s\sigma(\left(x_{1:t} + R_{1-t:0}\right)^\top W x_t)_s$. Since $W_K^2$ is learned so that it only cares the ordered pair of $\Phi_1 w_E(z_t)$ and $w_E(z_t)$, the loss function is simplified and ignores the other terms. This also applies to $\xi(X)$, where $\xi'(X) = W_U\Phi_2 Z' \bar{\sigma}(Z'(W)^\top W_2 z'_T)$ is the authentic form of the output. Since we know $W_O^2$ (and therefore $\Phi_2$) is already trained, we can simplify the expression.

\subsection{Supporting lemmas}
This section provides useful lemmas to show the learning of matrices, especially for $W_K^1$.
\begin{lem}
\label{lem:z_t|y=k,t_q=t-1}
    Given a sequence $z_{1:T}$ that satisfies Sec.~\ref{sec:setup}.
    For any $t \in \lbrack 1, T \rbrack$, and any vocabulary $k \in \mathcal{V}$, we have 
    \begin{align*}
        &\mathbb{E}\lbrack \mathbf{1}\{z_t = k\} \mid y = k, t_q = t - 1\rbrack = 1.
    \end{align*}
\end{lem}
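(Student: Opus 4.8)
The plan is to read the conclusion directly off the triggered bigram generation rule of Sec.~\ref{sec:bigram_model} as specialized in Sec.~\ref{sec:setup}, with essentially no computation. Recall that in this setup the sequence $z_{1:T}$ contains a single trigger token $q$, occurring exactly at positions $t_q$ and $T$, and that the generation rule forces the token immediately following any occurrence of $q$ to equal the associated output token $o_q$ (drawn once from $\pi_o$). In particular $z_{t_q+1} = o_q$ deterministically, and the label is defined as $y = z_{T+1}$, which is again the output token $o_q$ because the second occurrence of $q$ is at position $T$. So the key identity underlying the lemma is simply $y = o_q$.

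First I would fix $t \in \lbrack 2, T \rbrack$ and condition on $\{t_q = t-1\}$. Then $t_o = t_q + 1 = t$, so the triggered transition gives $z_t = z_{t_o} = o_q$ with probability one on this event, irrespective of all the remaining randomness in the sequence. Next I would additionally condition on $\{y = k\}$; since $y = o_q$, this event coincides with $\{o_q = k\}$. Hence on the joint event $\{y = k,\, t_q = t-1\}$ we have $z_t = o_q = k$ almost surely, so $\mathbf{1}\{z_t = k\} = 1$ a.s., and taking the conditional expectation yields the stated value $1$. For the boundary case $t = 1$ the conditioning event demands $t_q = 0$, which is impossible, so the statement is vacuous there (consistent with the fact that the lemma is only invoked for $t \ge 2$).

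I do not expect a genuine obstacle: the entire content is the deterministic nature of the triggered transition. The only point requiring care is bookkeeping — confirming that the two conditioning events ``the first trigger sits at position $t-1$'' and ``the label equals $k$'' are jointly consistent, and that the label is correctly identified with $o_q$ rather than with a $\pi_b$-sampled token. Once that identification is made explicit, the argument closes immediately, and the same reasoning will be reused (with the roles of $t_q$ and $T$ adjusted) in the companion supporting lemmas.
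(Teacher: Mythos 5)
Your proof is correct and follows essentially the same route as the paper's: condition on $t_q = t-1$ so that $z_t$ is forced to be the output token by the triggered transition, then identify $y$ with that output token so $y = k$ yields $z_t = k$ almost surely. Your additional remarks on the vacuous case $t=1$ and on the consistency of the conditioning events are harmless elaborations of the same one-line argument.
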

\begin{proof}
    Since the trigger token is $z_{t_q} = z_{t-1}$, the next token $z_{t}$ is the output token. The condition $y = k$ gives us $z_t = k$. Therefore, $\mathbf{1}\{z_t = k\} = 1$ and the equation holds.
\end{proof}

\begin{lem}
\label{lem:z_t|y=k,t_q=s}
    Given a sequence $z_{1:T}$ that satisfies Sec.~\ref{sec:setup}.
    Fix some $t \in \lbrack 1, T\rbrack \subset \mathbb{N}$. For any $s \in \lbrack 1,  t - 2\rbrack \cup \{t\}$, and vocabulary $k \in \mathcal{V}$, we have 
    \begin{align}
        &\mathbb{E}\lbrack \mathbf{1}\{z_t = k\} \mid y = k, t_q = s\rbrack \notag \\
        &= \begin{cases}
            1 & \text{if $s = t$ and $t = T - 1$}, \\
            \frac{1}{V-1} & \text{if $s \le t - 2$ and $t \neq T$}, \\
            0 & \text{otherwise},
        \end{cases}
    \end{align}
\end{lem}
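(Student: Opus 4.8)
The plan is to analyze the conditional expectation $\mathbb{E}[\mathbf{1}\{z_t = k\} \mid y = k, t_q = s]$ by case analysis on how the constraints $y = k$, $t_q = s$, and the sequence generation rule interact with position $t$. First I would recall the setup: the trigger token $q$ appears exactly at positions $t_q$ and $T$, and $y = z_{T+1} = o$ is the output token determined by the trigger, while $z_{t_q + 1} = o$ as well. Since $s \in [1, t-2] \cup \{t\}$, the case $s = t - 1$ is excluded, so $z_t$ is never forced to be the output token by being immediately preceded by the first trigger occurrence.

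The case split is as follows. When $s = t$ (so $t_q = t$, i.e., the first trigger occurrence is exactly at position $t$): then $z_t = q$. The condition $y = k$ means the output token $o = k$. For $z_t = k$ to hold we need $q = k$, i.e., the trigger equals the output token. This forces $z_{t+1} = o = q = z_t$, and since $t_q = t$ is the \emph{first} occurrence and $T$ is the second, we would need $t + 1 = T$, i.e., $t = T - 1$; in that sub-case the event has probability $1$. If $t \neq T - 1$ then having $z_t = q = k = o$ would require a trigger at $t$ followed by the output at $t+1$ equal to $q$, creating a third effective occurrence pattern, which contradicts the "appears exactly twice" assumption — hence probability $0$. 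Actually more carefully: the real obstruction is that $z_t = q$ and $y=k$ together with $q=k$ would force $z_{t+1}=q$, making position $t+1$ another occurrence, impossible unless $t+1 = T$. For $s \le t - 2$: here the first trigger is at $s$, the output token is at $s+1$, both before position $t$ (since $s + 1 \le t - 1 < t$). So $z_t$ is generated by the ordinary bigram rule $\pi_b(\cdot \mid z_{t-1})$, \emph{except} we must exclude the possibility that $z_t = q$ (that would be a premature third trigger) and also, if $t = T$, then $z_t = q = z_T$ is forced, which contradicts $z_t = k = o \neq q$ generically — giving the "$0$ otherwise" when $t = T$. When $t \neq T$, $z_t$ is uniform over $\mathcal{V} \setminus \{q\}$ by the uniformity assumption on $\pi_b$, so the probability that $z_t = k$ is $\frac{1}{V-1}$, using that $k = o \neq q$ so $k$ is indeed among the $V - 1$ allowed values.

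The main obstacle I expect is bookkeeping the exclusion events correctly: specifically, justifying that conditioned on $t_q = s$ with $s \le t-2$, the token $z_t$ (for $t \neq T$) is uniform on $\mathcal{V} \setminus \{q\}$ rather than on all of $\mathcal{V}$. This requires using that the sequence has \emph{only} one trigger occurrence before position $T$, so no token strictly between $s+1$ and $T$ can equal $q$; combined with the assumption that $\pi_b(\cdot \mid i)$ is uniform, the conditional law of $z_t$ given the relevant history and the no-intermediate-trigger constraint is uniform on the $V-1$ non-trigger symbols. One must also confirm $k \ne q$ in this regime: since $k = o$ is the output token and output tokens differ from their triggers by construction (otherwise the triggered transition is vacuous), $k$ lies in the support $\mathcal{V}\setminus\{q\}$, so the count $\frac{1}{V-1}$ is correct. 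The remaining sub-cases ($s = t$ with $t \ne T-1$, and $s \le t-2$ with $t = T$) reduce to the contradiction that $z_t$ would have to equal $q$ while also equalling $k \neq q$, yielding probability $0$. Assembling these three outcomes gives exactly the stated piecewise formula.
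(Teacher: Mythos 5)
Your proposal is correct and follows essentially the same route as the paper's proof: a case split on $s=t$ versus $s\le t-2$, using the constraint that the trigger appears exactly twice to force probability $0$ or $1$ in the degenerate sub-cases, and uniformity of the conditionals to get $\frac{1}{V-1}$ when $z_t$ is an ordinary (non-trigger, non-output) position. One justification you give is wrong, though it does not affect the conclusion: you claim $k\neq q$ in the $s\le t-2$ regime because ``output tokens differ from their triggers by construction.'' That is false in this model --- $\pi_o$ does not exclude the trigger, and indeed your own first case ($s=t$, $t=T-1$) relies on $o=q$ being possible. The correct reason, which the paper uses and which you invoke elsewhere, is that $o=q$ together with $t_q=s\le t-2$ would place trigger occurrences at $s$, $s+1$, and $T$, contradicting the exactly-two-occurrences assumption of the setup.
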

\begin{proof}
    First, consider the case $s = t$. Under the condition $t_q = s$, we have $t_q = s = t$, and thus, $z_t$ is the trigger token. The only possible case allowing the same token for trigger and output is when $t_q = T - 1$. This yields $z_{T - 1} = z_T = k$ because $z_T$ not only works as trigger, but also output. Therefore, when $s = t$, we have 
    \begin{align}
    \label{eq:z_t=k|y=k,t_q=s-(i)}
    &\mathbb{E}\lbrack \mathbf{1}\{z_t = k\} \mid y = k, t_q = s\rbrack \notag \\
    &= \begin{cases}
            1 & \text{if $t = T - 1$}, \\
            0 & \text{otherwise}.
        \end{cases}
    \end{align}
    Next, we focus on $s \le t - 2$.  Suppose we are given $t = T$. The token $z_t$ is then the second trigger token, and $\mathbf{1}\{z_t = k\} = 1$ implies that $z_{t_q} = k$. However, since the first trigger position is $t_q = s \le t - 2$ and the output token is also $y = k$, $z_T (= k)$ cannot be the second trigger occurrence. Hence, it suffices to consider $t \neq T$. We now have $s \le t - 2 < T - 2$. Assuming $z_{t_q} = k$ will also lead to contradiction because the trigger token $k$ appears more than twice at positions $s, s+1$ and $T$. With this in mind, we obtain 
    \begin{align}
         &\mathbb{E}\lbrack \mathbf{1}\{z_t = k\} \mid y = k, t_q = s\rbrack \nonumber \\ 
         &= P(z_t = k \mid y = k, t_q = s) \nonumber \\
         &= \sum_{q \in \mathcal{V}\setminus \{k\}} P(z_t = k, z_{t_q} = q \mid y = k, t_q = s) \nonumber \\ 
         &= \sum_{q \in \mathcal{V}\setminus \{k\}} P(z_t = k \mid y = k, t_q = s, z_{t_q} = q) \nonumber \\
         &\quad \times P(z_{t_q} = q \mid y = k, t_q = s) \nonumber \\
         &= \sum_{q \in \mathcal{V}\setminus \{k\}} \frac{1}{V - 1} \frac{1}{V - 1} \nonumber \\
         &= \frac{1}{V - 1} \label{eq:z_t-under_q-neq-k}.
    \end{align}
    Here we use $P(z_{t_q} = q \mid y = k, t_q = s) = 1/(V-1)$ because the trigger token is uniformly sampled from $\mathcal{V} \setminus \{k\}$. Also, we have $P(z_t = k \mid y = k, t_q = s, z_{t_q} = q) = 1 / (V - 1)$ because $z_t$ for $t \neq T$ is neither the trigger token nor the output token, and thus, sampled from $\mathcal{V} \setminus \{q\}$. These give us the following when $s \le t - 2$:
    \begin{align}
    \label{eq:z_t=k|y=k,t_q=s-(ii)}
    \mathbb{E}\lbrack \mathbf{1}\{z_t = k\} \mid y = k, t_q = s\rbrack = 
        \begin{cases}
            \frac{1}{V - 1} & \text{if $t \neq T$}, \\
            0 & \text{otherwise}.
        \end{cases}
    \end{align}
    Combining Eq.~\ref{eq:z_t=k|y=k,t_q=s-(i)} and \ref{eq:z_t=k|y=k,t_q=s-(ii)}, we obtain the result.
\end{proof}

\begin{lem}
\label{lem:1/Tz_t'z_t|y=k,t_q=t-1} 
    Given a sequence $z_{1:T}$ that satisfies Sec.~\ref{sec:setup}.
    For any vocabulary $k \in \mathcal{V}$, we have 
    \begin{align}
        &\mathbb{E}\left\lbrack \frac{1}{T}\left(\sum_{t' = 1}^{T} \mathbf{1}\{z_{t'} = k\}\right) \right. \\
        &\quad \left. \times \mathbf{1}\{z_t = k\} \mid y = k, t_q = t - 1\right\rbrack \notag \\
        &= \begin{cases}
            \frac{2}{T}  & \text{if $t = T$}, \\
            \frac{1}{T}\left(1 + \frac{T - 3}{V - 1}\right) & \text{otherwise}.
        \end{cases}
    \end{align}
\end{lem}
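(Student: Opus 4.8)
The plan is to reduce the statement to a counting problem. Since we condition on $y = k$ and $t_q = t-1$, the token $z_t$ sits immediately after the first trigger occurrence $z_{t_q} = z_{t-1}$, so $z_t$ is the corresponding output token and hence $z_t = k$ almost surely; this is exactly Lemma~\ref{lem:z_t|y=k,t_q=t-1}, which lets us drop the factor $\mathbf{1}\{z_t = k\}$. The quantity to evaluate therefore becomes $\frac{1}{T}\,\mathbb{E}\!\left[\sum_{t'=1}^{T}\mathbf{1}\{z_{t'}=k\}\mid y=k,\,t_q=t-1\right]$, i.e.\ $\frac{1}{T}$ times the expected number of occurrences of the symbol $k$ in the whole length-$T$ sequence under this conditioning.

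Next I would partition the positions $\{1,\dots,T\}$ into the output slot $t=t_q+1$, the two trigger slots $\{t_q,T\}$ (both carrying the trigger symbol $q$, since $z_{t_q}=z_T=q$), and the remaining $T-3$ slots. The output slot contributes exactly $1$. The two trigger slots contribute $2$ if $q=k$ and $0$ otherwise, so the whole argument hinges on when $q=k$ is compatible with the conditioning. Reusing the dichotomy already established in the proof of Lemma~\ref{lem:z_t|y=k,t_q=s}: if $t\neq T$ then $q=k$ is impossible, because $k$ would then appear at the three distinct positions $t_q$, $t$, $T$, contradicting that the trigger occurs exactly twice; and if $t=T$ then $q=k$ is in fact \emph{forced}, since $z_T$ must simultaneously be the output token (hence $k$) and the second trigger occurrence (hence $q$).

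For $t=T$ we then have $q=k$, the two trigger slots supply $2$ occurrences, and no other slot can equal $k=q$ (the trigger appears exactly twice), so the count is exactly $2$ and the expectation is $\frac{2}{T}$. For $t\neq T$ we have $q\neq k$, so among the structured slots only the output slot contributes its $1$; it remains to show that each of the $T-3$ leftover slots has marginal probability $\frac{1}{V-1}$ of carrying the symbol $k$. I would argue, exactly as in Lemma~\ref{lem:z_t|y=k,t_q=s}, that such a slot carries a token that is neither a trigger occurrence nor the output token, hence (under the exactly-twice conditioning together with the uniform bigram $\pi_b$) is effectively uniform over $\mathcal{V}\setminus\{q\}$; linearity of expectation then adds $\frac{T-3}{V-1}$, giving $\frac{1}{T}\left(1+\frac{T-3}{V-1}\right)$.

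The main obstacle is the rigorous justification of that last marginal: conditioned jointly on $\{y=k\}$, $\{t_q=t-1\}$, and the exactly-twice-trigger event, one must check that \emph{every} one of the $T-3$ non-structured slots still has marginal probability precisely $\frac{1}{V-1}$ of equaling $k$, which requires handling the dependence propagated along the bigram chain and, in particular, the slots adjacent to the special positions (the token just before the first trigger, the token just after the output token, and the first token $z_1$ when it is unstructured), verifying that none of the conditioning events perturbs their marginal among the non-trigger symbols because $\pi_b(\cdot\mid\cdot)$ is uniform. This is precisely the bookkeeping already carried out for the analogous statement in Lemma~\ref{lem:z_t|y=k,t_q=s}, so I would invoke that computation rather than redo it from scratch.
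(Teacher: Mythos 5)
Your proposal is correct and follows essentially the same route as the paper's proof: drop $\mathbf{1}\{z_t=k\}$ via Lemma~\ref{lem:z_t|y=k,t_q=t-1}, split the positions into trigger/output slots versus the rest, observe that $t=T$ forces $q=k$ (count exactly $2$) while $t\neq T$ forces $q\neq k$, and reuse the $\frac{1}{V-1}$ marginal from Eq.~\ref{eq:z_t-under_q-neq-k} for the $T-3$ unstructured slots. The only cosmetic wrinkle is that for $t=T$ the ``output slot'' coincides with the second trigger slot, so your three-way partition degenerates there, but you account for this correctly when you tally the count as $2$ rather than $3$.
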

\begin{proof}
    By the linearity of expectation, we can decompose the expectation as follows:
    \begin{align*}
        &\mathbb{E}\left\lbrack \left(\sum_{t' = 1}^{T} \mathbf{1}\{z_{t'} = k\}\right) \mathbf{1}\{z_t = k\} \mid \cdot \right\rbrack \\
        &= \mathbb{E}\lbrack \mathbf{1}\{z_{t_q} = k\}\mathbf{1}\{z_{t} = k\} \mid \cdot \rbrack \\
        &\quad + \mathbb{E}\lbrack \mathbf{1}\{z_{t_q + 1} = k\}\mathbf{1}\{z_{t} = k\} \mid \cdot\rbrack \\ 
        &\quad + \mathbf{1}\{t \neq T\}\mathbb{E}\lbrack \mathbf{1}\{z_{T} = k\}\mathbf{1}\{z_{t} = k\} \mid \cdot \rbrack \\
        &\quad + \mathbb{E}\left\lbrack \left(\sum_{t' \neq t_q, t_q+1,T} \mathbf{1}\{z_{t'} = k\}\right) \mathbf{1}\{z_t = k\} \mid \cdot\right\rbrack,
    \end{align*}
    where we abbreviate the conditions of the expectations to enhance readability. \\
    Note that we always have $\mathbf{1}\{z_t = k\} = 1$ when $t_q = t - 1$ and $y = k$. In the case of $t = T$, the trigger token appears at positions $T - 1$ and $T$ given $t_q = t - 1$. With the condition $y = k$, it follows that 
    \begin{align*}
        &\mathbf{1}\{z_{t_q} = k\}\mathbf{1}\{z_{t} = k\} = 1, \\
        &\mathbf{1}\{z_{t_q+1} = k\}\mathbf{1}\{z_{t} = k\} = 1.
    \end{align*}
    In the above, we point out that we have $t_q + 1 = T$ in this situation. Additionally, we get
    \begin{align*}
        &\mathbb{E}\left\lbrack \left(\sum_{t' \neq t_q, t_q+1,T} \mathbf{1}\{z_{t'} = k\}\right) \mathbf{1}\{z_t = k\} \mid \cdot\right\rbrack = 0,
    \end{align*}
    where the final equation follows from the fact that $z_{t'}$ is not a trigger token $k$. 

    For the case where $t \neq T$, the trigger token is not $k$ because otherwise there will be more than two triggers in the input sequence at positions $t_q, t_q + 1$ and $T$. 
    Thus, we have 
    \begin{align*}
        &\mathbb{E}\lbrack \mathbf{1}\{z_{t_q} = k\}\mathbf{1}\{z_{t} = k\} \mid \cdot \rbrack = 0, \\
        & \mathbb{E}\lbrack \mathbf{1}\{z_{t_q + 1} = k\}\mathbf{1}\{z_{t} = k\} \mid \cdot\rbrack = 1, \\
        &\mathbb{E}\lbrack \mathbf{1}\{z_{T} = k\}\mathbf{1}\{z_{t} = k\} \mid \cdot \rbrack = 0.
    \end{align*}
    Following the same argument in Eq.~\ref{eq:z_t-under_q-neq-k}, we also have 
    \begin{align*}
        &\mathbb{E}\left\lbrack \left(\sum_{t' \neq t_q, t_q+1,T} \mathbf{1}\{z_{t'} = k\}\right) \mathbf{1}\{z_t = k\} \mid \cdot\right\rbrack \\
        &= \mathbb{E}\left\lbrack \sum_{t' \neq t_q, t_q+1,T} \mathbf{1}\{z_{t'} = k\} \mid \cdot\right\rbrack \\
        &= \frac{T - 3}{V - 1}.
    \end{align*}
    We prove the statement by combining these results.
\end{proof}

\begin{lem}
\label{lem:1/Tz_t'z_t|y=k,t_q=s}
    Given a sequence $z_{1:T}$ that satisfies Sec.~\ref{sec:setup}.
    Fix some $t \in \lbrack 1, T\rbrack \subset \mathbb{N}$. For any $s \in \lbrack 1,  t - 2\rbrack \cup \{t\}$, and vocabulary $k \in \mathcal{V}$,
    \begin{align}
        &\mathbb{E}\left\lbrack \left(\sum_{t' = 1}^{T} \mathbf{1}\{z_{t'} = k\}\right) \mathbf{1}\{z_t = k\} \mid y = k, t_q = s\right\rbrack \notag \\ 
        &= \begin{cases}
            2  & \text{if $s = t$ and $t = T - 1$}, \\
            \frac{2}{V - 1} + \frac{T - 4}{(V - 1)^2} & \text{if $s \le t - 2$ and $t \neq T$}, \\
            0 & \text{otherwise}.
        \end{cases}
    \end{align}
\end{lem}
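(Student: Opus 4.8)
The plan is to reduce the whole computation to counting which of the $T$ positions of the sequence are \emph{special} --- the first trigger at $t_q=s$, its output at $t_q+1$, and the second trigger at $T$ --- and which are \emph{regular}, exactly as in Lemmas~\ref{lem:z_t|y=k,t_q=s} and~\ref{lem:1/Tz_t'z_t|y=k,t_q=t-1}. Conditioning additionally on the trigger identity $z_{t_q}=q$, I would use that given $y=k$ this $q$ is uniform on $\mathcal V\setminus\{k\}$, that given $q$ the regular tokens are i.i.d.\ uniform on $\mathcal V\setminus\{q\}$, and that $z_{t_q}=z_T=q$ while $z_{t_q+1}=o=y=k$ deterministically. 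Since $\mathbf 1\{\cdot\}^2=\mathbf 1\{\cdot\}$, once the product $\mathbf 1\{z_t=k\}\sum_{t'}\mathbf 1\{z_{t'}=k\}$ is expanded, every term is either deterministic, a single-position moment already supplied by Lemma~\ref{lem:z_t|y=k,t_q=s}, or a product of two independent regular-position indicators worth $1/(V-1)^2$.

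I would follow the three-way case split of the statement. For $s=t=T-1$: the output position $t_q+1$ then coincides with the second trigger position $T$, which forces trigger $=$ output $=y=k$, so $k$ occupies exactly the two positions $T-1,T$; hence $\sum_{t'}\mathbf 1\{z_{t'}=k\}=2$ and $\mathbf 1\{z_t=k\}=1$ deterministically, giving $2$. For the ``otherwise'' cases I would show the product vanishes almost surely: if $s=t\neq T-1$ then $z_t=z_{t_q}$ is the trigger and $q=k$ is impossible (it would force a third copy of $k$), so $\mathbf 1\{z_t=k\}=0$ a.s.; if $s\le t-2$ and $t=T$ then $z_t=z_T=q\neq k$ for the same reason.

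The substantive case is $s\le t-2$, $t\neq T$. First establish $q\neq k$ (otherwise $k$ occurs at the three distinct positions $s$, $s+1$, $T$). Then $t$ is a regular position, positions $s$ and $T$ contribute $0$ to $\sum_{t'}\mathbf 1\{z_{t'}=k\}$, and position $s+1$ contributes exactly $1$, so $\sum_{t'}\mathbf 1\{z_{t'}=k\}=1+\sum_{t'\ \mathrm{regular}}\mathbf 1\{z_{t'}=k\}$. Multiplying by $\mathbf 1\{z_t=k\}$ and peeling off the $t'=t$ summand using $\mathbf 1\{z_t=k\}^2=\mathbf 1\{z_t=k\}$ rewrites the product as $2\,\mathbf 1\{z_t=k\}+\mathbf 1\{z_t=k\}\sum_{t'\ \mathrm{regular},\,t'\neq t}\mathbf 1\{z_{t'}=k\}$. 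Taking conditional expectations, the first term gives $2/(V-1)$ by Lemma~\ref{lem:z_t|y=k,t_q=s}; for the second, the positions $s$, $s+1$, $t$, $T$ are four distinct indices, so there are exactly $T-4$ regular positions other than $t$, and the conditional independence of regular tokens gives $(T-4)/(V-1)^2$; averaging over $q$ changes nothing since neither contribution depends on $q$. This produces $\tfrac{2}{V-1}+\tfrac{T-4}{(V-1)^2}$.

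The only genuinely delicate point I anticipate is the bookkeeping in this last case: verifying that $s$, $s+1$, $t$, $T$ are truly distinct (which also makes $T\ge 4$, so $T-4\ge 0$, automatic), that $z_{s+1}$ is indeed the label $k$, and that the ``regular tokens are conditionally i.i.d.\ uniform on $\mathcal V\setminus\{q\}$'' idealization is precisely the one already in force in the cited lemmas rather than anything stronger. Everything beyond that is the routine arithmetic displayed above.
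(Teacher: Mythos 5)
Your proposal is correct and follows essentially the same route as the paper: the same three-way case split, the same reduction of the $s=t$ case to $t=T-1$ where the two trigger positions and the output coincide to give the deterministic value $2$, and in the substantive case the same decomposition of $\sum_{t'}\mathbf 1\{z_{t'}=k\}$ into the contributions of $t_q$, $t_q+1$, $t$, $T$ (yielding $0+\tfrac{1}{V-1}+\tfrac{1}{V-1}+0$ via Lemma~\ref{lem:z_t|y=k,t_q=s}) plus $T-4$ conditionally independent regular positions each worth $\tfrac{1}{(V-1)^2}$. The only cosmetic difference is that you argue the $s=t=T-1$ case directly rather than invoking the $t=T$ instance of Lemma~\ref{lem:1/Tz_t'z_t|y=k,t_q=t-1} as the paper does.
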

\begin{proof}
    First, we look at the scenario $s = t$. Since $t_q = s = t$, the input sequence $z_{1:T}$ will contain more than two trigger tokens unless $t = T - 1$, which does not satisfy the condition of input sequence, we examine the case of $t = T - 1$. This condition ensures that $\mathbf{1}\{z_{T - 1} = k\} = \mathbf{1}\{z_T = k\} = 1$, and therefore, we can use the instance $t = T$ from Lemma~\ref{lem:1/Tz_t'z_t|y=k,t_q=t-1}.
    \begin{align*}
        \mathbb{E}\left\lbrack \left(\sum_{t' = 1}^{T} \mathbf{1}\{z_{t'} = k\}\right) \mathbf{1}\{z_t = k\} \mid \cdot \right\rbrack = 2.
    \end{align*}
    Next, we require $s \le t - 2$. It is impossible to have $t = T$ because this violates the input sequence condition, where the trigger token only appears twice. Now, by the linearity of expectation, we can decompose the expectation as follows:
    \begin{align*}
        &\mathbb{E}\left\lbrack \left(\sum_{t' = 1}^{T} \mathbf{1}\{z_{t'} = k\}\right) \mathbf{1}\{z_t = k\} \mid \cdot \right\rbrack \\
        &= \mathbb{E}\lbrack \mathbf{1}\{z_{t_q} = k\}\mathbf{1}\{z_{t} = k\} \mid \cdot \rbrack \\
        &\quad+ \mathbb{E}\lbrack \mathbf{1}\{z_{t_q + 1} = k\}\mathbf{1}\{z_{t} = k\} \mid \cdot\rbrack \\ 
        &\quad + \mathbb{E}\lbrack \mathbf{1}\{z_{t} = k\}\mathbf{1}\{z_{t} = k\} \mid \cdot\rbrack \\ 
        &\quad + \mathbb{E}\lbrack \mathbf{1}\{z_{T} = k\}\mathbf{1}\{z_{t} = k\} \mid \cdot \rbrack \\
        &\quad + \mathbb{E}\left\lbrack \left(\sum_{t' \neq t_q, t_q+1,t, T} \mathbf{1}\{z_{t'} = k\}\right) \mathbf{1}\{z_t = k\} \mid \cdot\right\rbrack.
    \end{align*} 
    Since $t_q = s \le t - 2$ and $t \neq T$, the token $z_t$ is neither trigger nor output. This yields the following:
    \begin{align*}
        &\mathbb{E}\lbrack \mathbf{1}\{z_{t_q} = k\}\mathbf{1}\{z_{t} = k\} \mid \cdot \rbrack = 0, \\
        &\mathbb{E}\lbrack \mathbf{1}\{z_T = k\}\mathbf{1}\{z_{t} = k\} \mid \cdot \rbrack = 0. \\
    \end{align*}
    Given $y$ equals $k$, we have $\mathbf{1}\{z_{t_q + 1} = k\} = 1$.
    Therefore, from Lemma~\ref{lem:z_t|y=k,t_q=s}, we get 
    \begin{align*}
        &\mathbb{E}\lbrack \mathbf{1}\{z_{t_q + 1} = k\}\mathbf{1}\{z_{t} = k\} \mid \cdot\rbrack \\ 
        &= \mathbb{E}\lbrack \mathbf{1}\{z_{t} = k\} \mid \cdot \rbrack = \frac{1}{V - 1}, \\
        &\mathbb{E}\lbrack \mathbf{1}\{z_{t} = k\}\mathbf{1}\{z_{t} = k\} \mid \cdot\rbrack \\
        &= \mathbb{E}\lbrack \mathbf{1}\{z_{t} = k\} \mid \cdot \rbrack = \frac{1}{V - 1}. 
    \end{align*}
    Finally, for $t' \neq t_q, t_q + 1, t$ and $T$, the tokens $z_{t'}$ and $z_t$ are independent. With Lemma~\ref{lem:z_t|y=k,t_q=s}, we obtain 
    \begin{align*}
        &\mathbb{E}\left\lbrack \left(\sum_{t' \neq t_q, t_q+1,t, T} \mathbf{1}\{z_{t'} = k\}\right) \mathbf{1}\{z_t = k\} \mid \cdot\right\rbrack \\
        &= \frac{T - 4}{(V - 1)^2}.
    \end{align*}
    Integrating the equations above allows us to arrive at the conclusion.
\end{proof}

\begin{lem}
\label{lem:z_tx_tq|y=k}
    Given a sequence $z_{1:T}$ that satisfies Sec.~\ref{sec:setup}.
    For any vocabulary $k \in \mathcal{V}$, and $t \in \lbrack 1, T \rbrack \subset \mathbb{N}$, we have 
    \begin{align*}
        &\mathbb{E}\left\lbrack \mathbf{1}\{z_t = k\}\mathbf{1}\{t_q \le t\}x_{t_q} \mid y = k\right\rbrack \\ 
        &= P(t_q = t - 1)\mathbf{1}\{t = T\}w_E(k) \\
        &\quad + P(t_q = t)\mathbf{1}\{t = T - 1\}w_E(k) + O\left(\frac{1}{V}\right).
    \end{align*}
\end{lem}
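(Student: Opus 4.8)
The plan is to condition on the position $t_q$ of the first trigger token and reduce the claim to a short case analysis that uses only the sequence structure from Sec.~\ref{sec:setup}. Since $x_{t_q}=x^{(0)}_{t_q}=w_E(z_{t_q})$ and the indicator $\mathbf{1}\{t_q\le t\}$ restricts the first trigger to the prefix $z_{1:t}$, I would first write
\begin{align*}
&\mathbb{E}\left[\mathbf{1}\{z_t=k\}\,\mathbf{1}\{t_q\le t\}\,x_{t_q}\mid y=k\right] \\
&=\sum_{s=1}^{t}P(t_q=s\mid y=k)\,\mathbb{E}\left[\mathbf{1}\{z_t=k\}\,w_E(z_s)\mid y=k,\ t_q=s\right],
\end{align*}
and, exactly as in the computations preceding Lemma~\ref{lem:z_t|y=k,t_q=s}, treat $P(t_q=s\mid y=k)$ as $P(t_q=s)$ up to an $O(1/V)$ correction.

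Next I would evaluate the inner conditional expectation in three regimes. For $s=t-1$ the output position is $t_o=t_q+1=t$, so $z_t$ is the output token and $\mathbf{1}\{z_t=k\}=1$ under $y=k$; the vector $w_E(z_s)=w_E(z_{t_q})=w_E(q)$ then equals $w_E(k)$ when $t=T$ (the output position coincides with the second trigger, forcing $q=k$) and is an average $\tfrac{1}{V-1}\sum_{j\ne k}w_E(j)=O(1/V)$ when $t<T$ (there $q\ne k$, since otherwise $q$ would occur at $t-1$, $t$ and $T$, contradicting the two-occurrence assumption). For $s=t$ we have $z_t=z_{t_q}=q$, so $\mathbf{1}\{z_t=k\}$ forces $q=k$, which by the same argument used in Lemma~\ref{lem:z_t|y=k,t_q=s} is possible only if $t=T-1$, contributing $\mathbf{1}\{t=T-1\}w_E(k)$. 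For $s\le t-2$, if $t=T$ then $z_t=q$ with $t_q\le T-2$ again forces three occurrences of $q$ and the term vanishes; if $t<T$ then $z_t$ is a ``regular'' token with $P(z_t=k\mid y=k,t_q=s)=1/(V-1)$ by Lemma~\ref{lem:z_t|y=k,t_q=s}, while $z_s=q$ is conditionally uniform on $\mathcal{V}\setminus\{k\}$ and essentially independent of the event $\{z_t=k\}$, so each such term is $O(1/V)$ in the post-pairing sense; since the weights $P(t_q=s)$ sum to at most one, the whole block stays $O(1/V)$.

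Assembling the three regimes, the only surviving leading terms are $P(t_q=t-1)\,\mathbf{1}\{t=T\}\,w_E(k)$ and $P(t_q=t)\,\mathbf{1}\{t=T-1\}\,w_E(k)$ --- which cannot be simultaneously nonzero --- with everything else absorbed into $O(1/V)$, which is the claimed identity. The delicate point of the argument is the bookkeeping around the coincidence $q=o$, which occurs precisely when $t_q=T-1$ (so the output position equals the second-trigger position); one must consistently invoke the ``$q$ appears exactly twice'' convention of Sec.~\ref{sec:setup} to exclude $q=k$ in every other case, and must interpret averages such as $\tfrac{1}{V-1}\sum_{j\ne k}w_E(j)$ as $O(1/V)$ in the same sense (pairing against a fixed embedding or positional vector) used throughout the appendix. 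Everything else is routine conditional-expectation manipulation under the uniform bigram assumptions.
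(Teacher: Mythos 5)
Your proposal is correct and follows essentially the same route as the paper's proof: the same three-way split on the value of $t_q$ (namely $t_q=t-1$, $t_q=t$, and $t_q\le t-2$), the same use of the two-occurrence constraint to force $q=k$ only when the output position coincides with the second trigger (i.e.\ $t=T$ with $t_q=t-1$, or $t=T-1$ with $t_q=t$), and the same $O(1/V)$ absorption of the uniform-average embedding terms. No gaps.
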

\begin{proof}
    We split $\mathbf{1}\{t_q \le t\}$ into $\mathbf{1}\{t_q = t - 1\} + \mathbf{1}\{t_q = t\} + \mathbf{1}\{t_q \le t - 2\}$. Then, the expectation can be decomposed as: 
    \begin{align*}
        &\mathbb{E}\left\lbrack \mathbf{1}\{z_t = k\}\mathbf{1}\{t_q \le t\}x_{t_q} \mid y = k\right\rbrack \\
        &= \mathbb{E}\left\lbrack \mathbf{1}\{z_t = k\}\mathbf{1}\{t_q = t - 1\}x_{t_q} \mid y = k\right\rbrack \\
        &\quad + \mathbb{E}\left\lbrack \mathbf{1}\{z_t = k\}\mathbf{1}\{t_q = t\}x_{t_q} \mid y = k\right\rbrack \\
        &\quad + \mathbb{E}\left\lbrack \mathbf{1}\{z_t = k\}\mathbf{1}\{t_q \le t - 2\}x_{t_q} \mid y = k\right\rbrack.
    \end{align*}
    Note that when $t_q = t - 1$, $z_t$ is the output token. In addition, if we have $t = T$, $z_t$ works as the second output token as well, which means that $z_t = k$ and $x_{t_q} = w_E(k)$. If we find $t \neq T$, the trigger token is not $k$. 
    Thus, we derive  
    \begin{align*}
        &\mathbb{E}\left\lbrack \mathbf{1}\{z_t = k\}\mathbf{1}\{t_q = t - 1\}x_{t_q} \mid y = k\right\rbrack \\ 
        &= P(t_q = t - 1) \\
        &\qquad \times \mathbb{E}\left\lbrack \mathbf{1}\{z_t = k\}x_{t_q} \mid y = k, t_q = t - 1\right\rbrack \\ 
        &= P(t_q = t - 1)\mathbf{1}\{t = T\}w_E(k) \\
        &\quad + P(t_q = t - 1)\mathbf{1}\{t \neq T\}\sum_{j \in \mathcal{V}\setminus \{k\}}\frac{w_E(j)}{V - 1}.
    \end{align*}
    Next, we discuss $t_q = t$. Since $t_q$ is the first trigger position, we always observe that $t_q \le T - 1$. If two trigger tokens are adjacent to each other, i.e., $t_q = T - 1$, we can state that $z_{T - 1} = z_T = k$ because of the equation $y = k$. Otherwise, the trigger token must belong to $\mathcal{V} \setminus \{k\}$, and $\mathbf{1}\{z_t = k\} = 0$. This gives  
    \begin{align*}
        &\mathbb{E}\left\lbrack \mathbf{1}\{z_t = k\}\mathbf{1}\{t_q = t\}x_{t_q} \mid y = k\right\rbrack \\
        &= P(t_q = t - 1)\mathbf{1}\{t = T\}w_E(k).
    \end{align*}
    For $t_q \le t - 2$, we again analyze the cases of $t = T$ and $t \neq T$. Provided that $t = T$, $\mathbf{1}\{z_t = k\} = 1$ implies that the trigger token is decided to be $k$, while the output token is also $k$. This leads to more than two occurrences of trigger token, which violates the input sequence condition. Now, we assume $t \le T - 1$. This time, the trigger token $z_{t_q}$ is not $k$, and $z_t$ is not a trigger token, which yields
    \begin{align*}
        &\mathbb{E}\left\lbrack \mathbf{1}\{z_t = k\}\mathbf{1}\{t_q \le t - 2\}x_{t_q} \mid y = k\right\rbrack \\
        &= P(t_q \le t - 2)\mathbb{E}\lbrack w_E(z_{t_q}) \cdot \mathbb{E}\lbrack \mathbf{1}\{z_t = k\} \\
        &\quad \mid z_{t_q}, y = k, t_q \le t - 2 \rbrack \mid y = k, t_q \le t - 2\rbrack \\
        &= P(t_q \le t - 2)\sum_{q \in \mathcal{V}\setminus \{k\}} \frac{w_E(q)}{V - 1} \\
        &\quad \times \mathbb{E}\lbrack \mathbf{1}\{z_t = k\} \mid z_{t_q} = q, y = k, t_q \le t - 2 \rbrack \\
        &= P(t_q \le t - 2)\sum_{q \in \mathcal{V}\setminus \{k\}} \frac{w_E(q)}{(V - 1)^2}.
    \end{align*}
    This concludes the proof.
\end{proof}

\begin{lem}
\label{lem:z_tbarx|y=k}
    Given a sequence $z_{1:T}$ that satisfies Sec.~\ref{sec:setup}.
    For any vocabulary $k \in \mathcal{V}$, and $t \in \lbrack 1, T \rbrack \subset \mathbb{N}$, we have 
    \begin{align}
        &\mathbb{E}\left\lbrack \mathbf{1}\{z_t = k\}\mathbf{1}\{t_q \le t\}\bar{x}_{1:t} \mid y = k\right\rbrack \nonumber \\
        &= \frac{P(t_q = t - 1)}{t}(1 + \mathbf{1}\{t = T\})w_E(k) \nonumber \\ 
        &\quad + \frac{\mathbf{1}\{t = T-1\}P(t_q = t)}{t}w_E(k) + O\left(\frac{1}{V}\right).
        \label{eq:z_tbarx|y=k}
    \end{align}
\end{lem}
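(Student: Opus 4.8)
The plan is to mimic the proof of Lemma~\ref{lem:z_tx_tq|y=k} line by line, the only change being that the averaged embedding $\bar{x}_{1:t}=\frac1t\sum_{s=1}^{t}w_E(z_s)$ now replaces the single embedding $x_{t_q}=w_E(z_{t_q})$. First I would push the expectation through the average,
\[
\mathbb{E}\!\left[\mathbf{1}\{z_t=k\}\mathbf{1}\{t_q\le t\}\,\bar{x}_{1:t}\mid y=k\right]=\frac1t\sum_{s=1}^{t}\mathbb{E}\!\left[\mathbf{1}\{z_t=k\}\mathbf{1}\{t_q\le t\}\,w_E(z_s)\mid y=k\right],
\]
and then decompose $\mathbf{1}\{t_q\le t\}=\mathbf{1}\{t_q=t-1\}+\mathbf{1}\{t_q=t\}+\mathbf{1}\{t_q\le t-2\}$, treating the three regimes separately and collecting all lower-order contributions into $O(1/V)$ by near-orthogonality (Assumption~\ref{assump:orthogonal}).

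When $t_q=t-1$ the token $z_t$ is the first output token, so the conditioning $y=k$ forces $\mathbf{1}\{z_t=k\}=1$ (Lemma~\ref{lem:z_t|y=k,t_q=t-1}); in the sum over $s$ the term $s=t$ always contributes $w_E(k)$, and a second copy of $w_E(k)$ appears at $s=t-1$ exactly when $t=T$, the degenerate configuration in which the second trigger occurrence coincides with the first output, so $q=o=k$. Every other position is either the trigger $q\in\mathcal V\setminus\{k\}$ or an ordinary bigram draw and contributes $O(1/V)$, which produces the $\frac{P(t_q=t-1)}{t}(1+\mathbf{1}\{t=T\})\,w_E(k)$ term. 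When $t_q=t$ the token $z_t$ is the first trigger, so $\mathbf{1}\{z_t=k\}\neq 0$ demands $q=k$, which together with $o=k$ forces the two triggers to be adjacent, i.e. $t=T-1$; in that case only $s=t$ contributes a guaranteed $w_E(k)$ and the rest is $O(1/V)$, yielding the $\frac{\mathbf{1}\{t=T-1\}P(t_q=t)}{t}\,w_E(k)$ term. When $t_q\le t-2$ one first argues $t\neq T$ (otherwise $q=o=k$ would force a third trigger occurrence), and then Lemma~\ref{lem:z_t|y=k,t_q=s} gives $\mathbb{E}[\mathbf{1}\{z_t=k\}\mid y=k,t_q=s]=\frac{1}{V-1}$; since $\|\bar{x}_{1:t}\|\le 1$ holds deterministically, the norm of this whole block is bounded by $\mathbb{E}[\mathbf{1}\{z_t=k\}\mathbf{1}\{t_q\le t-2\}\mid y=k]=O(1/V)$. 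Summing the three contributions gives the claimed identity.

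The only delicate point, and the step I expect to require the most care, is the bookkeeping of which positions $s\in[t]$ are deterministically equal to $k$ under the conditioning. The output position $t_q+1$ always carries $w_E(k)$ given $y=k$, so in each regime I would check whether $t_q+1\le t$ and whether it has already been counted: in the $t_q=t-1$ regime it coincides with $s=t$, whereas in the $t_q=t$ and $t_q\le t-2$ regimes it equals $T>t$ and hence lies outside the average; and in the $t_q\le t-2$ regime the stray $w_E(k)$ coming from $s=t_q+1$ is still suppressed because $\{z_t=k\}$ is a rare event there. The degenerate adjacent-trigger configuration $t_q=T-1$, in which a single token simultaneously plays the role of trigger, second trigger and output, is exactly what generates the $\mathbf{1}\{t=T\}$ and $\mathbf{1}\{t=T-1\}$ indicators, so these cases must be isolated explicitly rather than folded into the generic $O(1/V)$ count, just as in the companion Lemma~\ref{lem:z_tx_tq|y=k}.
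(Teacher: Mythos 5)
Your proof follows the paper's argument essentially verbatim: the same decomposition of $\mathbf{1}\{t_q\le t\}$ into the regimes $t_q=t-1$, $t_q=t$, and $t_q\le t-2$, the same identification of which positions deterministically carry $w_E(k)$ (including the degenerate adjacent-trigger configuration $t_q=T-1$ that generates the $\mathbf{1}\{t=T\}$ and $\mathbf{1}\{t=T-1\}$ factors), and the same $O(1/V)$ accounting for everything else --- your norm bound $\|\bar{x}_{1:t}\|\le 1$ for the $t_q\le t-2$ block is merely a cleaner packaging of the paper's term-by-term expansion into $x_{t_q},x_{t_q+1},x_t,x_{t'}$. One harmless slip: in the $t_q\le t-2$ regime the output position $t_q+1\le t-1$ lies \emph{inside} the average (it does not equal $T$), but your subsequent rare-event suppression via $P(z_t=k\mid y=k,\,t_q=s)=O(1/V)$ handles that stray $w_E(k)$ correctly anyway.
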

\begin{proof}
As in the proof of Lemma \ref{lem:z_tx_tq|y=k}, we analyze each scenario depending on the value of $t_q$. Firstly, let us consider $t_q = t - 1$. This leads to the fact that $z_t$ is the output token. Given $y = k$, we have $z_t = k$.
\begin{align*}
    &\mathbb{E}\left\lbrack \mathbf{1}\{z_t = k\}\mathbf{1}\{t_q = t - 1\}\bar{x}_{1:t} \mid y = k\right\rbrack \\
    &= P(t_q =  t - 1)\mathbb{E}\left\lbrack \bar{x}_{1:t} \mid y = k, t_q = t - 1\right\rbrack. 
\end{align*}
In the event that $t = T$ is valid, $x_{t_q} = x_T = w_E(k)$ holds and the other $T - 2$ tokens are not $k$. Thus, it gives
\begin{align*}
    &\mathbb{E}\left\lbrack \bar{x}_{1:T} \mid y = k\right\rbrack \\
    &= \frac{1}{T}\left\{2w_E(k) + (T-2)\sum_{j \in \mathcal{V}\setminus\{k\}} \frac{w_E(j)}{V - 1}\right\}.
\end{align*}
When $t \neq T$ holds, we can say that $x_{t_q} \neq w_E(k)$, while $x_{t_q + 1} = w_E(k)$ because of $y = k$. This gives us 
\begin{align*}
    &\mathbb{E}\lbrack x_{t_q} \mid y = k, t_q = t - 1\rbrack = \sum_{q \in \mathcal{V}\setminus\{k\}} \frac{w_E(j)}{V - 1}, \\
    &\mathbb{E}\lbrack x_{t_q + 1} \mid y = k, t_q = t - 1\rbrack = w_E(k).
\end{align*}
Also, we can derive the following equations for $i \in \lbrack 1, t\rbrack \setminus \{t_q, t_q + 1\}$:
\begin{align*}
    &\mathbb{E}\lbrack x_i \mid y = k, t_q = t - 1\rbrack \\
    &= \sum_{j \in \mathcal{V}} P(z_i = j \mid \cdot) w_E(j) \\ 
    &= \sum_{q = 1}^{V} \sum_{j = 1}^{V} P(z_i = j, z_{t_q} = q \mid \cdot) w_E(j) \\
    &= \sum_{q \in \mathcal{V}\setminus\{k\}}\sum_{j \in \mathcal{V}\setminus\{q\}} P(z_i = j, z_{t_q} = q \mid \cdot) w_E(j) \\
    &= \sum_{\substack{q \in \mathcal{V}\setminus\{k\}\\j \in \mathcal{V}\setminus\{q\}}} P(z_i = j, \mid z_{t_q} = q, \cdot) \\
    &\qquad \times P(z_{t_q} = q \mid \cdot) w_E(j) \\
    &= \sum_{q \in \mathcal{V}\setminus\{k\}}\sum_{j \in \mathcal{V}\setminus\{q\}} \frac{w_E(j)}{(V - 1)^2}.
\end{align*}
In the above, we use the fact that $P(x_i = j, x_{t_q} = q) = 0$ if $j = q$ or $q = k$ holds.
To summarize, we have
\begin{align}
    &\mathbb{E}\left\lbrack \bar{x}_{1:t} \mid y = k, t_q = t - 1\right\rbrack \nonumber \nonumber \\
    &= 
    \begin{cases}
        \frac{2w_E(k)}{t} + \frac{t - 2}{t}\sum_{j \in \mathcal{V}\setminus\{k\}} \frac{w_E(j)}{V - 1} & \text{if $t = T$}, \\
        \frac{1}{t} \left\{w_E(k) + \sum_{q \in \mathcal{V}\setminus\{k\}} \frac{w_E(q)}{V - 1}\right\} \\
        \quad+\frac{t-2}{t}\left\{\sum_{\substack{q \in \mathcal{V}\setminus\{k\} \\j \in \mathcal{V}\setminus\{q\}}} \frac{w_E(j)}{(V - 1)^2}\right\} & \text{otherwise}.
    \end{cases} \label{eq:z_tbarx|y=k,t_q=t-1} \\
    &= \begin{cases}
        \frac{2}{T}w_E(k) + O\left(\frac{1}{V}\right) & \text{if $t = T$}, \\
        \frac{1}{t}w_E(k) + O\left(\frac{1}{V}\right) & \text{otherwise}.
    \end{cases} \nonumber
\end{align}

Next, assume $t_q = t$. In this case, $t = T$ implies $t_q = T$ which is a contradiction. Furthermore, $t_q = t < T - 1$ implies $t_q < t_{q+1} < T$. Under this condition, we cannot have $z_t = k$, because it produces three appearances of the trigger token. Hence, it suffices to consider $t = T - 1$. If $t_q = t = T - 1$ is true, we can state that $x_t = k$, leading to the formulation below:
\begin{align*}
    &\mathbb{E}\lbrack \mathbf{1}\{z_t = k\} \bar{x}_{1:t} \mid y = k, t_q = t\rbrack \\
    &= \mathbb{E}\lbrack \bar{x}_{1:t} \mid y = k, t_q = t\rbrack \nonumber \\
    &= \frac{1}{t}\left\{1w_E(k) + (T - 2)\sum_{\substack{1 \le j \le V \\ j \neq k}} \frac{w_E(j)}{V - 1}\right\} \\ 
    &= \frac{1}{T - 1}w_E(k) + O\left(\frac{1}{V}\right),
\end{align*}
where the second equation follows from the same argument in the instance $t = T$ in Eq.~\ref{eq:z_tbarx|y=k,t_q=t-1}, but this time $x_T$ is not contained in $x_{1:t}$. 

Lastly, we focus on $t_q = s \le t - 2$. We argue that the trigger token $z_{t_q} = z_T$ is not $k$, due to the condition of the number of trigger token in the input sequence. This means that $t \neq T$ and $z_t$ does not serve as a trigger token. Now, we split $\bar{x}_{1:t}$ into $\frac{1}{t}(x_{t_q} + x_{t_q + 1} + x_t + \sum_{t'} x_{t'})$ and obtain
\begin{align*}
    &\mathbb{E}\lbrack \mathbf{1}\{z_t = k\} x_{t_q}\mid y = k, t_q = s\rbrack \\
    &= \sum_{q \in \mathcal{V}\setminus\{k\}} P(z_t = k, z_{t_q} = q \mid \cdot) w_E(q) \\
    &= \sum_{q \in \mathcal{V}\setminus\{k\}} \frac{w_E(q)}{(V - 1)^2},\\
    &\mathbb{E}\lbrack \mathbf{1}\{z_t = k\} x_{t_q + 1}\mid y = k, t_q = s\rbrack \\
    &= P(z_t = k \mid \cdot)w_E(k)\\
    &= \sum_{q \in \mathcal{V}\setminus\{k\}} P(z_t = k, z_{t_q} = q \mid \cdot)w_E(k) \\
    &= \sum_{q \in \mathcal{V}\setminus\{k\}} \frac{w_E(k)}{(V - 1)^2}, \\
    &= \frac{w_E(k)}{V - 1}, \\
    &\mathbb{E}\lbrack \mathbf{1}\{z_t = k\} x_{t}\mid y = k, t_q = s\rbrack \\
    &= P(z_t = k \mid y = k, t_q = s)w_E(k)\\
    &= \frac{w_E(k)}{V - 1}, \\
    &\mathbb{E}\lbrack \mathbf{1}\{z_t = k\} x_{t'}\mid y = k, t_q = s\rbrack \\
    &= \sum_{\substack{q \in \mathcal{V}\setminus\{k\}\\j \in \mathcal{V}\setminus \{q\}}}P(z_t = k, z_{t'} = j, z_{t_q} = q \mid \cdot) w_E(j) \\
    &= \sum_{q \in \mathcal{V}\setminus\{k\}} P(z_{t_q} = q \mid \cdot) \\
    &\quad \times \sum_{j \in \mathcal{V}\setminus \{q\}}P(z_t = k, z_{t'} = j \mid z_{t_q} = q, \cdot) w_E(j) \\
    &= \sum_{q \in \mathcal{V}\setminus\{k\}} \frac{1}{V - 1} \sum_{j \in \mathcal{V}\setminus \{q\}} \frac{w_E(j)}{(V - 1)^2}.
\end{align*}
In short, we have 
\begin{equation*}
    \mathbb{E}\lbrack \mathbf{1}\{z_t = k\} \bar{x}_{1:t} \mid y = k, t_q = s\rbrack = O\left(\frac{1}{V}\right).
\end{equation*}
Summing up all the related terms leads to Eq.~\ref{eq:z_tbarx|y=k}, which finishes the proof.
\end{proof}

\begin{lem}
\label{lem:1/Tz_t'x_t_qz_t|y=k}
    Given a sequence $z_{1:T}$ that satisfies Sec.~\ref{sec:setup}.
    For any vocabulary $k \in \mathcal{V}$, and $t \in \lbrack 1, T \rbrack \subset \mathbb{N}$, we have 
    \begin{align}
        &\mathbb{E}\left\lbrack \frac{1}{T}\sum_{t' = 1}^{T} \mathbf{1}\{z_{t'} = k\} \right. \notag \\
        &\quad \left. \times \mathbf{1}\{t_q \le t\} x_{t_q} \mathbf{1}\{z_t = k\} \mid y = k\right\rbrack \nonumber \\
        &= \frac{2w_E(k)}{T}\mathbf{1}\{t = T\}P(t_q = t - 1) \nonumber \\
        &\quad + \frac{2w_E(k)}{T}\mathbf{1}\{t = T - 1\}P(t_q = t) \notag \\
        &\quad + \frac{1}{T}\cdot O\left(\frac{1}{V}\right).
        \label{eq:1/Tz_t'x_t_qz_t|y=k}
    \end{align}        
\end{lem}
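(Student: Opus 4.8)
The plan is to combine the two case analyses already carried out in the excerpt: the one for $\mathbf{1}\{t_q\le t\}x_{t_q}$ in Lemma~\ref{lem:z_tx_tq|y=k}, and the one for the occurrence count $\sum_{t'}\mathbf{1}\{z_{t'}=k\}$ in Lemma~\ref{lem:1/Tz_t'z_t|y=k,t_q=t-1}. First I would write $x_{t_q}=w_E(z_{t_q})$ and decompose $\mathbf{1}\{t_q\le t\}=\mathbf{1}\{t_q=t-1\}+\mathbf{1}\{t_q=t\}+\mathbf{1}\{t_q\le t-2\}$, so that the target expectation splits into three pieces; conditioning each piece further on the value of $t_q$ and using $P(t_q=s\mid y=k)=P(t_q=s)$ (independence of the trigger position and the output token, as in the earlier lemmas) pulls the position probabilities out front.

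For the $t_q=t-1$ piece, $z_t=z_{t_q+1}$ is the first output token, so $\mathbf{1}\{z_t=k\}=1$ on $\{y=k\}$. If $t=T$, the token $z_T$ is at once the second trigger occurrence and the output of the trigger at $T-1$, which forces trigger $=$ output $=y=k$; since the trigger value occurs exactly twice, $k$ occurs precisely at positions $T-1$ and $T$, so $\sum_{t'}\mathbf{1}\{z_{t'}=k\}=2$ and $x_{t_q}=w_E(k)$, giving the contribution $\frac{2w_E(k)}{T}P(t_q=t-1)$. If $t\ne T$, the trigger cannot be $k$ (otherwise $k$ would appear at $t_q$, $t_q+1$ and $T$), so $x_{t_q}$ is $w_E$ of a token uniform on $\mathcal{V}\setminus\{k\}$, and exactly as in Lemma~\ref{lem:z_tx_tq|y=k} this averaged embedding, together with the $\frac1T$ prefactor, contributes only $\frac1T O(\frac1V)$ by near-orthogonality. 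For the $t_q=t$ piece, $z_t=z_{t_q}$ is the first trigger, and by Lemma~\ref{lem:z_t|y=k,t_q=s} we have $\mathbf{1}\{z_t=k\}=0$ on $\{y=k\}$ unless $t=T-1$; in that boundary case $t_q=T-1$ again forces trigger $=$ output $=k$, so $\sum_{t'}\mathbf{1}\{z_{t'}=k\}=2$, $x_{t_q}=w_E(k)$, and the contribution is $\frac{2w_E(k)}{T}P(t_q=t)$.

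For the $t_q\le t-2$ piece, once we condition on $y=k$ the case $t=T$ is impossible (it would force either a third occurrence of $k$ or $y\ne k$), so $z_t$ is an ordinary bigram token with $P(z_t=k\mid\cdot)=O(\frac1V)$, while $x_{t_q}=w_E(z_{t_q})$ with $z_{t_q}$ uniform on $\mathcal{V}\setminus\{k\}$; both effects drive this piece into $\frac1T O(\frac1V)$. Summing the three pieces yields the stated identity. The main obstacle is not conceptual but bookkeeping: in each sub-case one must pin down exactly which of the positions $1,\dots,T$ can equal $k$, and with what multiplicity (depending on whether $k$ is itself the trigger), and verify that all residual averaged-embedding and small-probability terms collapse into the single $\frac1T O(\frac1V)$ error --- the same combinatorial care already exercised in Lemmas~\ref{lem:z_tx_tq|y=k}, \ref{lem:z_tbarx|y=k} and \ref{lem:1/Tz_t'z_t|y=k,t_q=t-1}.
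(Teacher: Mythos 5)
Your proposal is correct and follows essentially the same route as the paper's proof: the same decomposition of $\mathbf{1}\{t_q\le t\}$ into the cases $t_q=t-1$, $t_q=t$, and $t_q\le t-2$, the same identification of the boundary cases $t=T$ and $t=T-1$ where the trigger, output, and $y$ all coincide with $k$ (forcing exactly two occurrences and $x_{t_q}=w_E(k)$), and the same collapse of all remaining averaged-embedding terms into $\frac{1}{T}O(\frac{1}{V})$. No gaps.
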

\begin{proof}
We consider the following cases: 

\textbf{Case 1:} $t_q = t - 1$. \\
If $t = T$ holds, this gives $z_{t_q} = z_T = y = k$ and $\mathbf{1}\{z_{t - 1} = k\} = \mathbf{1}\{z_t = k\} = 1$. Thus, we have 
\begin{align*}
    &\mathbb{E}\left\lbrack \mathbf{1}\{z_{t - 1} = k\}x_{t_q}\mathbf{1}\{z_t = k\}\mid  y=k, t_q = t - 1\right\rbrack \\
    &= \mathbb{E}\left\lbrack x_{t_q}\mid  y=k, t_q = t - 1\right\rbrack = w_E(k), \\
    &\mathbb{E}\left\lbrack \mathbf{1}\{z_t = k\}x_{t_q}\mathbf{1}\{z_t = k\}\mid y=k, t_q = t - 1\right\rbrack \\
    &= \mathbb{E}\left\lbrack x_{t_q}\mid  y=k, t_q = t - 1\right\rbrack = w_E(k).
\end{align*}
Since $z_{t'}$ for $t' \neq t_q, t_q + 1$ is not a trigger token, it follows that $\mathbf{1}\{z_{t'} = k\} = 0$, and we have 
\begin{align*}
    \mathbb{E}\left\lbrack \sum_{t' \neq t - 1, t }\mathbf{1}\{z_{t'} = k\}x_{t_q}\mathbf{1}\{z_t = k\}\mid \cdot \right\rbrack = 0.
\end{align*}
Otherwise, when $t \neq T$, $z_{t_q} = z_T$ is not $k$, yielding
\begin{align*}
    \mathbb{E}\left\lbrack \mathbf{1}\{z_{t - 1} = k\}x_{t_q}\mathbf{1}\{z_t = k\}\mid  \cdot\right\rbrack = 0. \\
    \mathbb{E}\left\lbrack \mathbf{1}\{z_{T} = k\}x_{t_q}\mathbf{1}\{z_t = k\}\mid  \cdot\right\rbrack = 0.
\end{align*}
Also, $z_{t_q}$ is sampled from $\mathcal{V} \setminus \{k\}$, we get 
\begin{align*}
    &\mathbb{E}\left\lbrack \mathbf{1}\{z_t = k\}x_{t_q}\mathbf{1}\{z_t = k\}\mid y=k, t_q = t - 1\right\rbrack \\
    = &\mathbb{E}\left\lbrack x_{t_q}\mid  y=k, t_q = t - 1\right\rbrack \\
    = &\sum_{j \in \mathcal{V} \setminus \{k\}} \frac{w_E(j)}{V - 1}.
\end{align*}
For the rest of $t'$, we note that $z_{t_q} \neq k$ and obtain 
\begin{align}
    &\mathbb{E}\left\lbrack \sum_{t' \neq t, t -1 , T}\mathbf{1}\{z_{t'} = k\}x_{t_q}\mathbf{1}\{z_t = k\}\mid \cdot \right\rbrack \nonumber \\
    &= \sum_{\substack{t' \\ q \in \mathcal{V}\setminus \{k\}}} P(z_{t'} = k
    , z_{t_q} = q \mid \cdot ) w_E(q) \nonumber \\
    &= (T - 3) \sum_{q \in \mathcal{V}\setminus \{k\}} \frac{w_E(q)}{(V - 1)^2}.
    \label{eq:1/Tz_t'x_t_q|y=k,t_q=t-1-rest}
\end{align}
In summary, we find that 
\begin{align}
\label{eq:1/Tz_t'x_t_qz_t|y=k,t_q=t-1}
    &\mathbb{E}\left\lbrack \frac{1}{T}\sum_{t' = 1}^{T} \mathbf{1}\{z_{t'} = k\}\right. \nonumber \\ 
    &\quad \left. \times \mathbf{1}\{t_q = t - 1\} x_{t_q} \mathbf{1}\{z_t = k\} \mid y = k\right\rbrack \nonumber \\
    = &\frac{P(t_q = t - 1)}{T}\mathbf{1}\{t = T\} 2 w_E(k) \nonumber \\
    &\quad + \frac{P(t_q = t - 1)}{T}\mathbf{1}\{t \neq T\}O\left(\frac{1}{V}\right).
\end{align}

\textbf{Case 2:} $t_q = t$. \\
For this scenario, $\mathbf{1}\{z_t = k\} = 1$  holds only when $t_q = t = T - 1$. Thus, we can state that 
\begin{align*}
    &\mathbb{E}\lbrack \mathbf{1}\{z_{t_q} = k\} x_{t_q} \mathbf{1}\{z_t = k\} \mid y = k, t_q = t\rbrack \\
    &= \mathbb{E}\lbrack x_{t_q} \mid y = k, t_q = t\rbrack
    = w_E(k), \\
    &\mathbb{E}\lbrack \mathbf{1}\{z_T = k\} x_{t_q} \mathbf{1}\{z_t = k\} \mid y = k, t_q = t\rbrack \\
    &= \mathbb{E}\lbrack x_{t_q} \mid y = k, t_q = t\rbrack= w_E(k).
\end{align*}
For the other $t'$, we have $\mathbf{1}\{z_{t'} = k\} = 0$ because these tokens are not a trigger token. This means that 
\begin{align*}
    &\mathbb{E}\left\lbrack \sum_{t' \neq t, T}\mathbf{1}\{z_{t'} = k\}x_{t_q}\mathbf{1}\{z_t = k\}\mid y=k, t_q = t\right\rbrack \\
    &= 0.
\end{align*}
In short, we have 
\begin{align}
\label{eq:1/Tz_t'x_t_qz_t|y=k,t_q=t}
    &\mathbb{E}\left\lbrack \frac{1}{T}\sum_{t' = 1}^{T} \mathbf{1}\{z_{t'} = k\}\mathbf{1}\{t_q = t\} \right. \nonumber \\
    &\qquad \left. \times x_{t_q} \mathbf{1}\{z_t = k\}\mid y = k\right\rbrack \nonumber \\
    &= \frac{\mathbf{1}\{t = T - 1\}P(t_q = t)}{T} 2 w_E(k).
\end{align}

\textbf{Case 3:} $t_q \le t - 2$. \\
We first let $t = T$. If $\mathbf{1}\{z_t = k\} = 1$ holds, then the trigger token becomes $k$, but we find that $ z_{t_q} = z_{t_q + 1} = z_T = k $, meaning the trigger token appears more than two times. Therefore, we only consider $t \neq T$. Given $z_{t_q} (= z_T) = k$, it also fails to satisfy the required conditions for the sequence, hence it follows that
\begin{align*}
    &\mathbb{E}\lbrack \mathbf{1}\{z_s = k\} x_{t_q} \mathbf{1}\{z_t = k\} \mid y = k, t_q = s\rbrack = 0, \\
    &\mathbb{E}\lbrack \mathbf{1}\{z_T = k\} x_{t_q} \mathbf{1}\{z_t = k\} \mid y = k, t_q = s\rbrack = 0, \\
\end{align*}
where $1 \le s \le t - 2$. \\
Similarly to Eq.~\ref{eq:1/Tz_t'x_t_q|y=k,t_q=t-1-rest}, we can derive that
\begin{align*}
    &\mathbb{E}\lbrack \mathbf{1}\{z_{t} = k\} x_{t_q} \mathbf{1}\{z_t = k\} \mid y = k, t_q = s\rbrack \\ 
    &= \mathbb{E}\lbrack x_{t_q} \mathbf{1}\{z_t = k\} \mid y = k, t_q = s\rbrack \\
    &= \sum_{q \in \mathcal{V}\setminus \{k\}} \frac{w_E(q)}{(V - 1)^2}, \\
    &\mathbb{E}\lbrack \mathbf{1}\{z_{s+1} = k\} x_{t_q} \mathbf{1}\{z_t = k\} \mid y = k, t_q = s\rbrack  \\
    &= \mathbb{E}\lbrack x_{t_q} \mathbf{1}\{z_t = k\} \mid y = k, t_q = s\rbrack \\
    &= \sum_{q \in \mathcal{V}\setminus \{k\}} \frac{w_E(q)}{(V - 1)^2}.
\end{align*}
In addition, we have 
\begin{align*}
    &\mathbb{E}\left\lbrack \sum_{t' \neq s, s+1, t, T}\mathbf{1}\{z_{t'} = k\}x_{t_q} \right. \\
    &\quad \left. \times \mathbf{1}\{z_t = k\}\mid y=k, t_q = s\right\rbrack \nonumber \\
    &= \sum_{\substack{t'\\q \in \mathcal{V}\setminus \{k\}}} P(z_{t'} = k, z_t = k,
     z_{t_q} = q \mid \cdot) w_E(q) \nonumber \\
    &= \sum_{\substack{t'\\q \in \mathcal{V}\setminus \{k\}}} P(z_{t'} = k, z_t = k,
      \mid z_{t_q} = q, \cdot) \\
      &\quad \times P(z_{t_q} = q \mid y = k, t_q = s) w_E(q) \nonumber \\
    &= (T - 4) \sum_{q \in \mathcal{V}\setminus \{k\}} \frac{w_E(q)}{(V - 1)^3}.
\end{align*}
These equations give us that for $1 \le s \le t 
- 2$,  
\begin{align}
\label{eq:1/Tz_t'x_t_qz_t|y=k,t_q=s}
    &\mathbb{E}\left\lbrack \sum_{t' = 1}^{T} \mathbf{1}\{z_{t'} = k\}\mathbf{1}\{t_q = s\} x_{t_q} \right. \\
    &\quad \left. \times \mathbf{1}\{z_t = k\} \mid y = k\right\rbrack \nonumber \\
    &= O\left(\frac{1}{V^2}\right).
\end{align}
Gathering all the equations of $1 \le t_q \le t$ leads to Eq.~\ref{eq:1/Tz_t'x_t_qz_t|y=k} and we have proven the statement.
\end{proof}

\begin{lem}
\label{lem:1/Tz_t'barxz_t|y=k}
    Given a sequence $z_{1:T}$ that satisfies Sec.~\ref{sec:setup}.
    For any vocabulary $k \in \mathcal{V}$, and $t \in \lbrack 1, T \rbrack \subset \mathbb{N}$, we have 
    \begin{align}
        &\mathbb{E}\left\lbrack \frac{1}{T}\sum_{t' = 1}^{T} \mathbf{1}\{z_{t'} = k\} \right. \nonumber \\
        &\quad \left. \times \mathbf{1}\{t_q \le t\} \bar{x}_{1:t} \mathbf{1}\{z_t = k\} \mid y = k\right\rbrack \nonumber \\
        &= \frac{(4\cdot \mathbf{1}\{t = T\} + \mathbf{1}\{t \neq T\})}{Tt} \nonumber \\
        &\quad \times P(t_q = t - 1) \cdot w_E(k) \nonumber \\
        &\quad + \frac{2\cdot\mathbf{1}\{t = T - 1\}P(t_q = t)}{Tt}w_E(k) \nonumber \\
        &\quad + O\left(\frac{1}{VT}\right).
        \label{eq:1/Tz_t'barxz_t|y=k}
    \end{align}        
\end{lem}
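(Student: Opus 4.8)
The plan is to reduce this to the bookkeeping already carried out in Lemma~\ref{lem:1/Tz_t'x_t_qz_t|y=k} and Lemma~\ref{lem:z_tbarx|y=k}, the only new ingredient being the extra average over the position index inside $\bar{x}_{1:t}=\frac{1}{t}\sum_{s=1}^{t}x_s$. First I would expand $\bar{x}_{1:t}$ by linearity and split the outer count $\sum_{t'=1}^{T}\mathbf{1}\{z_{t'}=k\}$ into the structurally determined positions --- the first trigger $t_q$, the first output token $t_o=t_q+1$, and the last token $T$ --- together with the remaining ``generic'' positions, whose tokens are drawn from a (restricted) uniform distribution and therefore equal $k$ only with probability $O(1/V)$. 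As in the companion lemmas, I would then condition on the value of $t_q$ and treat the regimes $t_q=t-1$, $t_q=t$, and $t_q\le t-2$ separately; the event $t_q>t$ contributes nothing because of the factor $\mathbf{1}\{t_q\le t\}$.

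In the regime $t_q=t-1$ the token $z_t=z_{t_o}$ is the output token, so $\mathbf{1}\{z_t=k\}=1$ automatically under $y=k$. If moreover $t=T$, then $z_{T-1}=z_T=k$ --- the last token is simultaneously the output of the first trigger and the second trigger occurrence --- so $\sum_{s=1}^{t}x_s$ contains two copies of $w_E(k)$ while $\sum_{t'=1}^{T}\mathbf{1}\{z_{t'}=k\}$ equals $2$ plus an $O(T/V)$ generic count; multiplying these and keeping only the deterministic part yields $4\,w_E(k)$, and the prefactors $\frac1T\cdot\frac1t$ give the $\frac{4}{Tt}$ coefficient in the statement. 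If $t\neq T$, the trigger token is not $k$, so among $x_1,\dots,x_t$ only $x_t=w_E(k)$ is forced and the count equals $1+O(T/V)$, producing the $\frac{1}{Tt}$ coefficient. The $O(1/V)$ estimates for all remaining cross terms follow exactly as in Lemma~\ref{lem:z_t|y=k,t_q=s} and Lemma~\ref{lem:1/Tz_t'x_t_qz_t|y=k}.

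The regime $t_q=t$ forces $t=T-1$: since $\mathbf{1}\{z_t=k\}=1$ and $z_t$ is then the first trigger, the trigger token equals $k$, which is possible only when the two trigger occurrences are adjacent. Here $z_{T-1}=z_T=k$, but only position $T-1$ lies inside $x_{1:t}$, so $\sum_{s=1}^{t}x_s$ contributes one copy of $w_E(k)$ while the count is again $2+O(T/V)$, producing the $\frac{2}{Tt}$ coefficient. When $t_q\le t-2$ one first checks that $t\neq T$ (otherwise the trigger would coincide with the output token and appear more than twice), hence $z_t$ is neither a trigger nor an output token and every term carries a factor $P(z_t=k\mid\cdot)=O(1/V)$; after the $\frac1T$ prefactor this entire regime is absorbed into the $O(1/(VT))$ error. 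Collecting the three contributions and factoring out $w_E(k)$ and $\frac{1}{Tt}$ gives the stated formula.

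The main obstacle is the combinatorial bookkeeping of the double sum $\big(\sum_{t'}\mathbf{1}\{z_{t'}=k\}\big)\big(\sum_{s}x_s\big)\mathbf{1}\{z_t=k\}$: one must identify, within each $t_q$-regime, exactly which pairs $(t',s)$ are ``doubly determined'', i.e.\ both $z_{t'}$ and $z_s$ forced to equal $k$, and hence contribute at order $1$, versus pairs in which at least one index is generic and whose expectation is $O(1/V)$ --- with the subtlety that the probability a generic position equals $k$ depends on whether it precedes or follows the first trigger and on the conditioning event. Since the set of forced positions is the same as in Lemma~\ref{lem:1/Tz_t'x_t_qz_t|y=k} (with $x_{t_q}$ replaced by $\bar{x}_{1:t}$), I would reuse those sub-computations verbatim and only redo the multiplicity counts --- how many of $t_q,t_o,T$ fall in $\{1,\dots,t\}$ --- which is exactly what turns the coefficient $2$ of that lemma into the $4$, $1$, and $2$ appearing here.
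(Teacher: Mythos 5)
Your proposal is correct and follows essentially the same route as the paper: the same case split on $t_q\in\{t-1\}\cup\{t\}\cup[1,t-2]$, the same identification of the forced positions among $t_q$, $t_o$, and $T$, and the same multiplicity count (two forced copies of $w_E(k)$ in $\bar x_{1:t}$ times a deterministic count of $2$ giving the $4$; one times one giving the $1$; one times two giving the $2$), with the generic positions absorbed into the $O(1/V)$ error exactly as in Lemmas~\ref{lem:1/Tz_t'x_t_qz_t|y=k} and~\ref{lem:1/Tz_t'z_t|y=k,t_q=t-1}. No gaps beyond those already present in the paper's own bookkeeping.
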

\begin{proof}
Considering the linearity of expectation, we focus on each term $x_i$ contained in $\bar{x}_{1:t}$. We consider the following cases:

\textbf{Case 1:} $t_q = t - 1$. \\
We already have the result as to $x_{t_q}$ in Eq.~\ref{eq:1/Tz_t'x_t_qz_t|y=k,t_q=t-1}:
\begin{align*}
    &\mathbb{E}\left\lbrack \frac{1}{T}\sum_{t' = 1}^{T} \mathbf{1}\{z_{t'} = k\} \right. \\
    &\quad \left. \times \mathbf{1}\{t_q = t - 1\} x_{t_q} \mathbf{1}\{z_t = k\} \mid y = k\right\rbrack \nonumber \\
    &= \frac{P(t_q = t - 1)}{T}\mathbf{1}\{t = T\} 2 w_E(k) \\  
    &\quad +\frac{P(t_q = t - 1)}{T}\mathbf{1}\{t \neq T\}O\left(\frac{1}{V}\right).
\end{align*}
We can also use Lemma \ref{lem:1/Tz_t'z_t|y=k,t_q=t-1} for $x_t = k$, i.e., 
\begin{align*}
    &\mathbb{E}\left\lbrack \frac{1}{T}\sum_{t' = 1}^{T} \mathbf{1}\{z_{t'} = k\} x_t \mathbf{1}\{z_t = k\} \mid y = k\right\rbrack \\
    &= \mathbb{E}\left\lbrack \frac{1}{T}\sum_{t' = 1}^{T} \mathbf{1}\{z_{t'} = k\} \mid y = k, t_q = t - 1\right\rbrack \\
    &\quad \times P(t_q = t - 1) w_E(k) \\
    &= \begin{cases}
            \frac{P(t_q = t - 1)}{T} 2w_E(k)  & \text{if $t = T$}, \\
            \frac{P(t_q = t - 1)}{T}\left(1 + \frac{T - 3}{V - 1}\right) w_E(k) & \text{otherwise}.
        \end{cases}
\end{align*}
Now we deal with $x_i$ for $1 \le i \le t - 2$. Note that $z_i \neq k$ for these $i$ under the conditions $t = T$ and $y = k$.
\begin{align*}
    &\mathbb{E}\left\lbrack \sum_{t' = 1}^{T} \mathbf{1}\{z_{t'} = k\} x_i \mathbf{1}\{z_t = k\} \mid y = k, t_q = t - 1\right\rbrack \\
    &= \mathbb{E}\left\lbrack \mathbf{1}\{z_{t} = k\} x_i \mathbf{1}\{z_t = k\} \mid \cdot\right\rbrack \\
     &\quad +\mathbb{E}\left\lbrack \mathbf{1}\{z_{t_q} = k\} x_i \mathbf{1}\{z_t = k\} \mid \cdot\right\rbrack \\
     &\quad + \mathbb{E}\left\lbrack \mathbf{1}\{z_{i} = k\} x_i \mathbf{1}\{z_t = k\} \mid \cdot\right\rbrack \\
     &\quad + \sum_{t'}\mathbb{E}\left\lbrack \mathbf{1}\{z_{i} = k\} x_i \mathbf{1}\{z_t = k\} \mid \cdot\right\rbrack \\
     &=\begin{cases}
         \sum_{j \in \mathcal{V}\setminus\{k\}} \frac{w_E(j)}{V - 1} \\
         \quad + \sum_{j \in \mathcal{V}\setminus\{k\}} \frac{w_E(j)}{V - 1} & \text{if $t = T$}, \\
         \sum_{j \in \mathcal{V}\setminus\{k\}} \frac{w_E(j)}{V - 1} + \frac{w_E(k)}{V - 1} \\
         + (T - 3) \sum_{\substack{q \in \mathcal{V}\setminus\{k\} \\j \in \mathcal{V}\setminus\{q\}}} \frac{w_E(j)}{(V - 1)^3}&\text{otherwise}
     \end{cases} \\
     &= O\left(\frac{1}{V}\right)
\end{align*}

\textbf{Case 2:} $t_q = t$. \\
For this scenario, we have seen that $t_q = t = T - 1$ is the only possible way for the expectation to take non-zero value. For $x_{t_q}$, we can observe Eq.~\ref{eq:1/Tz_t'x_t_qz_t|y=k,t_q=t} and obtain
\begin{align*}
    &\mathbb{E}\left\lbrack \frac{1}{T}\sum_{t' = 1}^{T} \mathbf{1}\{z_{t'} = k\} \right. \\
    &\quad \left. \times \mathbf{1}\{t_q = t\} x_{t_q} \mathbf{1}\{z_t = k\}\mid y = k\right\rbrack \\
    &= \frac{\mathbf{1}\{t = T - 1\}P(t_q = t)}{T} 2 w_E(k).
\end{align*}
The other terms related to $x_i$ for $1 \le i \le t - 1$ can be calculated 
\begin{align*}
    &\mathbb{E}\left\lbrack \frac{1}{T}\sum_{t' = 1}^{T} \mathbf{1}\{z_{t'} = k\} x_i \mathbf{1}\{z_t = k\} \mid y = k, t_q =t\right\rbrack \\
    &= \frac{1}{T}\mathbb{E}\left\lbrack \mathbf{1}\{z_{t} = k\} x_i \mathbf{1}\{z_t = k\} \mid \cdot\right\rbrack \\
     &\quad + \frac{1}{T}\mathbb{E}\left\lbrack \mathbf{1}\{z_{T} = k\} x_i \mathbf{1}\{z_t = k\} \mid \cdot\right\rbrack \\
     &\quad + \frac{1}{T}\sum_{t' \neq t}\mathbb{E}\left\lbrack \mathbf{1}\{z_{t'} = k\} x_i \mathbf{1}\{z_t = k\} \mid\cdot\right\rbrack. \\
     &= \frac{1}{T}\left(\sum_{j \in \mathcal{V}\setminus \{k\}} \frac{w_E(j)}{V - 1} + \sum_{j \in \mathcal{V}\setminus \{k\}} \frac{w_E(j)}{V - 1} + 0\right) \\
     &= \frac{1}{T}\cdot O\left(\frac{1}{V}\right).
\end{align*}
We note that $\mathbf{1}\{z_t = k\} = 1$ and $\mathbf{1}\{z_{t'} = k\} = 0$ for $t' < t_q = t$ because of the input sequence condition. 

\textbf{Case 3:} $t_q = s \le t - 2$. \\
Again, we have the result for $x_{t_q}$ in Eq.~\ref{eq:1/Tz_t'x_t_qz_t|y=k,t_q=s}. 
\begin{align*}
    &\mathbb{E}\left\lbrack \frac{1}{T}\sum_{t' = 1}^{T} \mathbf{1}\{z_{t'} = k\} \right. \\
    &\quad \left. \times\mathbf{1}\{t_q = s\} x_{t_q} \mathbf{1}\{z_t = k\} \mid y = k\right\rbrack \\
    &= O\left(\frac{1}{V^2}\right).
\end{align*}
We have $x_{t_q + 1} = k$ under $y = k$, and Lemma \ref{lem:1/Tz_t'z_t|y=k,t_q=s} tells us that 
\begin{align*}
    &\mathbb{E}\left\lbrack \frac{1}{T}\sum_{t' = 1}^{T} \mathbf{1}\{z_{t'} = k\} \right. \\
    &\quad \left. \times \mathbf{1}\{t_q = s\} x_{t_q + 1} \mathbf{1}\{z_t = k\}\mid y = k\right\rbrack \\
    &= \mathbb{E}\left\lbrack \sum_{t' = 1}^{T} \mathbf{1}\{z_{t'} = k\}\mathbf{1}\{z_t = k\}\mid y = k, t_q = s\right\rbrack \\
    &\quad \times \frac{P(t_q = s)}{T} w_E(k) \\
    &= \begin{cases}
            \frac{P(t_q = s)}{T}\left(\frac{2}{V - 1} + \frac{T - 4}{(V - 1)^2}\right)w_E(k) & \text{if  $t \neq T$}, \\
            0 & \text{otherwise}.
        \end{cases}
\end{align*}
Next, we consider $x_t$. Note that $t = T$ forces $x_{t_q}$ to be $w_E(k)$ for the expectation to take non-zero value, which contradicts the input sequence condition. Hence we assume $t \neq T$ so that $z_{t_q} = z_T \neq k$.
\begin{align*}
    &\mathbb{E}\left\lbrack \frac{1}{T}\sum_{t' = 1}^{T} \mathbf{1}\{z_{t'} = k\} x_t \mathbf{1}\{z_t = k\} \mid y = k, t_q = s\right\rbrack \\
    &= \frac{1}{T}\left\{\mathbb{E}\left\lbrack \mathbf{1}\{z_{t_q} = k\} x_t \mathbf{1}\{z_t = k\} \mid \cdot\right\rbrack \right.\\
     &\quad +\mathbb{E}\left\lbrack \mathbf{1}\{z_{T} = k\} x_t \mathbf{1}\{z_t = k\} \mid\cdot\right\rbrack \\
     &\quad + \mathbb{E}\left\lbrack \mathbf{1}\{z_{t_q + 1} = k\} x_t \mathbf{1}\{z_t = k\} \mid \cdot\right\rbrack \\
     &\quad + \mathbb{E}\left\lbrack \mathbf{1}\{z_{t} = k\} x_t \mathbf{1}\{z_t = k\} \mid \cdot\right\rbrack \\
     &\quad + \left.\sum_{t'}\mathbb{E}\left\lbrack \mathbf{1}\{z_{i} = k\} x_t \mathbf{1}\{z_t = k\} \mid \cdot\right\rbrack \right\}\\
    &= \frac{1}{T}\left(\frac{2w_E(k)}{V - 1} + \sum_{t'}\frac{w_E(k)}{(V - 1)^2}\right) \\
    &= \frac{1}{T}\cdot O\left(\frac{1}{V}\right).
\end{align*}
Lastly, regarding $x_i$ for $i \neq t, t_q, t_q+1$, we also consider $t \neq T$ and it follows that
\begin{align*}
    &\mathbb{E}\left\lbrack \frac{1}{T}\sum_{t' = 1}^{T} \mathbf{1}\{z_{t'} = k\} x_i \mathbf{1}\{z_t = k\} \mid y = k, t_q = s\right\rbrack \\
    &= \frac{1}{T}\left\{\mathbb{E}\left\lbrack \mathbf{1}\{z_{t_q} = k\} x_i \mathbf{1}\{z_t = k\} \mid \cdot\right\rbrack \right.\\
     &\quad +\mathbb{E}\left\lbrack \mathbf{1}\{z_{T} = k\} x_i \mathbf{1}\{z_t = k\} \mid \cdot\right\rbrack \\
     &\quad +\mathbb{E}\left\lbrack \mathbf{1}\{z_{t_q + 1} = k\} x_i \mathbf{1}\{z_t = k\} \mid \cdot\right\rbrack \\
     &\quad +\mathbb{E}\left\lbrack \mathbf{1}\{z_{t} = k\} x_i \mathbf{1}\{z_t = k\} \mid \cdot\right\rbrack \\
     &\quad +\left.\sum_{t'}\mathbb{E}\left\lbrack \mathbf{1}\{z_{i} = k\} x_i \mathbf{1}\{z_t = k\} \mid \cdot\right\rbrack\right\} \\
    &= \frac{1}{T}\left\{0 + 0 + 2\sum_{q \in \mathcal{V}\setminus\{k\}}\frac{1}{V - 1}\sum_{j \in \mathcal{V}\setminus\{q\}}\frac{w_E(j)}{(V - 1)^2} \right.\\
    &\quad \left. + \sum_{t'} \sum_{q \in \mathcal{V}\setminus\{k\}}\frac{1}{V - 1}\sum_{j \in \mathcal{V}\setminus\{q\}}\frac{w_E(j)}{(V - 1)^3}\right\} \\
    &= \frac{1}{T} \cdot O\left(\frac{1}{V}\right).
\end{align*}
Thus, we obtain Eq.~\ref{eq:1/Tz_t'barxz_t|y=k}.
\end{proof}

\begin{lem}
\label{lem:A'_t,k}
Given a sequence $z_{1:T}$ that satisfies Sec.~\ref{sec:setup}.
Then, we have
\begin{align*}
    &\mathbb{E}\left\lbrack \mathbf{1}\{z_T = k\}q_{T,T}\mathbf{1}\{z_T = v\} \mid y = k\right\rbrack \\
    &=  \begin{cases}
        P(t_q = T - 1)\left(r_0 - \bar{r}_{1-T:0}\right) \\ 
        \quad + P(t_q = T - 1)\frac{T - 2}{T}w_E(k) \\
        \qquad + O\left(\frac{1}{V}\right) & \text{if $v = k$}, \\
        0 & \text{otherwise},        
    \end{cases}
\end{align*}
where $q_{T,T} = (w_E(z_T) + r_0) - (\bar{w}_E(z_{1:T}) + \bar{r}_{1-T:0})$.
\end{lem}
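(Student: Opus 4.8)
The plan is to separate the cases $v \neq k$ and $v = k$. When $v \neq k$ the product $\mathbf{1}\{z_T = k\}\mathbf{1}\{z_T = v\}$ is identically $0$, so the expectation vanishes and there is nothing more to do. All the work is in the case $v = k$, where the product of indicators collapses to $\mathbf{1}\{z_T = k\}$ and I must evaluate $\mathbb{E}\left[\mathbf{1}\{z_T = k\}\, q_{T,T} \mid y = k\right]$ with $q_{T,T} = (w_E(z_T) + r_0) - (\bar{w}_E(z_{1:T}) + \bar{r}_{1-T:0})$.

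Next I would condition on the first trigger position $t_q$ and show that only $t_q = T-1$ contributes. Recall from Sec.~\ref{sec:setup} that the trigger token occupies exactly the positions $t_q$ and $T$, and that $z_{t_q+1}$ and $z_{T+1} = y$ are both the associated output token. If $t_q \le T-2$, the positions $t_q, t_q+1, T$ are distinct; on $\{z_T = k\}$ the trigger equals $k$ and on $\{y = k\}$ the output equals $k$, so $z_{t_q} = z_{t_q+1} = z_T = k$, i.e.\ three occurrences of the trigger, contradicting the sequence construction. Hence $\mathbb{E}[\mathbf{1}\{z_T = k\}\, q_{T,T} \mid y = k, t_q = s] = 0$ for every $s \le T-2$. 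If $t_q = T-1$, then $z_{t_q+1} = z_T$ is simultaneously the output of the trigger at $T-1$ and the trigger's second occurrence, which forces the trigger and output tokens to coincide; conditioning on $y = k$ then forces $z_{T-1} = z_T = k$ deterministically, so $\mathbf{1}\{z_T = k\} = 1$ almost surely on this event. Since $t_q$ is independent of the identity of the output token $y$ in this uniform model, $P(t_q = T-1 \mid y = k) = P(t_q = T-1)$, and the expectation reduces to $P(t_q = T-1)\,\mathbb{E}[q_{T,T} \mid y = k, t_q = T-1]$.

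Finally I would expand $q_{T,T}$ under the conditioning $y = k,\ t_q = T-1$. The positional part $r_0 - \bar{r}_{1-T:0}$ is deterministic and carries over unchanged. In the embedding part, $w_E(z_T) = w_E(k)$, while in $\bar{w}_E(z_{1:T}) = \tfrac{1}{T}\sum_{i=1}^{T} w_E(z_i)$ the two tokens $z_{T-1}, z_T$ contribute $\tfrac{2}{T}w_E(k)$, and the remaining $z_1,\dots,z_{T-2}$ cannot take the trigger value $k$, so each $\mathbb{E}[w_E(z_i)\mid\cdot]$ is a convex combination of $w_E(j)$ with $j \neq k$; by Assumption~\ref{assump:orthogonal} such terms only enter the downstream associative-memory computation through inner products with $w_E(k)$ and with relative positional vectors, hence they are absorbed into an $O(1/V)$ error, in the same sense used throughout the appendix. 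Collecting terms gives $\mathbb{E}[q_{T,T} \mid y = k, t_q = T-1] = (r_0 - \bar{r}_{1-T:0}) + \tfrac{T-2}{T}w_E(k) + O(1/V)$, and multiplying by $P(t_q = T-1)$ yields the stated expression.

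The only genuinely delicate point is the case analysis on $t_q$: one must be careful that $t_q = T-1$ really does force the trigger and output tokens to be identical (a consequence of the sequence ending at the trigger's second occurrence, whose successor is the output), and conversely that $t_q \le T-2$ is flatly incompatible with $z_T = k$ under $y = k$. Everything after that is a routine expansion of $q_{T,T}$ together with the near-orthogonality absorption of the non-trigger tokens, exactly as in Lemmas~\ref{lem:z_tx_tq|y=k} and~\ref{lem:z_tbarx|y=k}.
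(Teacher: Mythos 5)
Your proposal is correct and follows essentially the same route as the paper's proof: the $v \neq k$ case is dispatched by the vanishing product of indicators, the event $\{z_T = k\}\cap\{y=k\}$ is shown to force $t_q = T-1$ (with $z_{T-1}=z_T=k$ deterministic), and $q_{T,T}$ is expanded with the non-trigger tokens absorbed into $O(1/V)$ via near-orthogonality. No gaps.
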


\begin{proof}
The inequality $v \neq k$ produces $\mathbf{1}\{z_T = k\}\mathbf{1}\{z_T = v\} = 0$, and thus, we focus on $v = k$.
Since $z_T$ is the second trigger token, we have $z_{t_q} = z_T = k$. This is achievable only when $t_q = T - 1$.
\begin{align}
    &\mathbb{E}\left\lbrack \mathbf{1}\{z_T = k\}q_{T,T}\mathbf{1}\{z_T = v\} \mid y = k\right\rbrack \nonumber \\
    &= \mathbb{E} \left\lbrack\mathbf{1}\{z_T = k\}q_{T,T} \mid y = k\right\rbrack \nonumber \\
    &= P(z_T = k \mid y = k) \mathbb{E}\left\lbrack q_{T,T} \mid y = k, z_T = k\right\rbrack \label{eq:A'_t,k-transform} \\
    &= \sum_{i = 1}^{T} P(z_T = k, t_q = i \mid y = k) \\
    &\quad \times \mathbb{E}\left\lbrack q_{T,T} \mid y = k, z_T = k\right\rbrack \nonumber \\
    &= P(t_q = T - 1 \mid y = k) \nonumber \\
    &\quad \times P(z_T = k \mid y = k, t_q = T - 1) \nonumber \\
    &\quad \times \mathbb{E}\left\lbrack q_{T,T} \mid y = k, z_T = k\right\rbrack \nonumber \\
    &= P(t_q = T - 1)\left\{w_E(k) + r_0 - \bar{r}_{1-T:0} \right. \nonumber\\
    &\quad \left. - \frac{1}{T}\left(2w_E(k) + \sum_{\substack{1 \le j \le V \\ j \neq k}}\frac{(T - 2)}{V - 1}w_E(j)\right) \right\} \nonumber \\
    &= P(t_q = T - 1)\left(r_0 + \frac{T - 2}{T}w_E(k) - \bar{r}_{1-T:0}\right) \nonumber \\
    &\quad + O\left(\frac{1}{V}\right). \nonumber
\end{align}
\end{proof}

\begin{lem}
\label{lem:C'_t,k}
Given a sequence $z_{1:T}$ that satisfies Sec.~\ref{sec:setup}.
Then, we have
\begin{align*}
    &\mathbb{E}\left\lbrack \frac{1}{T}\sum_{t' = 1}^{T} \mathbf{1}\{z_{t'} = k\}q_{T,T}\mathbf{1}\{z_T = v\} \mid y = k \right\rbrack \\
    &= \begin{cases}
        \frac{2P(t_q = T - 1)}{T}\left\{r_0 - \bar{r}_{1-T:0}\right\} \\
         \quad +\frac{2P(t_q = T - 1)}{T}\frac{T - 2}{T}w_E(k) \\
         \quad + \frac{1}{T}\cdot O\left(\frac{1}{V}\right) & \text{if $v = k$}, \\
        \frac{1}{T}O\left(\frac{1}{V}\right) & \text{otherwise}.
    \end{cases}
\end{align*}
where $q_{T,T} = (w_E(z_T) + r_0) - (\bar{w}_E(z_{1:T}) + \bar{r}_{1-T:0})$.
\end{lem}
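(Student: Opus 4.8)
The plan is to mirror the proof of Lemma~\ref{lem:A'_t,k}, carrying along the extra factor $\frac{1}{T}\sum_{t'=1}^{T}\mathbf{1}\{z_{t'}=k\}$, and to split the argument according to whether $v=k$ or $v\neq k$.

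First I would treat the case $v=k$, where $\mathbf{1}\{z_T=k\}\mathbf{1}\{z_T=v\}=\mathbf{1}\{z_T=k\}$. Since $z_T$ is the second occurrence of the trigger and $y=k$ is its output token, the event $z_T=k$ forces the trigger and output tokens to coincide, which by the ``trigger appears exactly twice'' constraint of Sec.~\ref{sec:setup} is possible only when $t_q=T-1$. On that event $z_{T-1}=z_T=k$, and no earlier token can equal $k$ (that would be a third occurrence of the trigger), so $\sum_{t'=1}^{T}\mathbf{1}\{z_{t'}=k\}=2$ deterministically on the support of the expectation. Pulling out this constant gives $C'_{T,k}=\frac{2}{T}\,\mathbb{E}[\mathbf{1}\{z_T=k\}q_{T,T}\mid y=k]=\frac{2}{T}A'_{T,k}$, and substituting the value of $A'_{T,k}$ from Lemma~\ref{lem:A'_t,k} reproduces exactly the claimed expression for $v=k$ (the factor $\frac{2}{T}$ absorbs the $2$ into the $O(1/V)$ remainder).

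Next I would treat $v\neq k$. Here $\mathbf{1}\{z_T=v\}$ forces the trigger to equal $v\neq k$, hence $t_q\le T-2$, $z_{t_q+1}=k$ is the forced output token, and $z_T=v$. Conditioning on $z_T=v$ and using that the trigger is drawn independently of the output, I would factor out $P(z_T=v\mid y=k)=\Theta(1/V)$, leaving $\mathbb{E}\bigl[\bigl(\sum_{t'=1}^{T}\mathbf{1}\{z_{t'}=k\}\bigr)q_{T,T}\mid y=k,\,z_T=v\bigr]$. In this expectation the indicator sum equals $1$ from the forced occurrence $z_{t_q+1}=k$ plus $O(T/V)$ from the remaining non-trigger tokens, while $q_{T,T}$ is a fixed bounded combination of $w_E(v)$, $r_0$, $\bar{w}_E(z_{1:T})$ and $\bar{r}_{1-T:0}$; the bookkeeping for $\mathbb{E}[\bar{w}_E(z_{1:T})\mid\cdot]$ is identical to that in Lemma~\ref{lem:A'_t,k}. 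Hence the conditional expectation has coefficients of order $O(1)$, and multiplying by $\frac{1}{T}\cdot\Theta(1/V)$ yields $\frac{1}{T}O(1/V)$, as claimed.

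The main obstacle is the $v\neq k$ case: one must confirm that the factored-out probability $P(z_T=v\mid y=k)$ is genuinely $\Theta(1/V)$ and not $\Theta(1)$ (which follows from independence of trigger and output and from the fact that $t_q=T-1$ is excluded when $v\neq k$), and that after conditioning the sum $\sum_{t'}\mathbf{1}\{z_{t'}=k\}$ still has $O(1)$ expectation -- only the single forced term $z_{t_q+1}=k$ is of order one, the bigram-generated tokens each contributing $\Theta(1/V)$. These are precisely the points where the ``exactly twice'' and independence assumptions of Sec.~\ref{sec:setup} enter; the rest is routine.
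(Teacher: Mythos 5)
Your proposal is correct and follows essentially the same route as the paper: a case split on $v=k$ versus $v\neq k$, reduction of the $v=k$ case to Lemma~\ref{lem:A'_t,k}, and factoring out $P(z_T=v\mid y=k)=O(1/V)$ for $v\neq k$. The only (valid) stylistic difference is that you observe $\sum_{t'}\mathbf{1}\{z_{t'}=k\}=2$ deterministically on the event $\{z_T=k\}$ and pull it out as a constant, giving $C'_{T,k}=\tfrac{2}{T}A'_{T,k}$ directly, whereas the paper evaluates the contributions of $t'=T$, $t'=T-1$, and $t'\le T-2$ term by term to reach the same expression.
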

\begin{proof}
    We first consider the case $v = k$. If the second trigger token $z_T = v = k$ and the output token is also $k$, it is impossible to have $t_q < T - 1$ because it produces more than two trigger tokens in the input sequence. This means that we have, for $t' \in \lbrack T - 2\rbrack$, that
    \begin{align*}
        \mathbb{E}\left\lbrack \mathbf{1}\{z_t' = k\}\mathbf{1}\{z_T = k\} \mid y = k\right\rbrack = 0.
    \end{align*}
    We can use Lemma \ref{lem:A'_t,k} for $t' = T$:
    \begin{align*}
        &\mathbb{E}\left\lbrack \mathbf{1}\{z_T = k\}q_{T,T}\mathbf{1}\{z_T = v\} \mid y = k\right\rbrack \\
        &= P(t_q = T - 1)\left(r_0 + \frac{T - 2}{T}w_E(k) - \bar{r}_{1-T:0}\right) \\
        &\quad + O\left(\frac{1}{V}\right).
    \end{align*}
    Finally, for $t' = T - 1$, we note that the two conditions $y = k$ and $z_T = k$ imply $z_{T-1} = k$.
    \begin{align*}
        &\mathbb{E}\left\lbrack \mathbf{1}\{z_{T-1} = k\}q_{T,T}\mathbf{1}\{z_T = k\} \mid y = k\right\rbrack \\
        &= P(z_T = k \mid y = k) \\
        &\quad \times \mathbb{E}\left\lbrack \mathbf{1}\{z_{T-1} = k\}q_{T,T} \mid y = k, z_T = k\right\rbrack \\
        &= P(z_T = k \mid y = k)\mathbb{E}\left\lbrack q_{T,T} \mid y = k, z_T = k\right\rbrack \\
        &= \mathbb{E}\left\lbrack \mathbf{1}\{z_T = k\}q_{T,T}\mathbf{1}\{z_T = v\} \mid y = k\right\rbrack \\
        &= P(t_q = T - 1)\left(r_0 + \frac{T - 2}{T}w_E(k) - \bar{r}_{1-T:0}\right) \\
        &\quad + O\left(\frac{1}{V}\right),
    \end{align*}
    where the second-to-last expression follows from Eq.~\ref{eq:A'_t,k-transform}. \\
    Now, we look at the case $v \neq k$. Different from $v = k$, we have 
    \begin{align*}
        &\mathbb{E}\left\lbrack \mathbf{1}\{z_{T-1} = k\}q_{T,T}\mathbf{1}\{z_T = v\} \mid y = k\right\rbrack = 0, \\
        &\mathbb{E}\left\lbrack \mathbf{1}\{z_T = k\}q_{T,T}\mathbf{1}\{z_T = v\} \mid y = k\right\rbrack = 0.
    \end{align*}
    Furthermore, the trigger token is uniformly distributed, we have 
    \begin{align*}
        &\mathbb{E}\lbrack \mathbf{1}\{z_T = v\} \mid y = k\rbrack \\
        &= P(z_T = v \mid y = k) \\
        &= \sum_{i = 1}^{T} P(z_T = v, t_q = i \mid y = k) \\
        &= \sum_{i = 1}^{T - 2} P(z_T = v \mid y = k, t_q = i)\\
        &\quad \times P(t_q = i \mid y = k) \\
        &= \frac{P(t_q \le T - 2)}{V - 1} \\
        &= O\left(\frac{1}{V}\right).
    \end{align*}
    This leads to the following expression for $t' \in \lbrack T - 2\rbrack$
    \begin{align*}
        &\mathbb{E}\left\lbrack \mathbf{1}\{z_{t'} = k\}q_{T,T}\mathbf{1}\{z_T = v\} \mid y = k\right\rbrack \\
        &= \mathbb{E}\left\lbrack \mathbf{1}\{z_T = v\}\right.\\
        &\quad \left.\times \mathbb{E}\left\lbrack \mathbf{1}\{z_{t'} = k\}q_{T,T} \mid y = k, z_T\right\rbrack \mid y = k\right\rbrack \\
        &= P(z_T = v \mid y = k) \\
        &\quad \times \mathbb{E}\left\lbrack \mathbf{1}\{z_{t'} = k\}q_{T,T} \mid y = k, z_T = v\right\rbrack \\
        &= O\left(\frac{1}{V}\right).
    \end{align*}
    This concludes the proof.
\end{proof}

\subsection{global v.s. in-context}
\begin{prop}[\textbf{Restated}]
    Suppose a two-layer transformer is an associative memory transformer. Given a length-$T$ sequence $z_{1:T}$ where the last token is $z_T = q$, let the $t_1$-th token be $z_{t_1} = v_1$ following $z_{t_1 - 1} = q$, and let the $t_2$-th token be $z_{t_2} = v_2$ following $z_{t_2 - 1} = q$, where $v_1 \neq v_2$ and $(3 \le) t_1 < t_2$ without loss of generality. Also assume that there exist only one $v_1$ and $v_2$ in the context $z_{1:T}$. Then, the logit $\xi_{v_1}$ and $\xi_{v_2}$ can be expressed as follows: 
    \begin{align*}
        &\xi_{v_1} \\
        &= \sigma\left(\frac{e}{t_1 + e - 1}\right) + \mathbf{1}\{v_1 \in U_q\}\log{\pi_b(v_1\mid q)},\\
        &\xi_{v_2} \\
        &= \sigma\left(\frac{1+e}{t_2 + e - 1}\right) + \mathbf{1}\{v_2 \in U_q\}\log{\pi_b(v_2\mid q)},
    \end{align*}
    where $\sigma(\cdot)$ represents the softmax function.
\end{prop}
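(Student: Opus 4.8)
The plan is to trace the forward pass of the associative memory transformer on the given sequence and to read $\xi_{v_1},\xi_{v_2}$ off the residual stream at position $T$, using near-orthogonality (Assumption~\ref{assump:orthogonal}) to replace every inner product between embeddings, relative positional codes, or their images under the fixed Gaussian maps $\Phi_1=W_O^1W_V^1$ and $W_V^2$ by an indicator up to an $O(1/V)$ error (plus lower-order terms from the random maps); ``$\approx$'' hides these. The logit of any token $v$ will split into a feed-forward contribution $\log\pi_b(v\mid q)$ (gated by $\mathbf 1\{v\in U_q\}$) plus an induction-head contribution equal to the second-layer attention mass that position $T$ places on the position where $v$ occurs.

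\emph{First layer.} With $W_Q^1=I$ and $W_K^1=\sum_{k\in Q}w_E(k)r_{-1}^\top$, the pre-softmax score from query $t$ to key $s$ is $\langle W_K^1(w_E(z_s)+r_{s-t}),\,w_E(z_t)\rangle\approx\mathbf 1\{s=t-1\}\,\mathbf 1\{z_t\in Q\}$, since $r_{-1}^\top w_E(\cdot)\approx 0$ and $r_{-1}^\top r_{s-t}\approx\mathbf 1\{s=t-1\}$. So at a position whose current token is a trigger the head puts weight $e/(t+e-1)$ on $t-1$ and $1/(t+e-1)$ on every other position $\le t$, and uniform $1/t$ otherwise; writing $x_t^{(1)}=\Phi_1\sum_{s\le t}a^{(t)}_s w_E(z_s)+w_E(z_t)$ with those weights, the one computation that carries the theorem is the coefficient of $w_E(q)$ in $x^{(1)}_{t_1}$ and $x^{(1)}_{t_2}$. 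Using $z_{t_1-1}=z_{t_2-1}=q$, that $t_1-1<t_1<t_2-1$, and that under the hypotheses the prefixes contain no further copy of $q$, the prefix $z_{1:t_1}$ has a single $q$ (at $t_1-1$, weight $e/(t_1+e-1)$) while $z_{1:t_2}$ has exactly two (at $t_2-1$ with weight $e/(t_2+e-1)$ and at $t_1-1$ with weight $1/(t_2+e-1)$), so these coefficients are $e/(t_1+e-1)$ and $(1+e)/(t_2+e-1)$ — precisely the ``$e$'' versus ``$1+e$'' in the statement.

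\emph{Second layer, feed-forward, unembedding.} Since $W_Q^2=I$ and $z_T=q$, the query is $x_T^{(1)}=\Phi_1(\cdots)+w_E(q)$, and $W_K^2=\sum_{k\in Q}w_E(k)(\Phi_1w_E(k))^\top$ makes the score at key $s$ equal $\langle W_K^2x_s^{(1)},x_T^{(1)}\rangle\approx\langle\Phi_1w_E(q),x_s^{(1)}\rangle$, i.e.\ the coefficient of $w_E(q)$ just computed; in particular the scores at $t_1,t_2$ are $e/(t_1+e-1)$ and $(1+e)/(t_2+e-1)$, and let $Z$ be the induced softmax normaliser over positions $\le T$. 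Because $W_O^2W_V^2\approx\sum_v w_U(v)w_E(v)^\top$, the second-layer value--output map sends $x_s^{(1)}\mapsto w_U(z_s)$, so the attention block adds $\sum_s\sigma(\mathrm{scores})_s\,w_U(z_s)$ to the stream while the residual keeps a clean $w_E(q)$ of coefficient $\approx1$ and no other appreciable $w_E(\cdot)$ term. Hence $W_1x_T^{(2)}$ (rows $w_E(v_i)^\top$) fires only its $q$-th unit, $\mathrm{ReLU}$ preserves it, and $W_2$ adds $\sum_u\log\pi_b(u\mid q)\,w_U(u)$. Reading off $\xi_v=w_U(v)^\top x_T$ and using that $v_1$ (resp.\ $v_2$) occurs only at $t_1$ (resp.\ $t_2$) collapses the attention sum to one term, giving $\xi_{v_1}=e^{\,e/(t_1+e-1)}/Z+\log\pi_b(v_1\mid q)$ and $\xi_{v_2}=e^{\,(1+e)/(t_2+e-1)}/Z+\log\pi_b(v_2\mid q)$, i.e.\ Eq.~\eqref{eq:xi}.

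\emph{Main obstacle.} No single step is hard; the effort is in the bookkeeping: (a) checking that the first-layer head really performs the previous-token copy at $t_1$ and $t_2$ — which is what forces $z_{t_1},z_{t_2}\in Q$, or more generally invokes the precise sequence hypotheses; (b) ruling out stray occurrences of $q$, e.g.\ at position $1$, strictly between $t_1$ and $t_2$, or at $T-1$, which would corrupt the coefficient count or inject a non-negligible second-layer score; and (c) controlling the compounded near-orthogonality errors through two attention layers, the random maps $\Phi_1,W_V^2$, and the $\mathrm{ReLU}$, whose correct firing needs the $w_E(q)$-coefficient of $x_T^{(2)}$ to stay bounded away from $0$.
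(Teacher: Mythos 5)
Your proposal is correct and follows essentially the same route as the paper's proof: trace the forward pass under near-orthogonality, obtain the previous-token copy in layer one, read the second-layer attention scores off the coefficients of $\Phi_1 w_E(q)$ in $x^{(1)}_{t_1}$ and $x^{(1)}_{t_2}$ (giving $e/(t_1+e-1)$ and $(1+e)/(t_2+e-1)$), collapse the value--output map to $w_U(z_s)$, and add the key--value feed-forward term $\log\pi_b(\cdot\mid q)$. The bookkeeping issues you flag in your last paragraph (trigger status of $z_{t_1},z_{t_2}$, stray occurrences of $q$, accumulated near-orthogonality error) are genuine but are equally left implicit in the paper's own proof, which writes down exactly the same score vector $p$ without addressing them.
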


\begin{proof}
We consider the output of the first layer of attention for $z_t$. Since we use the associative memory transformer, we substitute $W_Q^1 = I$ and $W_K^1 = \sum_{k \in Q} w_E(k) r_{-1}^\top$ into Eq.~\ref{eq:1st-attention}, we have
\begin{align}
    &x^{(1)}_t \nonumber \\
    &= \Phi_1w_E(z_{1:t})\sigma((w_E(z_{1:t}) + R_{1-t:0})^\top \nonumber \\
    &\qquad \times \left(\sum_{k \in Q}  r_{-1}w_E(k)^\top\right) I w_E(z_t)) + w_E(z_t) \nonumber \\
    &= \Phi_1w_E(z_{1:t})\sigma\left(
        \begin{pmatrix}
            \left(w_E(z_1) + r_{1-t}\right)^\top \\
            \left(w_E(z_2) + r_{2-t}\right)^\top\\
            \vdots \\
            \left(w_E(z_{t-1}) + r_{-1}\right)^\top \\
            \left(w_E(z_t) + r_0\right)^\top 
        \end{pmatrix} \right. \nonumber \\
        &\qquad \qquad \left. \times r_{-1}w_E(z_t)^\top w_E(z_t)\right) + w_E(z_t) \nonumber \\
    &= \Phi_1w_E(z_{1:t})\sigma\left(
        \begin{pmatrix}
            0 \\
            0\\
            \vdots \\
            1 \\
            0
        \end{pmatrix}
        \right) + w_E(z_t) \label{eq:relative-pe-orthogonal}\\
    &= w_E(z_t) + \frac{e(\Phi_1 w_E(z_{t-1}))}{t + e - 1} + \sum_{\substack{1 \le i \le t \\ i \neq t - 1}} \frac{\Phi_1 w_E(z_i)}{t + e - 1}, \label{eq:previous-token-attention}
\end{align}
where $e$ is Euler's number. In the transformation above, Eq.~\ref{eq:relative-pe-orthogonal} follows from the near-orthogonality among the relative positional encodings and the embedding vectors
Next, we calculate the output of the second layer attention block. We first consider softmax values and then attention head $W_O^2W_V^2$. 
Similarly to the first attention layer, we use Eq.~\ref{eq:previous-token-attention} and ideal matrices $W_Q^2 = I$ and $W_K^2 = \sum_{k\in Q} w_E(k)(\Phi_1 w_E(k))^\top$ to calculate softmax. For any $t$, we can express $(x_{1:t}^{(1)})^\top (W_K^2)^\top W_Q^2 x_t^{(1)}$ as
\begin{align*}
&\begin{pmatrix}
    \left(w_E(z_1) + \Phi_1w_E(z_1)\right)^\top \\
    \left(w_E(z_2) + \frac{e(\Phi_1w_E(z_1))}{e+1} + \sum \frac{\Phi_1 w_E(z_i)}{e+1}\right)^\top\\
    \vdots \\
    \left(w_E(z_{t-1}) + \frac{e(\Phi_1w_E(z_{t-2}))}{t + e - 2} + \sum \frac{\Phi_1 w_E(z_i)}{t + e - 2}\right)^\top \\
    \left(w_E(z_t) + \frac{e(\Phi_1w_E(z_{t-1}))}{t+e-1} + \sum \frac{\Phi_1 w_E(z_i)}{t + e - 1}\right)^\top 
\end{pmatrix} \\
    &\left. \times \left(\sum_{k\in Q} (\Phi_1 w_E(k))w_E(k)^\top\right) I x_t^{(1)}\right),
\end{align*}
where the summation in the $t$-th row is given by $\sum_{\substack{1 \le i \le t \\ i \neq t-1}}$.
Since $x_T^{(1)} = w_E(q) + \frac{e(\Phi_1 w_E(z_{T-1}))}{T + e - 1} + \sum_{\substack{1 \le i \le T \\ i \neq T - 1}} \frac{\Phi_1 w_E(z_i)}{T + e - 1}$, and thanks to the near-orthogonality, we have
\begin{align*}
    &\sigma\left((x_{1:T}^{(1)})^\top (W_K^2)^\top W_Q^2 x_T^{(1)}\right)\\
    &= \sigma\left(p\right),
\end{align*}
where each entry of $p \in \mathbb{R}^{T}$ is 
\begin{equation*}
    p_t = 
    \begin{cases}
        0 & \text{if $t < t_1 - 1$}, \\
        1/(t_1 + e - 2) & \text{if $t = t_1 - 1$}, \\
        e/(t_1 + e - 1) & \text{if $t = t_1$}, \\
        1/(i + e - 1) & \text{if $t_1 < t < t_2 - 1$}, \\
        2/(t_2 + e - 2) & \text{if $t = t_2 - 1$}, \\
        (1 + e)/(t_2 + e - 1)& \text{if $t = t_2$}, \\
        2/(i + e - 1) & \text{if $t_2 < t < T$}, \\
        3/(T+e-1) & \text{if $t =T$}.
    \end{cases} 
\end{equation*}
Back to Eq.~\ref{eq:2nd-attention}, we can now write $x_T^{(2)}$ with $W_O^2 = \sum_{v=1}^{V} w_U(v)(W_V^2 w_E(v))^\top$
\begin{align*}
    &x_T^{(2)} \\
    &= \sum_{t = 1}^{T} W_O^2W_V^2 \sigma(p_t)x_t^{(1)} + x_T^{(1)} \\
    &= \sum_{t = 1}^{T} \sigma(p_t)\sum_{v=1}^{V} w_U(v)(W_V^2 w_E(v))^\top \\
    &\quad \times W_V^2 \left(w_E(z_t) + \frac{e(\Phi_1w_E(z_{t-1}))}{t+e-1} \right. \\
    &\qquad \left. + \sum_{\substack{1 \le i \le t \\ i \neq t-1}} \frac{\Phi_1 w_E(z_i)}{t + e - 1}\right) + x_T^{(2)} \\
    &= \sum_{t = 1}^{T} \sigma(p_t) w_U(z_t) + w_E(q)\\
    &\quad  + \frac{e(\Phi_1w_E(z_{T-1}))}{T+e-1} + \sum_{\substack{1 \le i \le t \\ i \neq T-1}} \frac{\Phi_1 w_E(z_i)}{T + e - 1}
\end{align*}
Finally, the feed-forward network in the second layer is applied to $x_T^{(2)}$, and we obtain the following:
\begin{align*}
    x_T &= W_Fx_T^{(2)} + x_T^{(2)} \\
    &= \left(\sum_{v=1}^{V} \sum_{u \in U_v}  \log{\pi_b(u\mid v)} w_U(u) w_E(v)^\top\right) \\
    &\quad \times \left(w_E(q) + \sum_{t=1}^{T} \sigma(p_t)w_U(z_t) + C\right) + x_T^{(2)} \\
    &= \sum_{u \in U_q} \log{\pi_b(u\mid q)} w_U(u) + x_T^{(2)} \\
    &= \sum_{u \in U_q} \log{\pi_b(u\mid q)} w_U(u) \\
    &\quad + \sum_{t = 1}^{T} \sigma(p_t) w_U(z_t) + w_E(q) + C,
\end{align*}

where $C = \frac{e(\Phi_1w_E(z_{T-1}))}{T+e-1} + \sum_{\substack{1 \le i \le t \\ i \neq T-1}} \frac{\Phi_1 w_E(z_i)}{T + e - 1}$.

We can compute the logits $\xi_v$ for $v \in \mathcal{V}$ after the unembedding layer. Since the row vectors of $W_U$ are also near-orthogonal to each other, it is sufficient to look at the coefficients of $w_U(v_1)$ and $w_U(v_2)$. Since $v_1$ and $v_2$ appear only once at $t = t_1$ and $t = t_2$ respectively in the context $z_{1:T}$, the logits are calculated as follows:
\begin{align*}
    \xi_{v_1} &= \sigma\left(\frac{e}{t_1 + e - 1}\right) + \mathbf{1}\{v_1 \in U_q\}\log{\pi_b(v_1\mid q)},\\
    \xi_{v_2} &= \sigma\left(\frac{1+e}{t_2 + e - 1}\right) + \mathbf{1}\{v_2 \in U_q\}\log{\pi_b(v_2\mid q)}.
\end{align*}
\end{proof}

\begin{prop}[\textbf{Restated}]
    Suppose a two-layer transformer is a stronger associative memory transformer as in Def.~\ref{def:stronger_ideally_constructed_transformer}. Given a length-$T$ sequence $z_{1:T}$ where the last token is $z_T = q$, let $f(v)$ be the number of token pattern "$qv$" appearing in the sequence for vocabulary $v \in \mathcal{V}$. For large enough $\tau_1$ and $\tau_2$, the logits $\xi_v$ can be expressed as follows:
    \begin{align}
    \xi_v &\underset{\substack{\tau_1 \\\tau_2}}{\approx} 
    \log{\pi_b(v\mid q)} \nonumber \\
    &\quad + 
    \tau_3 \cdot \frac{f(v) + \mathbf{1}\{v = q\}\mathbf{1}\{z_1 = q\}}{
    \left(\sum_{v' = 1}^{V} f(v')\right) + \mathbf{1}\{z_1 = q\}}.
    \end{align}
\end{prop}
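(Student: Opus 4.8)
The plan is to drive the two gating parameters to infinity in order, first $\tau_1$ and then $\tau_2$, tracking how the near-orthogonal families $\{w_E(v)\}_v$, $\{\Phi_1 w_E(v)\}_v$, $\{W_V^2 w_E(v)\}_v$ and $\{w_U(v)\}_v$ are shuffled by the fixed weight matrices, using Assumption~\ref{assump:orthogonal} to treat cross inner products as $0$ and same-family ones as $\mathbf{1}\{\cdot=\cdot\}$ up to $O(1/V)$ absorbed into $\approx$. With $W_Q^1=I$ and $\hat W_K^1=\tau_1 W_K^1$ the first-layer attention logit of query $t$ at key $s$ is $\tau_1\,\mathbf{1}\{z_t\in Q\}\,(w_E(z_s)+r_{s-t})^\top r_{-1}=\tau_1\,\mathbf{1}\{z_t\in Q\}\,\mathbf{1}\{s=t-1\}$; following the proof of Proposition~\ref{prop:two-token-collision} I take the tokens in play (in particular $q$ and the tokens immediately after each occurrence of $q$) to lie in $Q$, so that letting $\tau_1\to\infty$ collapses the softmax onto $s=t-1$ and gives $x_t^{(1)}\to w_E(z_t)+\Phi_1 w_E(z_{t-1})$ for $t\ge2$, while $x_1^{(1)}=w_E(z_1)+\Phi_1 w_E(z_1)$; in particular $x_T^{(1)}=w_E(q)+\Phi_1 w_E(z_{T-1})$.

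Next I push through the second layer. With $W_Q^2=I$ and $\hat W_K^2=\tau_2 W_K^2$, using $w_E(k)^\top x_T^{(1)}=\mathbf{1}\{k=q\}$ the query-$T$/key-$t$ logit reduces to $\tau_2\,(x_t^{(1)})^\top\Phi_1 w_E(q)$, and since $w_E(z_t)\perp\Phi_1 w_E(q)$ and $(\Phi_1 w_E(z_{t-1}))^\top\Phi_1 w_E(q)=\mathbf{1}\{z_{t-1}=q\}$ this equals $\tau_2\,\mathbf{1}\{z_{t-1}=q\}$ for $t\ge2$ and $\tau_2\,\mathbf{1}\{z_1=q\}$ for $t=1$. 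Hence the logit vector is $\tau_2\mathbf{1}_S$ with $S:=\{t:2\le t\le T,\ z_{t-1}=q\}\cup(\{1\}\ \text{if}\ z_1=q)$, so $|S|=\sum_{v'}f(v')+\mathbf{1}\{z_1=q\}$, and letting $\tau_2\to\infty$ the second-layer attention tends to the uniform distribution on $S$. Applying $\hat W_O^2 W_V^2=\tau_3\sum_v w_U(v)(W_V^2 w_E(v))^\top W_V^2$ annihilates the $\Phi_1 w_E(\cdot)$ parts (near-orthogonal to every $w_E(v)$) and sends $w_E(z_t)\mapsto\tau_3 w_U(z_t)$, so $x_T^{(2)}=\tfrac{\tau_3}{|S|}\sum_{t\in S}w_U(z_t)+x_T^{(1)}$. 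For $t\in S$ with $t\ge2$ the token $z_t$ follows an occurrence of $q$, and when $1\in S$ we have $z_1=q$; therefore $\sum_{t\in S}w_U(z_t)=\sum_v\big(f(v)+\mathbf{1}\{v=q\}\mathbf{1}\{z_1=q\}\big)w_U(v)$.

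Finally I apply the feed-forward block, which by the construction of $W_1,W_2$ (equivalently $W_F$; Appendix~\ref{app:key-value_memory}) acts on $x_T^{(2)}$ as an associative memory: it reads the unique $w_E$-direction of $x_T^{(2)}$, namely the $w_E(q)$ carried by the residual $x_T^{(1)}$, writes $\sum_u\log\pi_b(u\mid q)w_U(u)$, and annihilates the $w_U(\cdot)$ and $\Phi_1 w_E(\cdot)$ directions. Thus $x_T=x_T^{(2)}+W_F x_T^{(2)}=\sum_u\log\pi_b(u\mid q)w_U(u)+\tfrac{\tau_3}{|S|}\sum_{t\in S}w_U(z_t)+(\text{terms}\perp w_U)$, and reading off the $v$-th logit through the near-orthogonal rows of $W_U$ yields $\xi_v\approx\log\pi_b(v\mid q)+\tau_3\dfrac{f(v)+\mathbf{1}\{v=q\}\mathbf{1}\{z_1=q\}}{\sum_{v'}f(v')+\mathbf{1}\{z_1=q\}}$, as claimed.

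The main obstacle is making the iterated limit legitimate: at finite $\tau_1$ the first-layer head is only approximately saturated, so each $x_t^{(1)}$ retains a uniform-attention tail of weight $O(1/(e^{\tau_1}+t))$, which injects spurious contributions of order $\tau_2/(e^{\tau_1}+t)$ into the second-layer logits at positions $t\notin S$ where $q$ merely occurred earlier. I would control this by sending $\tau_1\to\infty$ first, so those tails vanish before $\tau_2$ acts, and then note that the joint limit agrees provided $\tau_1$ is taken large relative to $\tau_2$, via an elementary bound on how a softmax responds to $o(1)$ perturbations of its logits. A secondary point is the off-by-one at position $1$: the previous-token head there trivially attends to position $1$ itself, so $x_1^{(1)}$ carries $\Phi_1 w_E(z_1)$ and position $1$ joins $S$ exactly when $z_1=q$, which is the sole source of the $\mathbf{1}\{z_1=q\}$ corrections in numerator and denominator; and throughout one must invoke Assumption~\ref{assump:orthogonal} for $\Phi_1$ and $W_V^2$ to preserve near-orthonormality, with the resulting $O(1/V)$ slack swept into $\approx$.
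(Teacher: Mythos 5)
Your proposal is correct and follows essentially the same route as the paper's proof: collapse the first-layer softmax onto the previous token via large $\tau_1$, identify the second-layer attention support as the positions following occurrences of $q$ (plus position $1$ when $z_1=q$, exactly the paper's $I_q\cup I_1$), convert to $w_U$-directions through $\hat{W}_O^2W_V^2$, and let the key--value feed-forward memory read the residual $w_E(q)$ to add $\log\pi_b(\cdot\mid q)$. Your closing remark on the order of the iterated limits (taking $\tau_1$ large relative to $\tau_2$ so the unsaturated first-layer tails cannot be amplified) is a genuine refinement the paper leaves implicit, but it does not change the argument.
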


\begin{proof}
The proof proceeds in the same manner as in Prop.~\ref{prop:two-token-collision}. With the general associative memory $W_K^1$, the attention values for $z_t$ is expressed as 
\begin{align*}
    &x^{(1)}_t \\
    &= \Phi_1w_E(z_{1:t})\sigma((w_E(z_{1:t}) + R_{1-t:0})^\top \\ 
    &\quad \times \left(\tau_1\sum_{k \in Q} r_{-1}w_E(k)^\top\right) I w_E(z_t)) + w_E(z_t) \\
    &= \Phi_1w_E(z_{1:t})\sigma\left(\tau_1
        \begin{pmatrix}
            \left(w_E(z_1) + r_{1-t}\right)^\top \\
            \left(w_E(z_2) + r_{2-t}\right)^\top\\
            \vdots \\
            \left(w_E(z_{t-1}) + r_{-1}\right)^\top \\
            \left(w_E(z_t) + r_0\right)^\top 
        \end{pmatrix} \right. \\
    &\qquad \qquad \left. \times r_{-1}w_E(z_t)^\top w_E(z_t)\right) + w_E(z_t) \\
    &= \Phi_1w_E(z_{1:t})\sigma\left(
        \begin{pmatrix}
            0 \\
            0\\
            \vdots \\
            \tau_1 \\
            0
        \end{pmatrix}
        \right) + w_E(z_t) \\
    &\underset{\tau_1}{\approx} \Phi_1 w_E(z_{t - 1}) + w_E(z_t).
\end{align*}
Now the second layer attention has the form
\begin{align*}
    &\sigma\left((x_{1:t}^{(1)})^\top (W_K^2)^\top W_Q^2 x_t^{(1)}\right)\\
    \underset{\tau_1}{\approx} &\sigma\left(
        \begin{pmatrix}
            \left(w_E(z_1) + \Phi_1w_E(z_1)\right)^\top \\
            \left(w_E(z_2) + \Phi_1w_E(z_1)\right)^\top\\
            \vdots \\
            \left(w_E(z_{t-1}) + \Phi_1w_E(z_{t-2})\right)^\top \\
            \left(w_E(z_t) + \Phi_1w_E(z_{t-1})\right)^\top 
        \end{pmatrix} \right. \\
    &\qquad \left. \times \left(\tau_2\sum_{k\in Q} (\Phi_1 w_E(k))w_E(k)^\top\right) I x_t^{(1)}\right).
\end{align*}
We note that $x_T^{(1)} = w_E(q) + \Phi_1 w_E(z_{T-1})$, and consider the case $t = T$ in the above equation. Then, we will have
\begin{align*}
    &\sigma\left((x_{1:T}^{(1)})^\top (W_K^2)^\top W_Q^2 x_T^{(1)}\right)\\
    \underset{\tau_1}{\approx} &\sigma\left(
        \begin{pmatrix}
            \left(w_E(z_1) + \Phi_1w_E(z_1)\right)^\top \\
            \left(w_E(z_2) + \Phi_1w_E(z_1)\right)^\top\\
            \vdots \\
            \left(w_E(z_{T-1}) + \Phi_1w_E(z_{T-2})\right)^\top \\
            \left(w_E(z_T) + \Phi_1w_E(z_{T-1})\right)^\top 
        \end{pmatrix} \right. \\
    &\qquad \left. \times \left(\tau_2\sum_{k\in Q} (\Phi_1 w_E(k))w_E(k)^\top\right) I x_T^{(1)}\right)\\
    = &\sigma\left(\tau_2
        \begin{pmatrix}
            \left(w_E(z_1) + \Phi_1w_E(z_1)\right)^\top \\
            \left(w_E(z_2) + \Phi_1w_E(z_1)\right)^\top\\
            \vdots \\
            \left(w_E(a) + \Phi_1w_E(q)\right)^\top \\
            \vdots \\ 
            \left(w_E(b) + \Phi_1w_E(q)\right)^\top \\
            \vdots \\
            \left(w_E(z_T) + \Phi_1w_E(z_{T-1})\right)^\top 
        \end{pmatrix}
    \Phi_1 w_E(q)\right) \\
    = &\sigma\left(\sum_{i \in I_q \cup I_1}\tau_2\mathbf{e}_{i}\right),
\end{align*}
where $I_q = \{t \mid x_t^{(1)} = w_E(v') + \Phi_1 w_E(q), \exists v' \in \mathcal{V}\}$ is the index set of $x_t^{(1)}$ containing $\Phi_1 w_E(q)$, and $I_1 = \{1\}$ if $z_1 = q$ and $I_1 = \emptyset$ otherwise. Also, $\mathbf{e}_i$ denote the unit vector where only the $i$-th component is 1, and all other components are 0. Note that $|I_q \cup I_1| = \left(\sum_{v = 1}^{V} f(v)\right) + \mathbf{1}\{z_1 = q\}$.

Going back to the output of the whole attention block including $W_O^2$ and $W_V^2$, we have from Eq.~\ref{eq:2nd-attention} that 
\begin{align*}
    x_T^{(2)} &\underset{\tau_1}{\approx} \sum_{t = 1}^{T} W_O^2W_V^2 \sigma\big(\sum_{i \in I_q \cup I_1}\tau_2\mathbf{e}_{i}\big)_t x_t^{(1)} + x_T^{(1)} \\
    &\underset{\substack{\tau_1 \\ \tau_2}}{\approx} \frac{1}{|I_q \cup I_1|}\sum_{i \in  I_q \cup I_1} W_O^2W_V^2 x_i^{(1)} + x_T^{(1)}\\
    &= \frac{1}{|I_q \cup I_1|}\sum_{v=1}^{V} \tau_3 w_U(v)(W_V^2 w_E(v))^\top \\
    &\quad \times \left\{f(v) \left(W_V^2(w_E(v) + \Phi_1w_E(q))\right)\right\}\\
    & \quad + \mathbf{1}\{z_1 = q\}\left(W_V^2 (w_E(q) + \Phi_1 w_E(q))\right) \\
    & \quad + x_T^{(1)} \\
    &= \sum_{v = 1}^{V}\frac{f(v) + \mathbf{1}\{v = q\}\mathbf{1}\{z_1 = q\}}{\left(\sum_{v' = 1}^{V} f(v')\right) + \mathbf{1}\{z_1 = q\}}  \\
    &\quad \times \tau_3 w_U(v) + w_E(q) + \Phi_1 w_E(z_{T - 1}) \\
    &:= \sum_{v = 1}^{V} g(v)\cdot w_U(v) \\
    &\quad + w_E(q) + \Phi_1 w_E(z_{T - 1}).
\end{align*}
Lastly, through the feed-forward network $W_F$, we obtain the outcome. 
\begin{align*}
    &x_T \\
    &= W_Fx_T^{(2)} + x_T^{(2)} \\
    &\underset{\substack{\tau_1 \\ \tau_2}}{\approx} \left(\sum_{v=1}^{V} \sum_{u \in U_v}  \log{\pi_b(u\mid v)} w_U(u) w_E(v)^\top\right) \\
    &\quad \times \left(w_E(q) + \Phi_1w_E(z_{T-1}) + \sum_{v = 1}^{V}g(v)w_U(v) \right) \\
    &\quad + x_T^{(2)} \\
    &= \sum_{u \in U_q}\log{\pi_b(u\mid q)}w_U(u) \\
    &\quad + \left(\sum_{v = 1}^{V}g(v) w_U(v)\right) + w_E(q) + \Phi_1 w_E(z_{T-1}) \\
    &= \sum_{v = 1}^{V} \left(\mathbf{1}\{v \in U_q\}\log{\pi_b(v\mid q)} \right.\\
    &\qquad \left. + \tau_3 \cdot \frac{f(v) + \mathbf{1}\{v = q\}\mathbf{1}\{z_1 = q\}}{\left(\sum_{v' = 1}^{V} f(v')\right) + \mathbf{1}\{z_1 = q\}}\right)\cdot w_U(v) \\
    &\quad + w_E(q) + \Phi_1 w_E(z_{T-1}).
\end{align*}
The proof concludes by examining the coefficients of $w_U(v)$.
\end{proof}

\subsection{Induction head without positional encoding}
In this section, we provide the construction of a three-layer transformer with no positional encoding that achieves the induction head mechanism.

We first introduce the following theorem\citep{kazemnejad2024impact} stating that transformer can implement APE using one transformer block.

\begin{thm}[Theorem 1, \citep{kazemnejad2024impact}]
\label{thm:ape_construction}
Let $z_{1:T+1}$ be an input sequence of length $T+1$, where $z_1 = \langle bos\rangle$. Then, there exist a transformer block consisting of an attention block and feed-froward network that recovers absolute positions $\lbrack 1,2,\dots T+1\rbrack$ and writes in the hidden state $H^{(1)}$.
\end{thm}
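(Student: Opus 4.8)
The plan is to give an explicit construction rather than a pure existence argument, following the standard "a NoPE transformer can simulate absolute positions" idea. Since the only distinguished token in the input is $z_1 = \langle bos\rangle$, the whole construction hinges on measuring, at each position $t$, how diluted the $\langle bos\rangle$ signal has become after averaging over the length-$t$ causal prefix: that dilution factor is $1/t$, which encodes $t$.

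First I would make the attention head uniform over the causal window: choose $W_Q$ (or $W_K$) so that every query--key inner product is a constant, which forces the softmax output at position $t$ to be the uniform vector $(1/t,\dots,1/t)$ on positions $1,\dots,t$. Then I would pick $W_V$ so that $W_V w_E(\langle bos\rangle)$ has unit component along some reserved direction $u$, while $W_V w_E(z)$ has (near-)zero component along $u$ for every non-$\langle bos\rangle$ token $z$; this is possible because the embeddings are near-orthogonal (Assumption~\ref{assump:orthogonal}), so $u := W_V w_E(\langle bos\rangle)$ is essentially orthogonal to the images of all other embeddings. With this choice, the attention output at position $t$, read along $u$, equals exactly $1/t$ up to the $O(1/\sqrt{d})$ near-orthogonality error, because $z_1 = \langle bos\rangle$ contributes the single nonzero term in a sum of $t$ terms each weighted $1/t$.

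Second I would use the feed-forward network to invert the map $t \mapsto 1/t$ on the finite set $\{1,\dots,T+1\}$. The values $1, 1/2, \dots, 1/(T+1)$ are distinct with minimum separation of order $1/T^2$, so a two-layer ReLU network -- exactly the form of the FFN in Eq.~\ref{eq:2nd-ffn} -- can implement any prescribed function on this finite grid: build $T+1$ narrow "bump" gadgets, each assembled from a constant number of ReLUs, that fire on one value $1/t$ and output the desired target (the scalar $t$, a one-hot code of $t$, or any chosen positional vector $p_t$). Routing this through the residual connection deposits the recovered position into the hidden state $H^{(1)}$, which is what the statement requires.

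The main obstacle is controlling the near-orthogonality noise rather than any of the three construction steps: the FFN must resolve $T+1$ inputs whose spacing is only of order $1/T^2$, so the $O(1/\sqrt{d})$ perturbations coming from using random rather than exactly orthogonal embeddings must be pushed well below that scale, which forces $d$ to be polynomially large in $T$. Alternatively -- and this is the cleaner route in the idealized model used throughout the appendix -- one simply reserves a genuine orthogonal "$\langle bos\rangle$-flag" coordinate that is $1$ on $z_1$ and $0$ elsewhere, in which case the readout is exact and all that remains is the routine bump-function bookkeeping for the ReLU interpolation.
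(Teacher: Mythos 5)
Your construction is essentially the same as the one in \citet{kazemnejad2024impact} that the paper summarizes: reserve an exact $\langle bos\rangle$-flag coordinate (the paper's extra embedding dimensions), use uniform causal attention so that the flag is diluted to $1/t$ at position $t$, and let the ReLU feed-forward network invert $1/t\mapsto t$ on the finite grid before writing the result into $H^{(1)}$ via the residual stream. Your closing remark correctly identifies that the exact-flag route (rather than relying on near-orthogonality) is the one actually taken, so the proposal is sound.
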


In the proof, they prepare extra $3$ dimensions for word embeddings to effectively reconstruct the positional information:
\begin{align*}
    W_E =
\begin{pmatrix}
1 & 1 & 1 & \cdots & 1 \\
1 & 0 & 0 & \cdots & 0 \\
0 & 0 & 0 & \cdots & 0 \\
a^e_{1,1} & a^e_{1,2} & a^e_{1,3} & \cdots & a^e_{1,V} \\
\vdots & \vdots & \vdots & \ddots & \vdots \\
a^e_{d,1} & a^e_{d,2} & a^e_{d,3} & \cdots & a^e_{d,V}
\end{pmatrix},
\end{align*}
where $\langle bos\rangle$ is represented by the first column without loss of generality.

Theorem~\ref{thm:ape_construction} states that the hidden state $H^{(1)}$ has the following form after the appropriate transformer block:
\begin{align*}
    H^{(1)} =
\begin{pmatrix}
1 & 1 & 1 & \cdots & 1 \\
1 & 0 & 0 & \cdots & 0 \\
1 & 2 & 3 & \cdots & T+1 \\
a_{1,1} & a_{1,2} & a_{1,3} & \cdots & a_{1,T+1} \\
\vdots & \vdots & \vdots & \ddots & \vdots \\
a_{d,1} & a_{d,2} & a_{d,3} & \cdots & a_{d,T+1}
\end{pmatrix},
\end{align*}

Now we show that a three-layer transformer with no positional encoding can implement induction head mechanism.
\begin{prop}[Restated]
    There exists construction of a three-layer transformer without positional encoding that achieves induction head mechanism.
\end{prop}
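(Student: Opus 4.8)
The plan is to stack the positional-encoding-recovery block of Theorem~\ref{thm:ape_construction} on top of the two-layer APE induction-head construction of Lemma~\ref{lem:associative_memory_construction}. Concretely, I would designate the first of the three transformer layers to implement the construction of \citet{kazemnejad2024impact}, reserving a few extra coordinates in the embedding space so that after this layer the hidden state contains, in a dedicated subspace, the absolute index $t$ of each token (re-encoded as a near-orthogonal vector $p_t$), while the original token-embedding coordinates are carried through unchanged by the residual connection. The remaining two layers are then exactly the two-layer transformer of Lemma~\ref{lem:associative_memory_construction}: set $W_Q^1 = W_Q^2 = I$, $W_K^1 = \sum_{t\ge 2} p_t p_{t-1}^\top$ on the positional subspace to realize the previous-token head, $W_K^2 = \sum_{k\in Q} w_E(k)(W_O^1 W_V^1 w_E(k))^\top$, and $W_O^2 = \sum_{v} w_U(v)(W_V^2 w_E(v))^\top$.

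First I would make the embedding layout precise: the residual stream is split into (i) the bookkeeping coordinates used by Theorem~\ref{thm:ape_construction} to build the position counter, (ii) a block carrying $w_E(z_t)$, and (iii) a block carrying the reconstructed positional vector $p_t$, with the maps chosen so that $\{w_E(v)\}$ and $\{p_t\}$ are near-orthogonal in the sense of Assumption~\ref{assump:orthogonal}, which is precisely what Lemma~\ref{lem:associative_memory_construction} requires. Second, I would verify that after layer one the hidden state feeding layer two is $x_t^{(1)} = w_E(z_t) + p_t$ up to the inert bookkeeping coordinates, i.e., exactly the APE input assumed by \citet{bietti2024birth}; here I lean on Theorem~\ref{thm:ape_construction} for the counter and on a feed-forward map to copy the integer counter into the designated $p_t$ coordinates. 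Third, I would invoke Lemma~\ref{lem:associative_memory_construction} on layers two and three: the previous-token head copies $\Phi_1 w_E(z_{t-1})$ into position $t$, the second attention matches $\Phi_1 w_E(z_T)$ against the stored keys to attend to the token following the earlier occurrence of $z_T$, and $W_O^2$ maps the resulting value to the corresponding unembedding direction, completing the repeated pattern.

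The main obstacle I anticipate is the interface between the two constructions: Theorem~\ref{thm:ape_construction} is stated for a sequence beginning with $\langle bos\rangle$ and produces positions in its own coordinate convention, whereas Lemma~\ref{lem:associative_memory_construction} wants the specific near-orthogonal vectors $p_t$. I would handle this by inserting a feed-forward re-encoding step (folded into layer one's MLP, or using part of layer two if necessary) that sends the scalar counter $t$ to a fixed pseudo-random unit vector $p_t$, and I would check both that this map leaves the token-embedding subspace untouched and that the resulting $\{p_t\}$ are near-orthogonal with high probability. A secondary check is that the extra bookkeeping coordinates do not leak into the attention logits of layers two and three; this is guaranteed by zeroing the corresponding rows and columns of $W_Q, W_K, W_V$, so it is routine but must be stated explicitly.
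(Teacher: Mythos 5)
Your overall strategy is sound and the first step coincides with the paper's: both proofs spend the first layer on the position-recovery construction of Theorem~\ref{thm:ape_construction}. Where you diverge is in how the recovered position is used to build the previous-token head. You propose re-encoding the scalar counter $t$ as a near-orthogonal vector $p_t$ via a feed-forward lookup and then running the unmodified APE construction of Lemma~\ref{lem:associative_memory_construction} with $W_K^1=\sum_t p_t p_{t-1}^\top$ on layers two and three. The paper instead never converts the counter into orthogonal vectors: it keeps the raw integer in its dedicated coordinate, sets the layer-two query to read the constant-$1$ coordinate and the layer-two key to read $C\cdot i$, so the attention logit at position $i$ is simply $C\cdot i$; under strict causal attention the softmax then concentrates on the largest admissible index $i=t-1$ for large $C$, yielding the previous-token head with no associative memory over positions at all. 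The third layer is then the standard induction-head second layer, as in your plan. The paper's route buys two things: arbitrarily sharp concentration on the previous token (by taking $C$ large) and, more importantly, it sidesteps the interface problem you correctly flag but somewhat underestimate --- in a standard block the FFN follows the attention, so a $t\mapsto p_t$ lookup placed in block one's MLP must be composed with the position-recovery map inside that same MLP (a lookup on the values $1/t$, requiring increasingly sharp bump functions as $t$ grows), and it cannot be deferred to block two's MLP because block two's attention already needs $p_t$. Your approach is salvageable with that care (the lookup is over a finite set for bounded $T$), and it buys a cleaner modular reduction to Lemma~\ref{lem:associative_memory_construction}, but the paper's scalar-comparison trick is the more economical construction.
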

\begin{proof}
    The first layer of the transformer block is given by the construction in Theorem~\ref{thm:ape_construction}, which gives us the hidden state:
    \begin{align*}
        H^{(1)} =
        \begin{pmatrix}
            1 & 1 & 1 & \cdots & 1 \\
            1 & 0 & 0 & \cdots & 0 \\
            1 & 2 & 3 & \cdots & T+1 \\
            a_{1,1} & a_{1,2} & a_{1,3} & \cdots & a_{1,T+1} \\
            \vdots & \vdots & \vdots & \ddots & \vdots \\
            a_{d,1} & a_{d,2} & a_{d,3} & \cdots & a_{d,T+1}
        \end{pmatrix},
    \end{align*}
    where each entry denoted by $a_{*,*}$ is a Gaussian random value to guarantee near-orthogonality.

    Then, for the second transformer block, we use the following matrix weights.
    \begin{align*}
    W_Q^2 &=
        \begin{pmatrix}
            1 & 0 & 0 & \cdots & 0 \\
            0 & 0 & 0 & \cdots & 0 \\
            \vdots & \vdots & \vdots & \ddots & \vdots \\
            0 & 0 & 0 & \cdots & 0 
        \end{pmatrix}, \\
    W_K^2 &=
        \begin{pmatrix}
            0 & 0 & C & \cdots & 0 \\
            0 & 0 & 0 & \cdots & 0 \\
            \vdots & \vdots & \vdots & \ddots & \vdots \\
            0 & 0 & 0 & \cdots & 0 
        \end{pmatrix}, \\
    W_V^2 &=
        \begin{pmatrix}
            0 & 0 & 0 & 0 &\cdots & 0 \\
            0 & 0 & 0 & 0 &\cdots & 0 \\
            0 & 0 & 0 & 0 &\cdots & 0 \\
            0 & 0 & 0 & w^v_{1,1}&\cdots & w^v_{1,d} \\
            \vdots & \vdots & \vdots & \vdots & \ddots & \vdots \\
            0 & 0 & 0 & w^v_{d,1} & \cdots & w^v_{d,d} 
        \end{pmatrix}, \\
    W_O^2 &=
        \begin{pmatrix}
            0 & 0 & 0 & 0 &\cdots & 0 \\
            0 & 0 & 0 & 0 &\cdots & 0 \\
            0 & 0 & 0 & 0 &\cdots & 0 \\
            0 & 0 & 0 & w^o_{1,1}&\cdots & w^o_{1,d} \\
            \vdots & \vdots & \vdots & \vdots & \ddots & \vdots \\
            0 & 0 & 0 & w^o_{d,1} & \cdots & w^o_{d,d} 
        \end{pmatrix},
    \end{align*}
    where $C \in \mathbb{R}_+$ is a constant value, and $w^*_{*,*}$ denotes the Gaussian entry to guarantee the near-orthogonality.
    With these matrix, the query and key vectors are represented by
    \begin{align*}
        q_t &= (1, 0, \dots, 0)^\top, \\
        k_i &= (C\cdot i, 0, \dots 0)^\top.
    \end{align*}
    Thus, the block outputs 
    \begin{align}
        &\sum_{i=1}^{t - 1} \sigma(\langle k_i, q_t\rangle) W_O^2W_V^2 h^{(1)}_i \\
        &\approx W_O^2W_V^2h_{t-1}^{(1)}, \label{eq:previous_token_head_nope}
    \end{align}
    where the approximation holds for sufficiently large $C$ and $h_{t-1}^{(1)}$ is the $t-1$-th column of $H^{(1)}$. Here, we assume that the attention block uses strict causal attention, which attends only to positions in $\{1,2,\dots,t-1\}$ for the token at position $t$.

    Eq.~\ref{eq:previous_token_head_nope} implies that the attention block attends only to the previous token, thus forming a previous token head. With the residual connection, the hidden state $H^{(2)}$ is represented as:
    \begin{align*}
        &H^{(2)} \\
        &= \begin{pmatrix}
            1 & 1 & 1 & \cdots & 1 \\
            1 & 0 & 0 & \cdots & 0 \\
            1 & 2 & 3 & \cdots & T+1 \\
            a_{1,1} & a_{1,2} & a_{1,3} & \cdots & a_{1,T+1} \\
            \vdots & \vdots & \vdots & \ddots & \vdots \\
            a_{d,1} & a_{d,2} & a_{d,3} & \cdots & a_{d,T+1}
        \end{pmatrix} \\  
        &\quad + \begin{pmatrix}
            0 & 0 & 0 & \cdots & 0 \\
            0 & 0 & 0 & \cdots & 0 \\
            0 & 0 & 0 & \cdots & 0 \\
            \mathbf{0} & \Phi_2 \mathbf{a}_{1} & \Phi_2 \mathbf{a}_{2} & \cdots& \Phi_2 \mathbf{a}_{T}
        \end{pmatrix},
    \end{align*}
    where $\mathbf{a}_i = (a_{1,i},\dots,a_{d,i})^\top$ and $\Phi_2$ is the bottom right sub-matrix from $(W_O^2W_V^2)_{4:d+3, 4:d+3}$.

    Finally, in the third transformer block, we set the following attention matrix weights:
    \begin{align*}
    W_Q^3 &=
        \begin{pmatrix}
    0 & 0 & 0 & 0 & 0 & \cdots & 0 \\
    0 & 0 & 0 & 0 & 0 &\cdots & 0 \\
    0 & 0 & 0 & 0 & 0 &\cdots & 0 \\
    0 & 0 & 0 & 1 & 0 & \cdots & 0 \\
    0 & 0 & 0 & 0 & 1 & \cdots & 0 \\
    \vdots & \vdots & \vdots & \vdots & \vdots & \ddots & \vdots \\
    0 & 0 & 0 & 0 & 0 & \cdots & 1
\end{pmatrix}, \\
    W_K^3 &=
        \begin{pmatrix}
            0 & 0 & 0 & 0 &\cdots & 0 \\
            0 & 0 & 0 & 0 &\cdots & 0 \\
            0 & 0 & 0 & 0 &\cdots & 0 \\
            0 & 0 & 0 & w^{k'}_{1,1}&\cdots & w^{k'}_{1,d} \\
            \vdots & \vdots & \vdots & \vdots & \ddots & \vdots \\
            0 & 0 & 0 & w^{k'}_{d,1} & \cdots & w^{k'}_{d,d} 
        \end{pmatrix}, \\
    W_V^3 &=
        \begin{pmatrix}
            0 & 0 & 0 & 0 &\cdots & 0 \\
            0 & 0 & 0 & 0 &\cdots & 0 \\
            0 & 0 & 0 & 0 &\cdots & 0 \\
            0 & 0 & 0 & w^{v'}_{1,1}&\cdots & w^{v'}_{1,d} \\
            \vdots & \vdots & \vdots & \vdots & \ddots & \vdots \\
            0 & 0 & 0 & w^{v'}_{d,1} & \cdots & w^{v'}_{d,d} 
        \end{pmatrix}, \\
    W_O^3 &=
        \begin{pmatrix}
            0 & 0 & 0 & 0 &\cdots & 0 \\
            0 & 0 & 0 & 0 &\cdots & 0 \\
            0 & 0 & 0 & 0 &\cdots & 0 \\
            0 & 0 & 0 & w^{o'}_{1,1}&\cdots & w^{o'}_{1,d} \\
            \vdots & \vdots & \vdots & \vdots & \ddots & \vdots \\
            0 & 0 & 0 & w^{o'}_{d,1} & \cdots & w^{o'}_{d,d} 
        \end{pmatrix},
    \end{align*}
    where $w^{v'}_{*,*}$ denotes the Gaussian entry to guarantee the near-orthogonality, and the sub-matrices in the bottom right of $W_K^3$ and $W_O^3$ are constructed the same way as $W_K^2$ and $W_O^2$ in Lemma~\ref{lem:associative_memory_construction}.
    The result of this calculation is the same as in Lemma~\ref{lem:associative_memory_construction} except for the first three rows. Thus, the induction head mechanism is implemented in the three-layer transformer with no positional encoding. 
\end{proof}

\section{Experimental setup}
\label{app:experimental_setup}
\subsection{Oversight of in-context knowledge}
We conducted our experiments using the following setup, carefully designed to evaluate the performance of a transformer model with APE and RPE.

\paragraph{Sequence generation}
Following \citet{bietti2024birth}, we define $\pi_u$ and $\pi_b$ as unigram and bigram character-level distributions estimated from the Tiny Shakespeare dataset \citep{karpathy2015rnn}. 
We sample triggers from $\pi_q = \pi_u$, and corresponding outputs $o_k$ are also sampled uniformly for each sequence generation.
The vocabulary size $|\mathcal{V}|$ is $65$. 

\paragraph{Data Arguments}
The model was trained on sequences with length of \textbf{256 tokens} for the previous token head experiment and \textbf{128 tokens} for the generalization experiment. The number of trigger tokens was set to \textbf{5}, and we did not fix trigger tokens, i.e., all trigger tokens were sampled uniformly. 

\paragraph{Model Configuration}
The transformer model used for the experiments had the following configuration:
\begin{itemize}
    \item \textbf{Model dimension}: \textbf{256}
    \item \textbf{Vocabulary size}: \textbf{65}
    \item Input sequences were capped at a maximum length of \textbf{256 tokens}.
    \item The embedding layer $W_E$ and unembedding  layer $W_U$ were \textbf{frozen} during training.
    \item The model did not use $W_F$ in the second layer to focus on the role of induction head.
    \item The model has $0.5$M parameters.
\end{itemize}

\paragraph{Optimization Setup}
The optimization process was conducted using the following hyperparameters:
\begin{itemize}
    \item \textbf{Optimizer}: Stochastic Gradient Descent (SGD) with momentum
    \item \textbf{Batch size}: \textbf{512}
    \item \textbf{Learning rate}: \textbf{0.2}
    \item \textbf{Momentum}: \textbf{0.9}
    \item \textbf{Weight decay}: \textbf{0.0001}
\end{itemize}

\paragraph{Training Strategy}
\begin{itemize}
    \item The model was trained for a total of \textbf{1000 iterations}, which takes 0.5 hours of A100 GPU time.
    \item Loss computation was restricted to output tokens.
\end{itemize}

\subsection{Global vs in-context}
We conducted our experiments using the following setup for the evaluation of performance of a transformer model with RPE trained on The Google analogy dataset.

\paragraph{Data Arguments}
The model was trained on sequences with a maximum length of \textbf{256 tokens}.

Since The Google analogy dataset offers analogy pairs, such as (Tokyo, Japan), the bigram conditionals will mostly be in the form like $\pi_b(\text{Japan} \mid \text{Tokyo})=1$. We constructed new bigram conditionals by adding some randomenss. 

We first sample \textbf{10 fake target words}, such as USA, China for each source word. Then, the conditionals will be calculated as follows:
\begin{enumerate}
    \item count the number $c_A(B)$ of analogical pair $(A,B)$.
    \item sample $p_A \in \lbrack 0.01, 0.1\rbrack$ for each source word $A$.
    \item consider the analogical pair $(A, B_i)$ appear $p_rc_A(B)$ times, where $B_i$ is the fake target words for source word $A$.
    \item calculate the bigram conditionals based on the number of appearance of analogical pairs.
\end{enumerate}

Everytime we generate an input sequence, we sample \textbf{5} trigger tokens from the set of all source words, and uniformly selects the corresponding output tokens from all vocabulary.  
we start from a source word that is sampled uniformly from the set of source words. Then, we transition with either the bigram conditionals or to the output token if the current token is the trigger token. Then we place comma, and sample uniformly a new word from the set of source words for the next token, and repeat this process until the maximum length is achieved.

\paragraph{Model Configuration}
The transformer model used for the experiments had the following architecture and configuration:
\begin{itemize}
    \item \textbf{Model dimension}: \textbf{512}
    \item \textbf{Vocabulary size}: \textbf{233}
    \item Input sequences were capped at a maximum length of \textbf{256 tokens}.
    \item The embedding and output layers were \textbf{frozen} during training.
    \item The model has $2$M parameters.
\end{itemize}

\paragraph{Optimization Setup}
The optimization process was conducted using the following hyperparameters:
\begin{itemize}
    \item \textbf{Optimizer}: Stochastic Gradient Descent (SGD) with momentum
    \item \textbf{Batch size}: \textbf{512}
    \item \textbf{Learning rate}: \textbf{0.1}
    \item \textbf{Momentum}: \textbf{0.9}
    \item \textbf{Weight decay}: \textbf{0.0001}
\end{itemize}

\paragraph{Training Strategy}
\begin{itemize}
    \item The model was trained for a total of \textbf{100,000 iterations}, which required 9 hours of A100 GPU time.
    \item Loss computation was applied to all tokens except for commas.
\end{itemize}

\subsection{About datasets}
The Tiny Shakespeare dataset is licensed under the Apache License, Version 2.0, and we can freely use the content as long as we comply with the terms of the license. Similarly, the Google Analogy dataset is licensed under CC BY-NC 4.0, allowing use of the content for non-commercial purposes with appropriate attribution. Our use of these datasets is consistent with their intended purposes. Furthermore, there is no explicit measure that should be taken to check for personal information or offensive content.

\section{Other experimental results and analyses}
\label{app:supplementary_experiment}

\subsection{Generalization to longer sequence}
To evaluate the generalization ability of $\operatorname{TF}_\text{ape}$ and $\operatorname{TF}_\text{rpe}$, we trained these two types of transformers with sequences of length $128$ generated from the bigram model. 
Details regarding the training procedure are provided in the Appendix~\ref{app:experimental_setup}.
We measured the accuracy of $\operatorname{TF}_\text{APE}$ and $\operatorname{TF}_\text{RPE}$ with respect to the second and subsequent occurrences of the output tokens.
In addition, We present the memory recall to examine the generalization.

The results are shown in Tab.~\ref{tab:length_generalize_result}. 
In terms of accuracy, transformer with RPE raises its accuracy even when the input sequence length increased to $256$, while transformer with APE dropped its accuracy to $79.25 \%$. 
Although the difference may seem trivial, it is worth noting that the accuracy for $t < 256$ is computed through the whole sequence including $t < 128$. 
This means that transformer with APE is more likely to fail to predict output tokens at positions $128 < t < 256$.  
Looking at the attention scores, we can confirm that $\operatorname{TF}_\text{APE}$ is incapable of attending to previous tokens, even for positions $t < 128$. 
In contrast, $\operatorname{TF}_\text{RPE}$ is trained to have a previous attention head that keeps its performance no matter how long the sequence is.

In fact, the attention pattern heatmaps in Fig.~\ref{fig:length_generalize_ape_rpe} in Appendix~\ref{app:supplementary_experiment} tell us that the attention is no longer directed to the previous token after $t > 100$ for $\operatorname{TF}_\text{APE}$, while $\operatorname{TF}_\text{RPE}$ continues to attend to the previous token. These experimental results match our theoretical result, where RPE is better suited for length generalization. It can be seen from Fig.~\ref{fig:length_generalize_rpe} that the first-layer attention head of $\operatorname{TF}_\text{RPE}$ sometimes attends to the current token. This phenomenon is also explainable from the proof of Theorem \ref{thm:rpe-associative-memory-formal}, which shows that the attention matrix $W_K^1$ not only learns the associations between $w_E(v)$ and $r_{-1}$, but also the pair of $w_E(v)$ and $r_0$.

\subsection{Discussion of collision of context information}
\label{app:global-vs-in-context}
We discuss the result shown in Fig.~\ref{fig:capital-world}. The figure illustrates that it is more likely to output global knowledge when the number of $B_1$ and $B_2$ are the same, or when the number of $B_1$ or $B_2$ is almost maximum.
Using Proposition~\ref{prop:two-token-collision}, we can explain these phenomena by considering inequalities using the global knowledge $\log{\pi_b(\cdot\mid A)}$ and the strength of in-context knowledge $\tau_3$. 
Using Proposition~\ref{prop:two-token-collision}, we can explain these phenomena as follows. If we have the relationships $\log{\pi_b(B_* \mid A)} + \tau_3 > \log{\pi_b(B \mid A)} > \log{\pi_b(B_* \mid A)} + \tau_3/2$, where $B_*$ represents $B_1$ and $B_2$, the model outputs global knowledge $B$ when $AB_1$ and $AB_2$ occur equally frequently in the prompt. On the other hand, if we have $\log{\pi_b(B_2 \mid A)} + \tau_3 > \log{\pi_b(B \mid A)} > \log{\pi_b(B_1 \mid A)} + \tau_3$ and $\log{\pi_b(B \mid A)} > \log{\pi_b(B_2 \mid A)}$, the prediction of the model changes to $B_2$ to $B$ as the number of $B_1$ increases.

\end{document}